\newcommand{\argmin}{\mathop{\rm arg~min}\limits}
\newtheorem*{definition*}{Definition}
\newtheorem*{proposition*}{Proposition}
\theoremstyle{plain}
\newtheorem{theorem}{Theorem}[section]
\newtheorem{proposition}[theorem]{Proposition}
\newtheorem{nono-prop}{Proposition}[]
\newtheorem{lemma}[theorem]{Lemma}
\theoremstyle{definition}
\theoremstyle{remark}
\definecolor{medGray}{RGB}{230,230,230}
\newcommand{\figcaption}[1]{\def\@captype{figure}\caption{#1}}
\newcommand{\tblcaption}[1]{\def\@captype{table}\caption{#1}}
\newcommand{\bfH}{{H}}
\newcommand{\bfM}{{M}}
\newcommand{\irrepM}{{V}}
\newcommand{\bfU}{{U}}
\newcommand{\Tc}{T_{\rm c}}
\newcommand{\Tp}{T_{\rm p}}
\newcommand{\tp}{t_{\rm p}}
\newcommand{\bs}{\mathbf{s}}
\newcommand{\bsc}{\mathbf{s}_{\rm c}}
\newcommand{\bsp}{\mathbf{s}_{\rm p}}
\newcommand{\bPsi}{{\Psi}}
\newcommand{\bPhi}{{\Phi}}
\newcommand{\overbar}[1]{\mkern 1.0mu\overline{\mkern-1.0mu#1\mkern-1.0mu}\mkern 1.0mu}
\setlist{leftmargin=5.5mm}
\newenvironment{tight_enumerate}{
\begin{enumerate}
  \setlength{\itemsep}{0pt}
  \setlength{\parskip}{0pt}
  \setlength{\topsep}{0pt}
  \setlength{\partopsep}{0pt}
}{\end{enumerate}}
\title{Unsupervised Learning of Equivariant Structure from Sequences}
\author{%
Takeru Miyato$^{*1,2}$ Masanori Koyama$^{*1}$ Kenji Fukumizu$^{3,1}$ \hspace{0.3mm} $^*${\rm equal contribution}\\ 
$^{1}$Preferred Networks, Inc. $^{2}$University of T\"ubingen $^{3}$The Institute of Statistical Mathematics
}
\begin{document}

\doparttoc
\faketableofcontents 

\maketitle

\begin{abstract}

In this study, we present \textit{meta-sequential prediction} (MSP), an unsupervised framework to learn the symmetry from the time sequence of length at least three. 
Our method leverages the stationary property~(e.g. constant velocity, constant acceleration) of the time sequence to learn the underlying equivariant structure of the dataset by simply training the encoder-decoder model to be able to predict the future observations. 
We will demonstrate that, with our framework, the hidden disentangled structure of the dataset naturally emerges as a by-product by applying \textit{simultaneous block-diagonalization} to the transition operators in the latent space, the procedure which is commonly used in representation theory to decompose the feature-space based on the type of response to group actions.
We will showcase our method from both empirical and theoretical perspectives.
Our result suggests that finding a simple structured relation and learning a model with extrapolation capability are two sides of the same coin. The code is available at {\color{orange}\url{https://github.com/takerum/meta_sequential_prediction}.

}.



\end{abstract}

\section{Introduction}


The recent evolution and successes of neural networks in machine learning fields have shown the importance of symmetry-aware neural network models~\cite{fukushima1982neocognitron, krizhevsky2012imagenet, kipf2016semi, vaswani2017attention}. 
In particular, symmetries in the form of geometric/algebraic constraints have been proven useful in various applications involving high-dimensional, highly-structured observations. 
For example, recent literature of robotics and reinforcement learning has succeeded in exploiting the knowledge of geometrical symmetries to improve the sample efficiency \cite{van2020mdp, wang2022mathrm} or to train a model that generalizes to unseen observations~\cite{simeonov2021neural}. 


However, building an inductive bias that matches the given dataset of interest is challenging, and recent studies have been exploring the ways to learn symmetries itself from observational sequences. Many of these approaches consider settings with relatively restrictive assumptions or weak supervision.
For example, ~\cite{son_action_known} allows the trainer to use the knowledge of the identities of the actions used in making the transition. 
Meanwhile, \cite{bouchacourt2020addressing, keller2021topographic, keller2021predictive} essentially assume that the transition velocity of all sequences in the datasets are the same.

These studies indicate that there is still much room left for the question of ``what is required for dataset/model to enable the unsupervised learning of the equivariance relation corresponding to the underlying symmetry''.
This paper advances this investigation by showing that if the sequential dataset consists of time series with a certain stationary property (constant velocity, constant acceleration), we can learn the underlying symmetries by simply training the model to be able to predict the future with linear transition in the latent space. 
Our theory in Section \ref{sec:theory_top} shows that this strategy can learn a model that is almost equivariant.
Moreover, we will experimentally show that, by training an encoder-decoder model in a framework of meta-learning which we call \textit{\bf meta-sequential prediction} (MSP), we can actually learn an equivariant model. 
In particular, we show that we can learn a hidden equivariance structure in the dataset by splitting  (1) the internal training step to compute the prediction loss of linear transition in the latent space from (2) the external training step to update the encoder and decoder. 
We will also empirically show that, in alignment with group representation theory~\cite{kondor2008group}, the learned linear latent transitions in our framework can be simultaneously block-diagonalized, and that each block corresponds to a disentangled factor of variation.

\section{Related works}

Recently, numerous studies have explored the ways to learn symmetry in a data-driven manner.
There is rich literature in unsupervised/weakly supervised approaches that use sequential datasets to exploit the structure that is shared across time, and they all differ by the types of inductive bias. 
For example, the object-centric approach introduces inductive bias in the form of architecture, and equips the model with pre-defined slots to be allocated to objects \cite{kipf2019contrastive, kabra2021simone,  kipf2021conditional}.
Meanwhile, \cite{greydanus2019hamiltonian, alet2021noether}
assumes that the symmetry to be found takes the form of the energy conservation law, and learns each variable in the law as a function of observations. 
While this approach assumes that some \textit{energy} is preserved in each observation, we assume that the transition parameters like velocity and acceleration are preserved within each observation.
Other more indirect forms of inductive bias include those relevant to distributional sparseness and symmetry defined through algebraic constraints. 
\cite{hyvarinen2016unsupervised} for instance assumed that every stationary component of a given time series is generated by a finite and independent latent time series. \cite{klindt2020towards} proposed to sparsely model the transition with a distribution of large kurtosis. 
Our work belongs to a family of unsupervised learning that seeks to find the underlying symmetry of the dataset based on an algebraic inductive bias that the transitions can be represented linearly in some latent space. 
In this sense, our inductive bias also has a connection to Koopman operator \cite{koopman1931hamiltonian, han2020deep, bevanda2021koopman}. 
We are different from these studies in that we are aiming to learn a common encoding function~(i.e. lifting function) under which the \textit{set} of sequences following different dynamics can be described linearly. 
Also \cite{zhang2021cross} applied Koopman operator on pedestrian walking sequences, and \cite{grosek2014dynamic} used Koopman operator to separate the foreground from the background. 
However, \cite{zhang2021cross} does not set out their model to solve the extrapolation, and neither \cite{zhang2021cross} nor \cite{grosek2014dynamic} discusses the natural algebraic decomposition of the latent space that results solely from the objective to predict the unseen future.

\paragraph{Unsupervised learning with algebraic/geometrical constraints} 
Many studies impose algebraic constraints that reflect some form of geometrical assumptions. 
\cite{falorsi2018explorations} uses a known coordinate map parametrization of a Lie group family to construct a posterior distribution on the manifold.  \cite{pfau2020disentangling} assumes that the observations are dense enough on the data-manifold to describe its tangent space, and 
exploits a property of random walks on the product manifold to decompose the data space.
In the analysis of sequential datasets, 
\cite{culpepper2009learning, sohl2010unsupervised, cohen2014learning, son_action_known, connor2020representing} make some Lie group type assumptions about the transition. \cite{son_action_known} also assumes that the identity of the actions in the sequences is known.
As for the approaches with less explicitly geometrical touch, \cite{keller2021topographic} uses capsule structure in their probabilistic framework to model a finitely cyclic structure while retaining the computability of posterior distribution. 
By design,  \cite{keller2021topographic} assumes that all sequences in the dataset transition with the same cyclic velocity.
\cite{yang2021towards} enforces the underlying transition action to be commutative.
Some of these studies learn the representation so that the linear transition in the latent space can be explicitly computed \cite{son_action_known, connor2020representing,bouchacourt2020addressing}. 
In particular, \cite{bouchacourt2020addressing} presents a theory that suggests that a representation without this feature would have topological defects, such as discontinuity.
Our approach shares a similar philosophy with these works except that, instead of imposing a strong assumption about the underlying symmetry, 
we only make a relatively weak stationarity assumption about the dataset;
although we assume each sequence to be transitioning with constant velocity/acceleration, we allow the velocity/acceleration to vary across different sequences.

\paragraph{Disentangled Representation Learning}
Disentangled structure \cite{higgins2018towards, higgins2022symmetry} is a form of symmetry that has also been actively studied.
It is known that, under the i.i.d assumption of examples, \textit{unsupervised} learning of disentanglement representation is not achievable without some inductive biases encoded in models and datasets~\cite{locatello2019challenging}. 
In response to this work, subsequent works have explored different frameworks such as weakly-/semi-supervised settings~\cite{locatello2019disentangling, locatello2020weakly} and learning on sequential examples~\cite{klindt2020towards} to learn disentangled representations.
For example, PhyDNet~\cite{guen2020disentangling} disentangles the known physical dynamics from the unknown factors by preparing an explicit module called PhyCell.  
ICA~\cite{hyvarinen2000independent} and recent works~\cite{zimmermann2021contrastive, von2021self} also discuss the identifiability property of learned representations. 
Classical methods like \cite{higgins2016beta, ccivae2017, kim2018disentangling} take an approach of incorporating the inductive bias in the form of a probabilistic model. 

We are different from many previous methods in that we do not equip our model with an explicit disentanglement framework. 
Our method achieves disentanglement as a by-product of training a model that can predict the future linearly in the latent space. 
The set of latent linear transformations estimated by our method for different time sequences can be simultaneously block-diagonalized, and the latent space of each block corresponds to a disentangled feature.
Our data assumption about constant velocity/acceleration might be similar in taste to the setting used by \cite{hyvarinen2000independent}, in which the observed time series can be split into the finite number of stationary components. 

\section{Learning of equivariant structure from stationary time sequences} \label{sec:theory_top}

Our goal is to learn the underlying symmetry structure of a dataset in an unsupervised way that helps us predict the future.
What do us humans 
require for the dataset when we are tasked to, for example, predict where a thrown ball would be 
in the next second? 
We hypothesize that we solve such a prediction task by analyzing a short, past time-frame with a certain stationary property (e.g., constant velocity/acceleration). 
Indeed, people with good dynamic visual acuity can chase a fast-moving object, because they can identify such a short stationary time-frame and use it to predict the near future \textit{linearly} in their latent space.
Based on this intuition, we propose to provide the trainer with a dataset consisting of constant velocity/acceleration sequences.
We formalize this idea below.
\paragraph{Dataset structure} 
Our dataset $\mathcal{S}$ consists of sequences in some ambient space $\mathcal{X}$, so that each member $\bs \in \mathcal{S}$ takes the form $\bs = [s_t \in  \mathcal{X};  t =1,...,T] \in \mathcal{S}$.
Because we want $\bs$ to be describing a sequence that transitions with constant \textrm{velocity}, we assume that all $s_t$ in a given instance of $\bs \in \mathcal{S}$ are related by a fixed transition  operation $g\in\mathcal{G}$ so that $s_{t+1} = g \circ s_t$ for all $t$, where $\mathcal{G}$ is the set of transition operators on $\mathcal{X}$ and each $g \in \mathcal{G}$ acts on $x \in \mathcal{X}$
by sending $x$ to $g \circ x$. 
We assume that $\mathcal{X}$ is closed under $\mathcal{G}$; that is, $g \circ x \in \mathcal{X}$ for all $x \in \mathcal{X}, g \in \mathcal{G}$. 
We allow $\mathcal{G}$ to be continuous as well, so that $g$ might not have a finite order (For instance, if $g$ is a rotation with speed $2\pi r$ with irrational $r$, any finite repetition of $g$ would not agree with identity mapping). 
This way, our setting is different from those used in \cite{keller2021topographic} that explores a cyclic structure using the capsules of same size. 
We emphasize that the transition action $g$ is generally assumed to differ across the different members of $\mathcal{S}$. 
For example, if $\mathcal{G}$ is a set of rotations and $\mathcal{X}$ a set of images, then the rotational speed, direction and the initial image may all be different for any two distinct sequences, $\bs$ and $\bs'$.
Because each instance of $\mathcal{S}$ is characterized by $s_1$ and $g$, 
we may write $\bs(s_1, g)$ to denote a sequence that begins with initial frame $s_1$ and transitions with $g$.
Summarizing, $\mathcal{S}$ is a subset of $\{[g^t \circ s_1 ;  t =0,...,T-1]  ;  s_1 \in \mathcal{X}, g\in \mathcal{G}\}$.
\paragraph{Prediction framework through equivariance} 
Our strategy is to exploit the stationary property of each $\bs \in \mathcal{S}$ to seek an invertible continuous function $\Phi: \mathcal{X} \to \mathbb{R}^{a \times m}$ such that there exists some $M :\mathcal{G} \to \mathbb{R}^{a \times a}$ satisfying 
\begin{align}
\begin{split}
 M_g \Phi(x)= \Phi(g \circ x)~\textrm{for all}~x \in \mathcal{X}~\textrm{and}~g\in \mathcal{G}.
 \end{split}
 \label{eq:equivariance}
\end{align}
This relation is known as equivariance~\cite{cohen2016group}, and this type of tensorial latent space has also been used in ~\cite{koyama2022} as well for the unsupervised learning of underlying structure of the dataset.
In other words, we seek a model in which $\mathcal{X}$ and $\mathbb{R}^{a \times m}$ are invertibly related by an equivariance relation with respect to $\mathcal{G}$, where $g \in \mathcal{G}$ acts on $\Phi(x) \in \mathbb{R}^{a \times m}$ via the map $\Phi(x) \mapsto M_g \Phi(x)$ with $M_g \in \mathbb{R}^{a \times a}$.  
We assume $m>1$ in our study\footnote{
We note that, when $m > 1$, the action of $M_g$ on $\mathbb{R}^{a \times m}$ can be realized by applying the same $M_g$ to $m$-copies of $\mathbb{R}^{a}$. In other words, our prediction framework assumes that there are $m$ number of subspaces that react to $g$ in the same way. 
This $m>1$ assumption is also considered in \cite{bouchacourt2020addressing}. 
The case in which there are no copies of subspaces that act in the same way is called multiplicity-free in the literature of representation theory~\cite{clausen, fulton_harris}, and is known to be a special case that happens only under a restrictive condition on the dimension of the observations space \cite{multiplicity_free}. 
A similar idea has been used in model architecture as well \cite{vector_neurons}.}.
In this framework, we predict the sequence $\bs(s_1, g)$ with the relation $\Phi^{-1} ( M_g  \Phi(s_{t-1})) = \tilde{s}_{t}$.
When $\mathcal{G}$ is a group, \eqref{eq:equivariance} would imply $M_{gh} \Phi(x) =\Phi(gh \circ x) = M_g \Phi(h \circ x) = M_g M_h \Phi(x)$ for all $x$, and the map 
$g \mapsto M_g$ is called a \textit{representation} of $\mathcal{G}$ ~\cite{ clausen, Weintraub, cohen2014learning}.

Because we are aiming to establish the framework in which the representation of each action $g$ can be explicitly computed, our philosophy has much in common with the proposition of \cite{bouchacourt2020addressing}. 
This approach is in contrast to \cite{keller2021topographic}, which encodes a predefined cyclic structure in the model. 

\begin{figure}[H]
    \centering
    \includegraphics[scale=0.14]{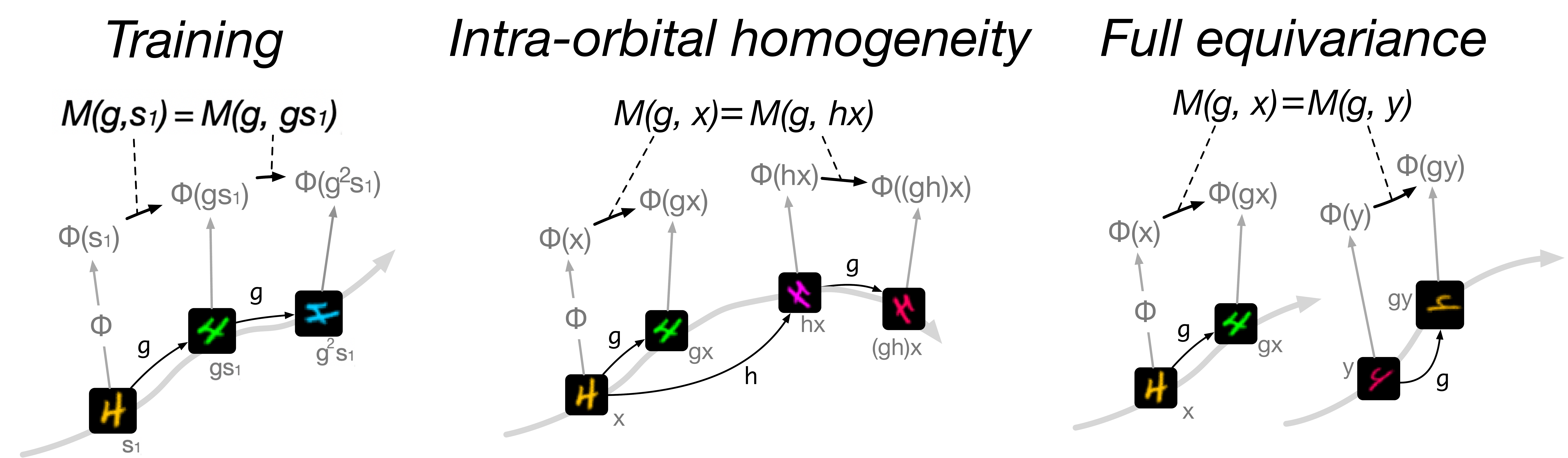}
    \caption{
    Visualization of intra-orbital homogeneity  vs full equivariance. 
    During the training, the model was trained to satisfy $M(g,s_1)=M(g, g\circ s_1)$ for all $g$ and $s_1$. 
    When intra-orbital homogeneity holds, $M(g, x) = M(g, h\circ x)$ for all $h,g \in G$ and $x$. 
    When the full equivariance holds, $M(g, x)$ is invariant across different orbits. 
    }
    \label{fig:train_to_equiv}
\end{figure}

\subsection{Learning equivariance relation from stationary sequential dataset}\label{sec:method}

However, training the model satisfying \eqref{eq:equivariance} with just the \textit{constant velocity assumption} is not a trivial task, because this model assumption only assures that, for each sequence $\bs(s_1, g)$, there is a sequence-specific operator $M(g, s_1)$ that is guaranteed only to be able to predict the sequence that transitions with $g$ and begins from $s_1$ in the way of $M(g, s_1)\Phi(s_t) = \Phi(s_{t+1}) = \Phi(g \circ s_t)$ (the left most panel in Figure~\ref{fig:train_to_equiv}).
In order to satisfy the \textit{full} equivariance (\eqref{eq:equivariance} or the right most panel Figure~\ref{fig:train_to_equiv}), $M(g, s_1)$ shall not depend on $s_1$ (i.e. homogeneous with respect to $s_1$). 
At  the same time, because the constant velocity assumption applies to each sequence over all time intervals, it at least assures that the latent transition $M$ is well defined within each sequence; that is, $M(g, x) = M(g, g \circ s_1) = M(g, g^2 \circ s_1) \cdots$ and so on.
It turns out that, with some regularity assumptions on the model and the choice of $\mathcal{G}$, we can extend this observation to say that  $M$ satisfies \textbf{intra-orbital homogeneity} (the middle panel in Figure~\ref{fig:train_to_equiv}) ; that is,  $M(g, x)$ is constant on the orbit $\mathcal{G} \circ x = \{ g \circ x ; g \in \mathcal{G}\}$ for each $x$. 
\begin{proposition}
Suppose that $\Phi(s_{t}) = M(g, s_1) \Phi(s_{t-1})$ for all $\bs$ and $t$.  If $m > a$
and if $\mathcal{G}$ is a compact commutative Lie group, then $M$ satisfies intra-orbital homogeneity.
\end{proposition}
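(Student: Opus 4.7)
The plan is to first establish homogeneity of $M(g,\cdot)$ along each cyclic orbit $\langle g\rangle\circ x$, and then use the fact that a compact connected commutative Lie group is monothetic, i.e.\ admits a topological generator, to propagate this to the full $\mathcal{G}$-orbit by a continuity argument.

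The first step is to exploit $m>a$ together with invertibility of $\Phi$ to conclude that $\Phi(x)\in\mathbb{R}^{a\times m}$ has full row rank at every $x$, so that $A\Phi(x)=B\Phi(x)$ with $A,B\in\mathbb{R}^{a\times a}$ forces $A=B$. Applying the hypothesis $\Phi(s_t)=M(g,s_1)\Phi(s_{t-1})$ simultaneously to $\bs(s_1,g)\in\mathcal{S}$ and to $\bs(g^k\circ s_1,g)\in\mathcal{S}$ and cancelling $\Phi$ yields $M(g,g^k\circ s_1)=M(g,s_1)$ for every $k$. Iterating the one-step relation also gives the exponent rule $M(g^k,x)=M(g,x)^k$ for $k\in\mathbb{Z}$, where invertibility of $\Phi(x)$ secures the extension to negative powers.

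The second step is to pick a topological generator $g_0\in\mathcal{G}$: for the torus $T^n$ any element with rationally independent angular coordinates works, so that $\overline{\langle g_0\rangle}=\mathcal{G}$. For an arbitrary $h\in\mathcal{G}$, choose integers $k_n$ with $g_0^{k_n}\to h$; continuity of the action gives $g_0^{k_n}\circ x\to h\circ x$, and continuity of $x\mapsto M(g_0,x)$, which follows from continuity of $\Phi$ and of its right pseudoinverse on the full-rank locus, lets us pass to the limit in $M(g_0,g_0^{k_n}\circ x)=M(g_0,x)$ to obtain $M(g_0,h\circ x)=M(g_0,x)$. Combined with the power rule, this gives $M(g_0^k,h\circ x)=M(g_0^k,x)$ for all $k$, so every $M(g_0^k,\cdot)$ is constant on the full $\mathcal{G}$-orbit of $x$. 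For arbitrary $g'\in\mathcal{G}$, picking $k_n$ with $g_0^{k_n}\to g'$ and using continuity of $g\mapsto M(g,y)$ on both sides of $M(g_0^{k_n},h\circ x)=M(g_0^{k_n},x)$ yields $M(g',h\circ x)=M(g',x)$, which is intra-orbital homogeneity.

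The main obstacle I expect lies in the first step: deducing pointwise full row rank of $\Phi(x)$ from $m>a$ and invertibility of $\Phi$ as a function is not automatic, since continuity alone permits rank drops on thin exceptional sets, and one typically needs some regularity assumption to rule them out; if it fails on a small set the identity has to be extended by continuity of $M$ at the end. A secondary concern is the non-connected case, where $\mathcal{G}$ may fail to be monothetic (e.g.\ $(\mathbb{Z}/2)^2$); there the single generator $g_0$ should be replaced by a finite topological generating set and the density argument iterated over generators, using the commutativity-induced cocycle identity $M(gh,x)=M(h,g\circ x)M(g,x)=M(g,h\circ x)M(h,x)$ to transfer homogeneity between generators.
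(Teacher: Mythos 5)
Your proof is correct, but it reaches the conclusion by a genuinely different route from the paper's. Both arguments share the same algebraic first step: full row rank of $\Phi(x)$ lets you cancel $\Phi$ and obtain the cyclic-orbit invariance $M(g, g^k\circ x)=M(g,x)$ and the power rule $M(g^k,x)=M(g,x)^k$ (the paper isolates these as a lemma, together with the cocycle identity $M(gh,x)=M(g,h\circ x)M(h,x)$, and in its formal appendix version it simply \emph{assumes} $\mathrm{rank}\,\Phi(x)=a$ rather than deriving it from $m>a$, so the rank issue you flag is real but is resolved by hypothesis there). Where you diverge is in passing from the cyclic orbit $\langle g\rangle\circ x$ to the full $\mathcal{G}$-orbit. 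The paper uses divisibility: surjectivity of $\exp$ on a connected commutative Lie group yields $n$-th roots $h^{1/n}$ of the target element, and the commutativity identity $(gh^{1/n})^n g^{-n}=h$ lets it rewrite $M(gh^{1/n},g^{-n}\circ x)$ as $M(gh^{1/n},h\circ x)$ and squeeze $\|M(g,h\circ x)-M(g,x)\|<2\epsilon$; because the base point $g^{-n}\circ x$ moves with $n$, this genuinely needs the continuity in $g$ to be uniform over $x$, which is why that hypothesis appears in the formal statement. You instead use that the torus is monothetic, take limits along $g_0^{k_n}\to h$ at the \emph{fixed} target point $h\circ x$, and then a second limit $g_0^{k_n}\to g'$ at a fixed second argument; this buys you a proof from separate (pointwise) continuity in each variable rather than uniform continuity, at the price of invoking the classification $\mathcal{G}\cong T^n$ to produce a topological generator. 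Your closing caveats are also apt: both proofs silently use connectedness (the paper's proof opens with ``connected commutative Lie group'' even though the statement says only ``compact commutative''), and your proposed fix via finitely many generators and the cocycle identity is the natural repair in the disconnected case.
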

Also, if $M$ satisfies  intra-orbital homogeneity, $M(g, x)$ and $M(g, x')$ for any pair $(x, x')$ in different orbits $\mathcal{G} \circ x \neq \mathcal{G} \circ x'$ can be shown to be at least similar.  
\begin{proposition}
Suppose that $M(g, x)$ satisfies intra-orbital homogeneity, and suppose that $\mathcal{G}$ is a compact connected group.
If $M(g,x)$ is continuous with respect $x$, then 
for all $(x, x')$, there exists some $P$ such that 
$P M(g, x)P^{-1} = M(g, x')$.
\end{proposition}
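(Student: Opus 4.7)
The plan is to recast the conclusion in the language of representation theory. For each $x \in \mathcal{X}$, I would first verify that $\rho_x: g \mapsto M(g, x)$ is a finite-dimensional continuous representation of $\mathcal{G}$; then, using compactness of $\mathcal{G}$ together with continuity of $M(g, x)$ in $x$, I would show that the isomorphism class of $\rho_x$ is locally constant in $x$; and finally I would produce the intertwining matrix $P$.

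To see that $\rho_x$ is a homomorphism, I would use the recursion $\Phi(s_t) = M(g, s_1)\Phi(s_{t-1})$ and expand $\Phi((gh) \circ s_1)$ in two ways: directly it equals $M(gh, s_1)\Phi(s_1)$, while grouping the action as $g \circ (h \circ s_1)$ gives $M(g, h \circ s_1)\,M(h, s_1)\,\Phi(s_1)$; intra-orbital homogeneity collapses $M(g, h \circ s_1)$ to $M(g, s_1)$, so $M(gh, s_1) = M(g, s_1)\,M(h, s_1)$ on the image of $\Phi$. Hence $\rho_{s_1}$ is a continuous representation of $\mathcal{G}$ on $\mathbb{R}^{a}$.

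Next I would invoke character theory. Since $\mathcal{G}$ is compact, Peter--Weyl gives $\rho_x \cong \bigoplus_i n_i(x)\, V_i$ with non-negative integer multiplicities
\[
n_i(x) \;=\; \int_{\mathcal{G}} \mathrm{tr}\bigl(M(g, x)\bigr)\, \overline{\chi_i(g)}\, dg,
\]
where $\chi_i$ is the character of the irreducible $V_i$. Joint continuity of $(g, x) \mapsto M(g, x)$ (continuous in $x$ by hypothesis, continuous in $g$ as a finite-dimensional continuous representation of a Lie group) together with compactness of $\mathcal{G}$ guarantees that $n_i(x)$ is continuous in $x$. Being $\mathbb{Z}_{\geq 0}$-valued and bounded by $a$, each $n_i$ is then locally constant; connectedness (of $\mathcal{G}$, so that each orbit is connected, and implicitly of $\mathcal{X}$ at the level of the data manifold) promotes this to global constancy in $x$. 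Two representations of a compact group with identical multiplicity data are isomorphic, so one obtains an invertible intertwiner $P$ satisfying $P M(g, x) P^{-1} = M(g, x')$ for all $g$.

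The main obstacle I anticipate is the subtle connectedness issue: the statement quantifies over arbitrary $(x, x')$, whereas the character argument directly yields isomorphism only within a connected component of $\mathcal{X}$. I would either flag connectedness of the data support as an implicit hypothesis or argue that the equivariance structure forces the relevant part of $\mathcal{X}$ to be path-connected (e.g., because the orbit union considered by the learner is swept out continuously by $\mathcal{G}$-translates of a connected generating set). A secondary technical point is justifying joint continuity of $M$ in $(g, x)$ carefully enough to move continuity under the Haar integral; this is routine once $M(g, \cdot)$ is continuous and $\mathcal{G}$ is compact, but deserves an explicit dominated-convergence or uniform-continuity step.
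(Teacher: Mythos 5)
Your proof is correct and follows essentially the same route as the paper's: derive the cocycle identity $M(gh,x)=M(g,h\circ x)M(h,x)$, use intra-orbital homogeneity to make $M_x$ a genuine representation, and then show the character-theoretic multiplicities are integer-valued, continuous in $x$, and hence constant. Your flagged concern about connectedness of $\mathcal{X}$ is well placed: the paper's formal version of this proposition (in the appendix) does add connectedness of $\mathcal{X}$ as an explicit hypothesis for exactly the reason you identify.
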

Thus, much of the equivariance property can be satisfied automatically by training the representation on the set of stationary sequences. 
Interestingly, as we experimentally demonstrate later, our training method in the next section successfully learns a fully equivariant $\bPhi$ without explicitly enforcing the change of basis $P$ to be $I$.


\subsection{Learning $\Phi$ via solving a \textit{meta-sequential prediction} task}\label{sec:enc_alg}
We propose a meta-learning way to learn a homeomorphic function $\bPhi:\mathcal{X} \to \mathbb{R}^{a \times m}$ with  equivariance property by seeking an injective ${\bPhi}$ such that $M(g, s_1)\bPhi(s_t) = \bPhi(s_{t+1})$ for all $g\in \mathcal{G}$, $s_1 \in \mathcal{X}$.
We learn such $\bPhi$ by casting this problem as a meta-learning problem in which $M(g, s_1)$ is to be internally estimated for each $\bs$.
In other words, we seek a pair of an encoder $\bPhi$ and a decoder $\bPsi$ such that $\mathcal{L} (\bPhi, \bPsi | \bs) =  \min_\bfM  \sum_{s_t, s_{t+1} \in \bs} \|\bPsi (\bfM \bPhi(s_t)) - s_{t+1}\|_2^2$ is optimized for each $\bs$. 

We conduct this optimization by splitting each $\bs=\{s_1, ..., s_T\} $ into conditional time sequence $\bsc = \{s_1, ..., s_{T_c}\}$ and validation time sequence $\bsp = \{s_{{T_c}+1}, ..., s_{T}\}$, while using the former for the internal optimization of $M$ to force the linear algebraic relation in the latent space and using the latter for the prediction loss. 
More precisely, we solve the following optimization problem about $\Phi$ and $\Psi$:
\begin{align}
\begin{split}
\mathcal{L}^p(\Phi, \Psi) &:= {\textstyle\sum}_\bs {\textstyle\sum}_{t={\Tc+1}}^{T} \left\| \Psi ( \bfM^*(\bsc|\Phi)^{t-\Tc} \Phi(s_{\Tc}) )-s_t \right\|_2^2 \\
\textrm{where~} & \bfM^*(\bsc|\Phi)= \argmin{}_\bfM {\textstyle\sum}_{t=1}^{\Tc-1}  \left\| \bfM \Phi(s_{t}) - \Phi(s_{t+1})\right\|_F^2.
\end{split} \label{eq:pred_loss}
\end{align}
Since $M^*$ is obtained from the latent sequence in the internal optimization, we call this learning framework the \textbf{meta-sequential prediction} (MSP).
It might appear as if we can also set $\bs=\bsc$ and optimize the following reconstruction version of Eq.\eqref{eq:pred_loss}:
\begin{align}
\mathcal{L}^r(\Phi, \Psi) := {\textstyle\sum}_\bs {\textstyle\sum}_{t=2}^{\Tc} \|\Psi(\bfM^*(\bs|\Phi)^{t-1} \Phi(s_1)) - s_{t}\|_2^2.
\label{eq:rec_loss}
\end{align}
However, as we will see in the experiment section, the use of the validation sequence $\bsp$ makes a substantial difference in the learned representation.
This is most likely because the minimization of the validation error of $M^*$ on $\bsp$ would encourage $\Phi$ to exclude the $\bsc$-specific information from the transition $M^*$.
We will illustrate this effect in the experiment section.
We shall note that we can also parameterize $\bfM$ as $\bfM:= \exp(A)$, where $A \in \mathbb{R}^{a \times a}$ is a Lie algebra element to be internally optimized. 
This type of approach was used in \cite{dehmamy2021automatic} for building Lie group convolutional network and in~\cite{sohl2010unsupervised, connor2020representing} for predicting a sequence that is not necessarily stationary. 
In order to train their model, \cite{sohl2010unsupervised} used additional parameters to diagonalize each algebra element as well as hyperparameters to stabilize the training. 
 We also experimented with a Lie-algebra style representation of $\bfM^*$ and used SGD to internally optimize the exponent parameters, but we needed to carefully tune the hyper-parameter for the norm regularization term to stabilize the training and never really succeeded to train the model without collapsing.
 Our internal optimization procedure is free of such a parameter tuning. 
\begin{wrapfigure}{R}{0.5\columnwidth}
    \centering
    \includegraphics[scale=0.13]{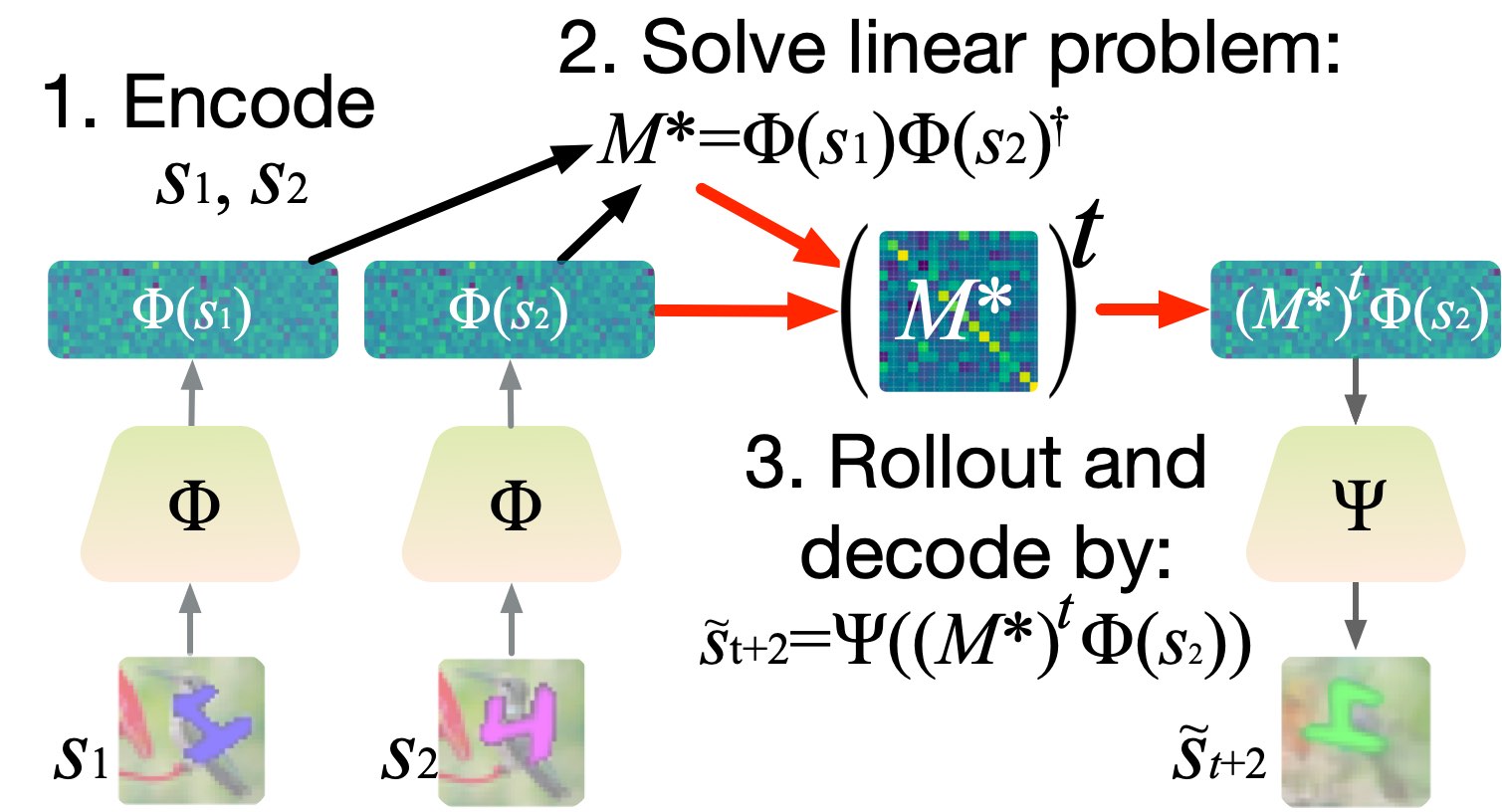}
    \caption{The overview of \textit{meta-sequential prediction}~(MSP) when $\Tc =2$ and $\Tp=t$. After the encoder encodes the observations into tensor representations $\Phi(s_1), \Phi(s_2)$, the method solves the least square problem: $\bfM^*=\argmin{}_\bfM\|\bfM \Phi(s_1)-\Phi(s_2)\|_F^2$. The model then predicts the future observations by $\tilde{s}_{t+2}=\Psi((\bfM^*)^t \Phi(s_2))$. These processes (including the linear problem) are all differentiable.}
    \label{fig:model}
\end{wrapfigure}

\paragraph{Internal Optimization of $M^*$}
Because the internal optimization in~Eq.\eqref{eq:pred_loss} is a linear problem, it can be solved analytically as 
\begin{align}
    M^*(\bsc | \Phi) = \bfH_{+1}\bfH_{+0}^{\dagger}, ~\label{eq:lstsq}
\end{align}
where $\bfH_{+0}=[\Phi(s_1) ;...;\Phi(s_{\Tc-1})] \in \mathbb{R}^{a \times (\Tc-1) m} $ and $\bfH_{+1}=[\Phi(s_2) ;...;\Phi(s_{\Tc}) \in \mathbb{R}^{a \times(\Tc-1) m}$ are the horizontal concatenations of the encoded frames and $\bfH_{+0}^{\dagger}$ is the Moore-Penrose pseudo inverse of $\bfH_{+0}$. 
Because $M^*(\bsc | \Phi)$ is a closed form with respect to $\Phi$, the loss \eqref{eq:pred_loss} can be directly optimized by differentiating it with respect to the parameters of both $\Phi$ and $\Psi$. 
Thus, the training is done in an end-to-end manner. Figure~\ref{fig:model} summarizes the overall procedure to make prediction on a given sequence when $T_c=2, T_p=t$.
We note that, although we have assumed the dataset to consist of constant-velocity sequences,
we can readily extend our method to the dataset consisting of the time series with higher-order stationarity, such as constant acceleration. 
See Section~\ref{sec:accl} for the detailed explanation of the model extension and the experimental results.

\subsection{Irreducible decomposition of $M^*$s}
\label{sec:sbd}
Representation theory guarantees that, if $\mathcal{G}$ is a compact connected group, any representation $D: \mathcal{G} \to \mathbb{R}^{a \times a}$ can be simultaneously block-diagonalized; that is, there is a common change of basis $U$ such that $V := U D_g U^{\rm T} =  \bigoplus_j V_g^{(j)}$, where $V_g^{(j)}$ is called irreducible representation that cannot be block-diagonalized any further~\cite{kondor2008group,cohen2014learning, Weintraub}.
The equivariance of our $\Phi$ which we show in the experimental section suggests that $\bfM^*(\bs | \Phi)$ may be simultaneously block-diagonalizable as well. 
This block-diagonalization sometimes reveals disentanglement structure because any irreducible representation of $\mathcal{G}_1 \times \mathcal{G}_2$ is of form $V^{(1)} \otimes V^{(2)}$, where $V^{(k)}$ is an irreducible representation of $\mathcal{G}_k$. In particular, if $M^*$'s irreducible representations have the form $V^{(1)}\otimes 1$ or $1 \otimes V^{(2)}$, then each block would either corresponds to the action of $\mathcal{G}_1$ or of $\mathcal{G}_2$.\begin{footnote}{$V: \mathcal{G} \to \{1\}$ is a valid irreducible representation for any $\mathcal{G}$.  }\end{footnote}

To find $U$ that simultaneously block-diagonalizes all $\bfM^*(\bs|\Phi)$, we optimized $U$ based on the following objective function that measures the block-ness of $\irrepM^{*}(\bs):= \bfU \bfM^{*}(\bs | \Phi) \bfU^{-1}$ based on the normalized graph Laplacian operator $\Delta$: 
\begin{align}
    \mathcal{L}_{\rm bd}(\irrepM^*(\bs) ) := \|\Delta ( A(\irrepM^*(\bs)))\|_{\rm trace} = {\textstyle\sum}_{d=1}^{a} \sigma_d ( A(\irrepM^*(\bs))) \label{eq:bd_loss}
\end{align}
where $A( \irrepM^*(\bs) ) = abs(\irrepM^*(\bs)) abs(\irrepM^*(\bs))^{\rm T}$ with 
$abs(\irrepM^*(\bs))$ representing the matrix such that~$abs(\irrepM^*(\bs))_{ij} = |\irrepM^*_{ij}|$.
Our objective function is based on the fact that, if we are given an adjacency matrix $A$ of a graph, then the number of connected components in the graph can be identified by looking at the rank of the graph Laplacian. For the derivation, please see Appendix~\ref{sec:full_blockdiag}.
Through this decomposition, we are able to uncover the hidden block structure of $M^*$s.
See Figure~\ref{fig:D_UVU} for the actual block-decompositions of $M^*$s through our simultaneous block diagonalization.
We show in Section~\ref{sec:sbd_results} that each block component of $\irrepM^*$ with optimized $U$ corresponds to the disentangled factor of variations in dataset.

\section{Experiments}\label{sec:experiments}
We conducted several experiments to investigate the efficacy of our framework. 
In this section, we briefly explain the experimental settings. 
For more details, please see Appendix~\ref{sec:exp_settings}.
We tested our framework on \textit{Sequential} MNIST,  3DShapes~\cite{3dshapes18}, and SmallNORB~\cite{lecun2004learning}. Sequential MNIST is created from MNIST dataset~\cite{lecun1998gradient}.
For all experiments, we used a ResNet~\cite{he2016deep}-based encoder-decoder architecture and we set $a=16$ and $m=256$ so that the latent space lives in $\mathbb{R}^{16 \times 256}$. 

For Sequential MNIST, we chose our $\mathcal{G}$ to be the set of all combinations of three types of transformations: shape rotation, hue rotation, and translation, and 
randomly sampled a single instance of $g\in \mathcal{G}$ for each sequence(See Appendix~\ref{sec:exp_settings} for the examples of sequences). 
To create each sequence,  we first resized the MNIST image to 24$\times$24, applied repetitions of a randomly sampled, fixed member of $g\in \mathcal{G}$ and embedded the results to $32 \times 32$ images.
For shape and hue rotations, we randomly sampled the velocity of angles from uniform distribution on the interval $[-\pi/2, \pi/2)$ for each sequence. For translation, we randomly sampled the start point and end point in the range of [-10, 10], and then moved the digit images on a straight line between the sampled points. 
We also experimented on sequential MNIST with background (Sequential MNIST-bg). 
For Sequantial MNIST-bg, we used the same generation rule as Sequential MNIST but we added background images  behind the moving digits. 
For the background, we used a randomly sampled images from  ImageNet~\cite{ILSVRC15}, which were all resized to 32$\times$32.
Also, we only used the images of digit 4 for most of the experiments on Sequential MNIST/MNIST-bg. Unless otherwise noted, all evaluations in this paper for the Sequential MNIST are based on training with only \textit{digit 4}.

\begin{figure}

    \includegraphics[scale=0.155]{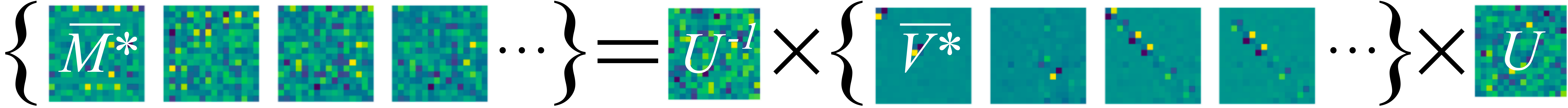}
    \caption{Simutaneous block diagonalization (SBD) applied to the set of $M^*$s obtained from 3DShapes sequences. SBD finds the common change of basis under which 
    all matrices $V^*= U^{-1} M^* U$ simultaneously take the form of block diagonal matrices with the same block positions.  For clarity, we provide in this figure the visualizations of 
     $\overbar{M^*}:=M^*-I$ and $\overbar{V^*}:=V^*-I$ instead of $M^*$ and $V^*$. }
    \label{fig:D_UVU}
\end{figure}


3DShapes and SmallNORB are datasets with multiple factors of variation. 
We created a set of~\textit{constant-velocity} sequences from these datasets by varying a fixed combination of factors for each sequence.
That is, on these datasets, we chose our $\mathcal{G}$ to be the set of variations of factors, and sampled each $g$ as $\prod_i g_{i}^{\ell_i}$,   where $g_{i}$ represents the increase of $i$-th factor by one unit and $\ell_i \in \mathbb{Z}$. 
Thus, the value of $\ell_i$ represents the velocity in the direction of the $i$-th factor on the grid. 
For 3DShapes, we chose \textit{wall hue}, \textit{floor hue}, \textit{object hue}, \textit{scale}, and \textit{orientation} as the factors to vary. 
We varied \textit{elevation} and \textit{azimuth} for SmallNORB. For the split of each sequence into $(\bsc, \bsp)$ , we set $\Tc=2$ and $\Tp=1$ on all of the constant velocity experiments.

We also conducted experiments for the sequences with constant acceleration on Sequential MNIST. 
To create a sequence with constant acceleration, we chose a pair $g_a, g_v \in \mathcal{G}$ for each sequence, and generated $\bs$ by setting $s_{t+1} = g_a^t g_v s_t$. 
We elaborate on the detail of this extension in \ref{sec:accl}. 

As ablations, we tested several variants of our method: 
\textbf{fixed 2x2 blocks}~(abbreviated as fixed blocks), \textbf{Neural$M^*$},   \textbf{Reconstruction model} (abbreviated as Rec. Model), and \textbf{Neural transition}. 
For the method of \textit{fixed 2x2 blocks}, we separated the latent tensor $\Phi(s)\in \mathbb{R}^{16 \times 256}$ into 8 subtensors $\{\Phi^{(k)} (s)\in \mathbb{R}^{2\times 256}\}_{k=1}^8$ and calculated pseudo inverse for each $k$ to compute the transition in each $\mathbb{R}^{2 \times 256}$ dimensional space.
This variant yields $M^*$ as a direct sum of 
eight $2 \times 2$ matrices. 
We tested this variant to see the effect of introducing a predetermined representation theoretic structure as in \cite{cohen2014learning}.
For Rec.~model, we trained $\Phi$ and $\Psi$ based on $\mathcal{L}^r$ in Eq.~\ref{eq:rec_loss} with $\Tc=3$. 
We tested this variant to see the effect of our use of $T_p$. 
For Neural$M^*$, we trained an additional network $M_\theta$ that maps $\bsc$ to a transition matrix, replaced $\bfM^*$ with $M_\theta$ in \eqref{eq:lstsq}, and optimized $\theta$ and $(\Phi, \Psi)$ simultaneously.
We may see Neural$M^*$ as a variant in which the \textit{meta} part of the internal and external training is removed from our method.
For Neural transition, we trained 1x1 1D-convolutional networks to be applied to 
latent sequences \textit{in the past}
to produce the latent tensor in the next time step; for instance, $\tilde{s}_{t+1} = \Psi (\rm{1DCNN}(\Phi(s_t), \Phi(s_{t-1})))$ when $\Tc=2$ {}\footnote{This model can be seen as a  simplified version of \cite{oord2016wavenet}}. The 1DCNN was applied along the multiplicity dimension $m$.  
In this variant, the relation between $\Phi(s_t)$ and $\Phi(s_{t+1})$ is not necessarily linear.
Section \ref{sec:detail_ablation} in Appendix describes each of the comparison methods more in detail.
In testing all of these variants, we used the same pair of encoder and decoder architecture as the proposed method. 
\begin{figure}[t]
\subcaptionbox{Generated examples with our proposed method on Sequential MNIST and MNIST-bg \label{fig:gen_mnist}}[.99\linewidth]{\includegraphics[scale=0.30]{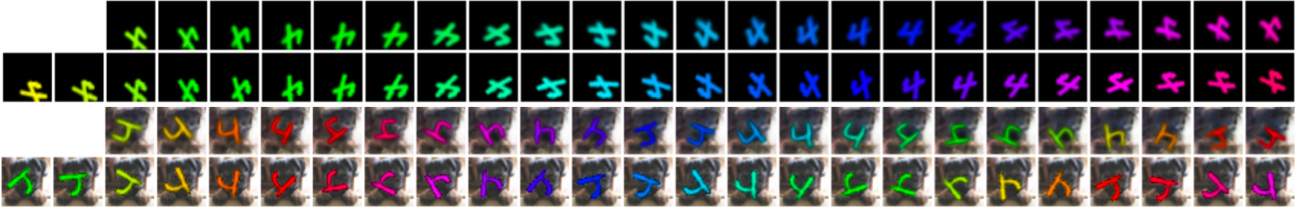}}
\vspace{0.5mm}
\subcaptionbox{Comparison of variant methods \label{fig:gen_comparison}}[.99
\linewidth]{\includegraphics[scale=0.35]{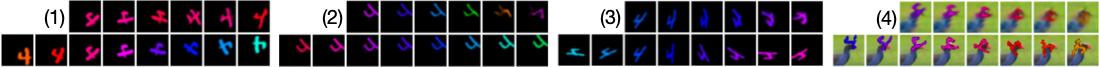}}%
\caption{(a): Predictions made by \textit{meta-sequential prediction} on Sequential MNIST and MNIST-bg. The ground truth sequence is placed below the predictions, with the first two images representing $\bsc$.
(b): Typical failure examples generated by the comparative methods. (1)(2)(3) and (4) are Neural$M^*$, Neural transition, Rec. model and Ours w/ fixed block, respectively.
See Appendix~\ref{sec:gen} for more examples.
\label{fig:gen_images_mnist}}

\end{figure}
\subsection{Qualitative and quantitative results on the prediction}
\begin{figure}[t]
\centering
    \subcaptionbox{Sequential MNIST\label{fig:prederrors_mnist}}[.32\textwidth]{
        \includegraphics[scale=0.27]{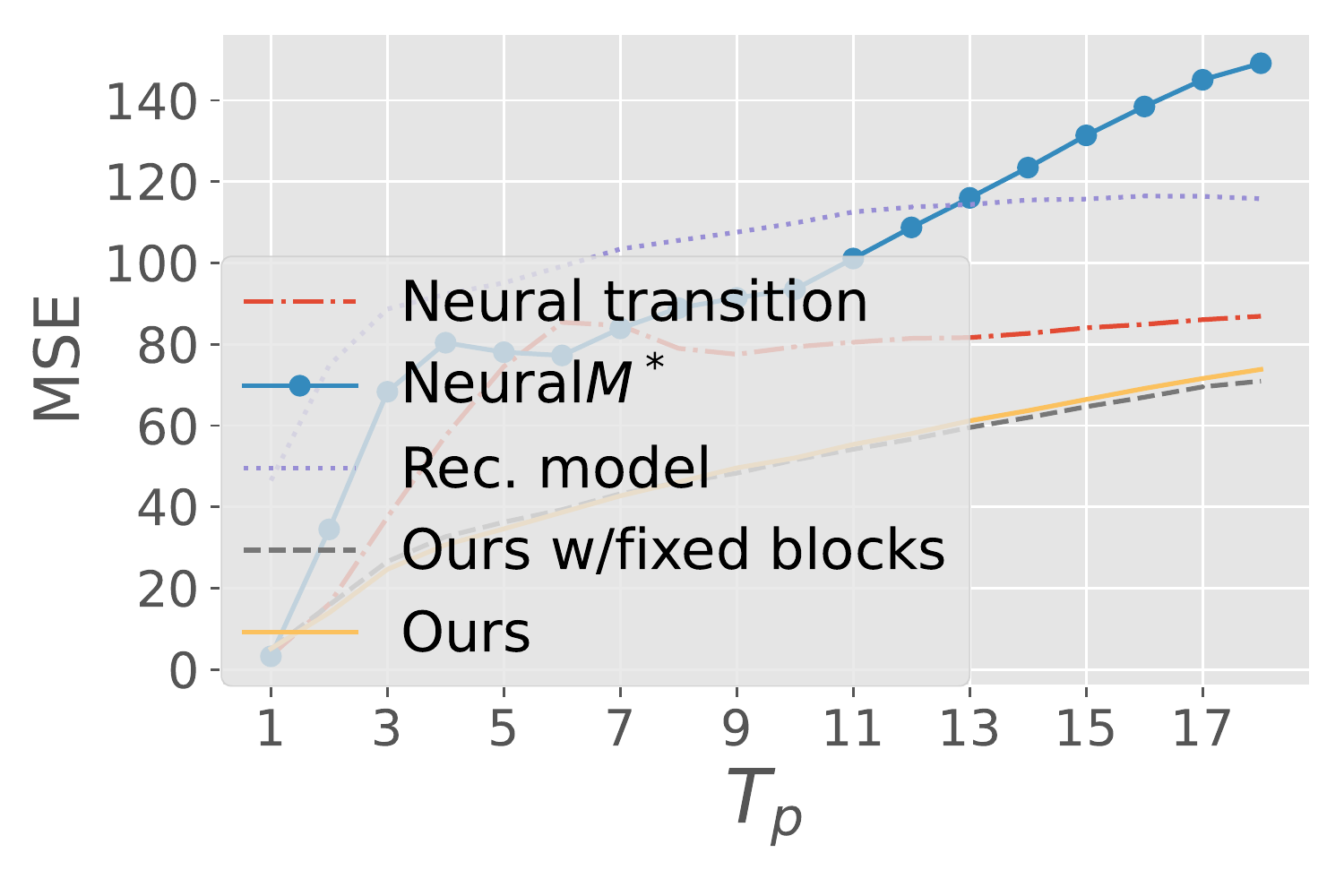}
    }
    \subcaptionbox{Sequential MNIST-bg\label{fig:prederrors_mnist_bg}}[.32\linewidth]{
        \includegraphics[scale=0.27]{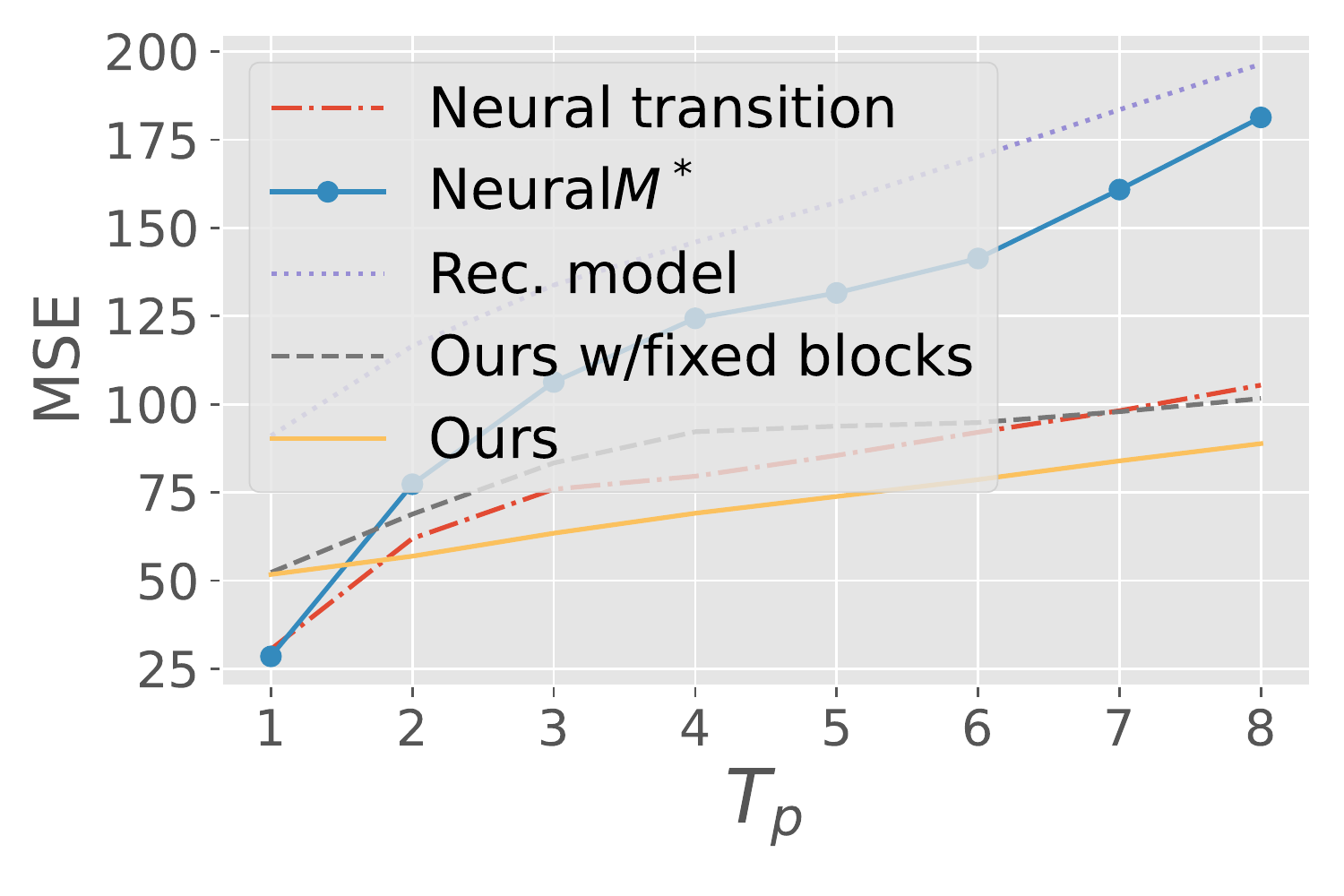}
    }
    \subcaptionbox{3DShapes\label{fig:prederrors_3dshapes}}[0.32\linewidth]{
        \includegraphics[scale=0.27]{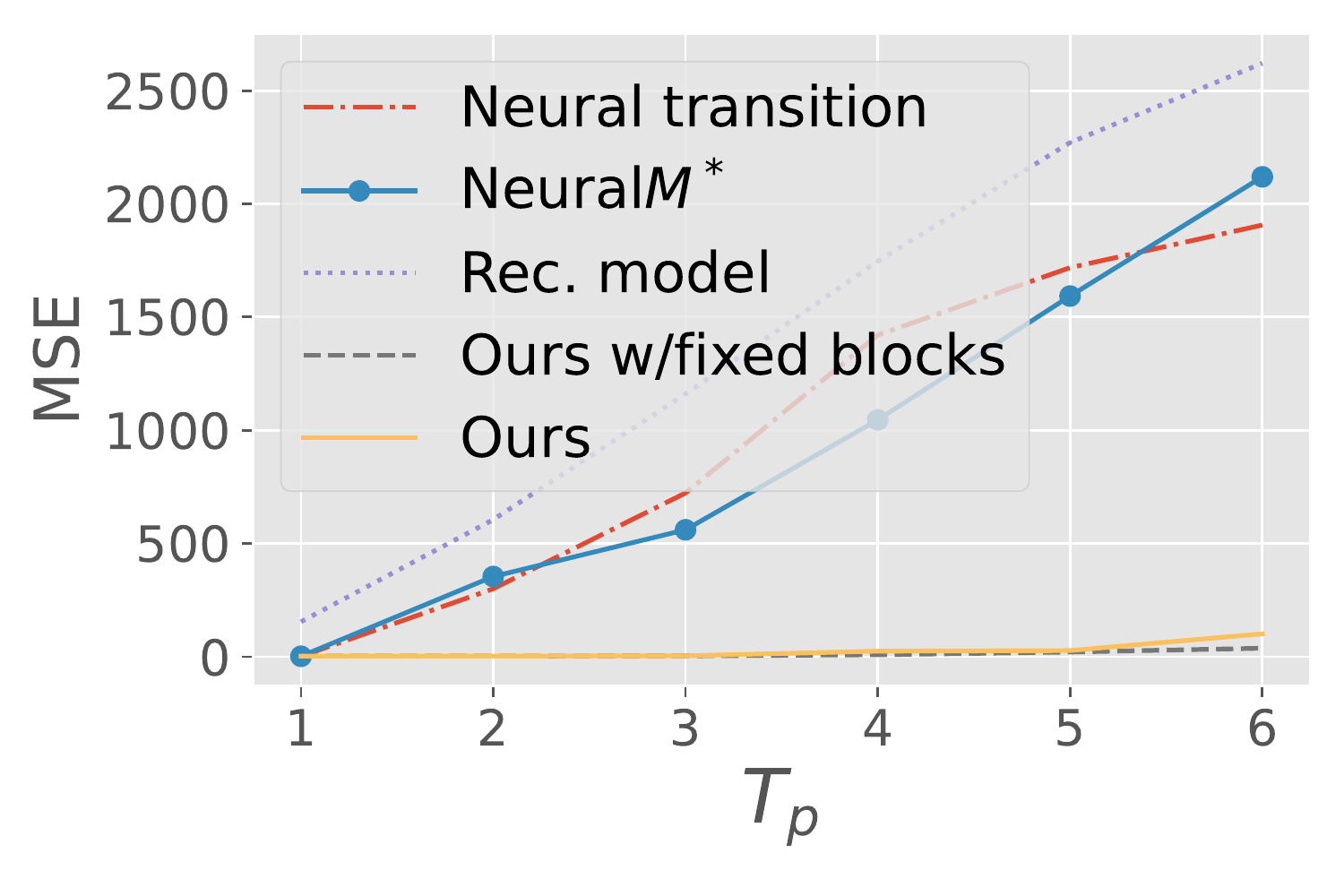}
    }
    \caption{Prediction errors $\mathcal{L}^p$ with $\Tc=2$ and $\Tp=1,\dots,18$. During the training phase, models are trained to predict the observations only at $\Tp=1$. The prediction errors at $\Tp>1$ indicate the extrapolation performance. The results on SmallNORB can be found in Appnedix ~\ref{sec:supp}.\label{fig:prederror}}
\end{figure}

Figure~\ref{fig:gen_images_mnist} shows the example sequences generated by the proposed model and comparative models. 
Figure \ref{fig:prederror} presents the prediction performance at $\Tc + \Tp$ when $\Tc=2$. 
To produce this result, we back-propagated the prediction error at $\Tp=1$ to the encoder during the training, and the prediction at $\Tp > 1$ was used to evaluate the extrapolation performance.   
Our method successfully predicts the images for $\Tp\geq 1$.
Neural transition and Neural$M^*$ had almost the same prediction performance at $\Tp=1$, but they both failed in extrapolation.
Our \textit{fixed 2$\times$2 blocks} variant failed in extrapolation as well. 
This might be because the over-regularized structure of 2x2 block hindered with the training of the SGD optimization~\cite{frankle2018lottery}.  

To evaluate how our learned representation relates to the structural features in the dataset, we also regressed the factors of transition from $\bfM^*$ and regressed the class of the digits from $\Phi(s_1)$ 
(Figures~\ref{fig:actpred} and \ref{fig:cls} in Appendix \ref{sec:supp}).
SimCLR~\cite{chen2020simple} and contrastive predictive coding~(CPC)~\cite{van2018representation, henaff2020data} are tested as baselines.
Please see Appendix~\ref{sec:exp_settings} for the detailed experimental settings for SimCLR and CPC.
Our method yields the representation with better prediction performance than the comparative methods on the test datasets.

\subsection{Equivariance performance}\label{sec:equiv_pf}
\begin{figure}[t]
    \centering
    \subcaptionbox{
    Dependency of $\bfM^*$ on sequence. The 2x3 tiled images in the leftmost panel represents two sequences $\bs^{(1)}, \bs^{(2)}$ with the same transition action $g$. We consider the effects of $\bfM^{*(1)}$ and $\bfM^{*(2)}$ inferred respectively from $\bs^{(1)}$ and $\bs^{(2)}$.
    Neural$M^*$ fails in prediction when $\bfM^{*(2)}$ is used to predict $\bs^{(1)}$. Our method does not fail by this swap, indicating $\bfM^{*(2)} \cong \bfM^{*(1)}$. 
    \label{fig:gen_swap}}[1.0
    \linewidth]{\includegraphics[scale=0.18]{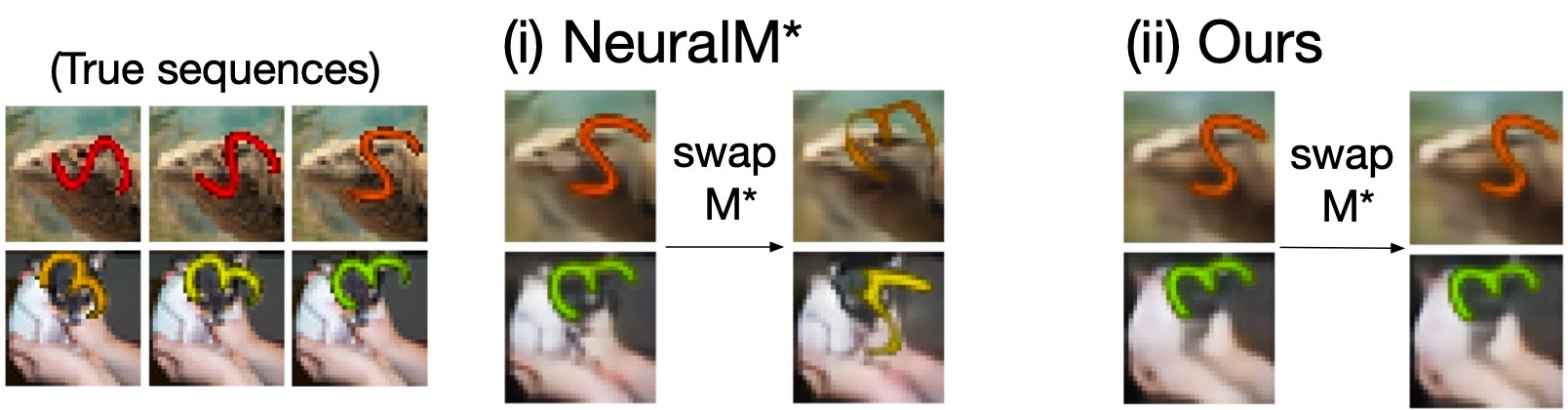}}%
    \vspace{2.5mm}
    
    \centering
    \subcaptionbox{Equivariance performance based on $\mathcal{L}^p$(Eq.\eqref{eq:pred_loss}) and $\mathcal{L}^p_{\rm equiv}$(Eq.\eqref{eq:equiv_loss}) with $\Tc=2$ and $\Tp=1$. To evaluate equivariance errors on more difficult settings, we used all of digits in Sequential MNIST-bg for both training and test sets.\label{tab:equiv_error}}[1
    \linewidth]{
    \begin{tabular}{lrrrrrrrr}
        \toprule
             &\multicolumn{2}{c}{MNIST} &
             \multicolumn{2}{c}{MNIST-bg} & \multicolumn{2}{c}{3DShapes} & \multicolumn{2}{c}{SmallNORB} \\  
        \cmidrule{2-3} \cmidrule{4-5} \cmidrule{6-7} \cmidrule{8-9}
        Method&  \multicolumn{1}{c}{$\mathcal{L}^p$}  & $\mathcal{L}_{\rm equiv}^p$ 
              &  \multicolumn{1}{c}{$\mathcal{L}^p$}  & $\mathcal{L}_{\rm equiv}^p$   
              &  \multicolumn{1}{c}{$\mathcal{L}^p$}  & $\mathcal{L}_{\rm equiv}^p$   
              &  \multicolumn{1}{c}{$\mathcal{L}^p$}  & $\mathcal{L}_{\rm equiv}^p$   \\
        \midrule 
        Rec. Model      &48.91&64.22& 87.05& 95.66 &153.39& 258.20&57.01&78.13 \\
        Neural$M^*$     &4.99& 64.25& 20.60 & 83.18 & 2.09& 217.73&28.98&53.24\\
        MSP (Ours)      &6.42& \textbf{15.91}& 27.38 & \textbf{36.41} &2.74& \textbf{2.87}& 31.14& \textbf{44.77} \\  
        \bottomrule
    \end{tabular}}%
    \caption{ Quantitative and qualitative evaluation of learned equivariance. 
    }
    \label{fig:equiv}
\end{figure}

As we have described through Section \ref{sec:method} and \ref{sec:enc_alg}, the equivariance is achieved when 
$M^*(\bs(s_1, g) | \Phi)$ in \eqref{eq:pred_loss} does not depend on $s_1$, where we recall that $\bs(s_1, g)$ represents the sequence that begins with $s_1$ and transitions with $g$.
To see how much the trained model is equivariant to the transformations in the sequential dataset, we therefore calculated the \textit{equivariance error}, which is the prediction error from applying $\bfM^*(\bs|\Phi)$ to $\Phi(s'_{T_c})$ for a pair $\bs \neq \bs'$ that transitions with the same $g$.  In other words, when $T_c=2$, we compute the following; 
\begin{align}
\begin{split}
 \mathcal{L}_{\rm equiv}^p := \mathbb{E}_g \mathbb{E}_{\bs, \bs' \in \mathcal{S}(g)} [ \| \Psi(\bfM^*(\bs|\Phi) ~\Phi(s'_2))  - s'_3  \|_2^2]
 \end{split} \label{eq:equiv_loss}
\end{align}
where $\mathcal{S}(g)$ represents the set of all sequences that transition with $g$. For each pair of $\bs \neq \bs'$ we set $\Tc=2$ and $\Tp=1$ as done in the experiments in the previous sections.
Table~\ref{tab:equiv_error} compares the Neural$M^*$ method against our method in terms of the equivariance error. 
Figure~\ref{fig:gen_swap} shows the result of applying $\bfM^*(\bs| \Phi)$ on $\Phi(s'_2)$ and applying $\bfM^*(\bs'| \Phi)$ on $\Phi(s_2)$. 
We see that when we swap $\bfM^*$ this way, Neural$M^*$ also swaps the digits; this implies that $\bfM^*$ learned by  Neural$M^*$ encodes the \textit{sequence} specific information together with the transition. 
On the other hand, swapping of $\bfM^*$ does not affect the prediction for our method, suggesting that our method is succeeding to learn an equivariant model.
This is somewhat surprising, because our model does not have the explicit mechanism to enforce the full equivariance ($P=I$ in Section~\ref{sec:method}).  


\subsection{Structures found by simultaneous block-diagonalization of $\bfM^*$s}\label{sec:sbd_results}
We have seen in the previous section that the trained $\Phi$ is fully equivariant to transformations $\mathcal{G}$, which implies each $M^*$ is a representation of the corresponding transformation of $g \in \mathcal{G}$.
As we describe in Section~\ref{sec:sbd}, we apply simultaneous block-diaognalization to uncover \textit{the symmetry structure} captured by $M^*$s.
Figure~\ref{fig:sbd} shows the structure revealed by simultaneous block-diagonalization through the change of basis $U$ trained by minimizing the average of $\mathcal{L}_{\rm bd}$ in eq.\eqref{eq:bd_loss} over all $\bs$.
 Figure~\ref{fig:steer_3dshapes} shows the results of applying transformation of only one block. 
 We can see that each block only alters one factor of variation. Our results suggest that the learned $\bfM^{*}$ captures the hidden disentangled structure of the group actions behind the datasets.

\begin{figure}[t]
    \centering
    \begin{minipage}{0.49\textwidth}
    \subcaptionbox{Simeltaneously block-diagonalized matrices\label{fig:sbd}}[\linewidth]{
        \includegraphics[scale=0.075]{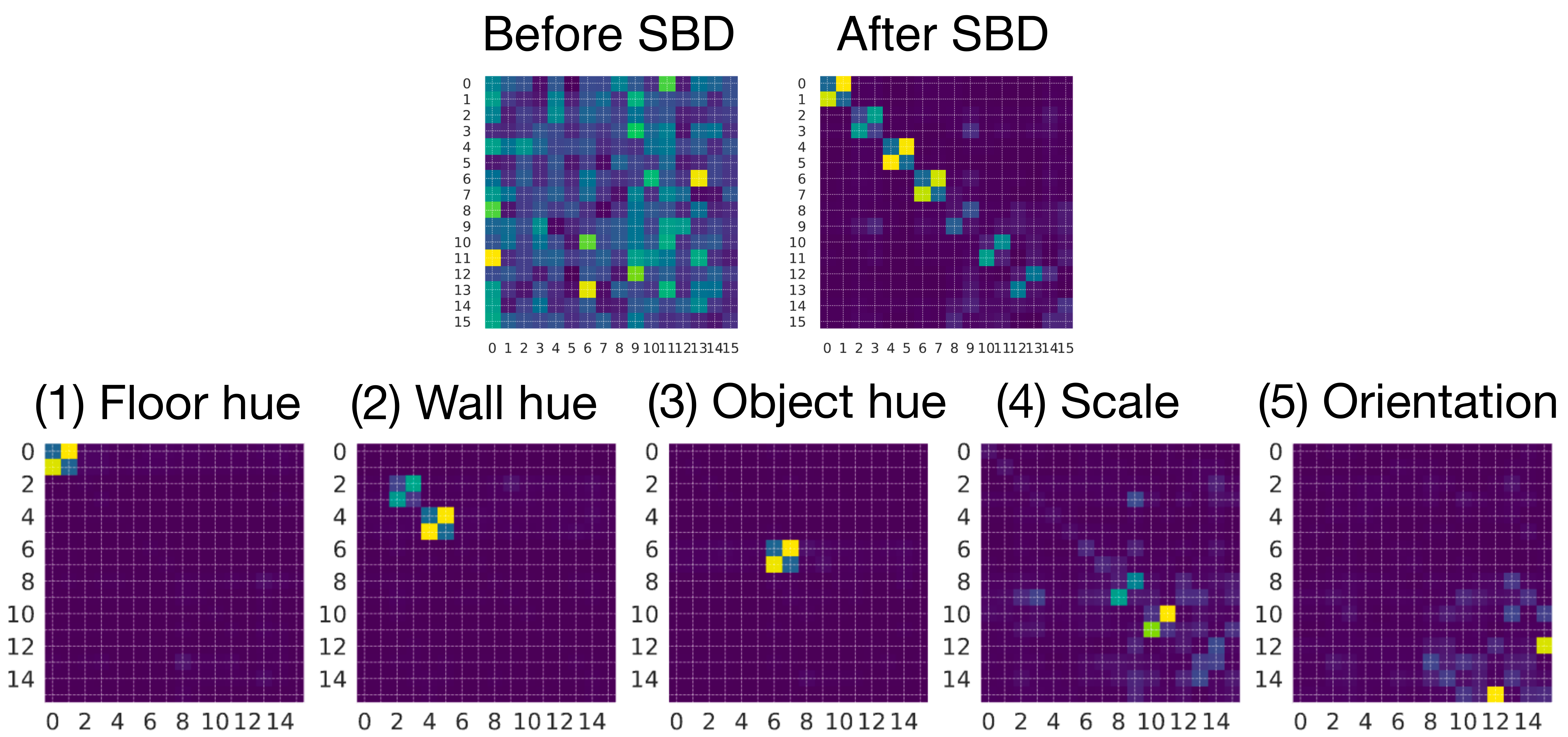}
    }
    \end{minipage}
    \begin{minipage}{0.5\textwidth}
    \subcaptionbox{Disentangled transition representation on 3DShapes\label{fig:steer_3dshapes}}[\linewidth]{
        \includegraphics[scale=0.06]{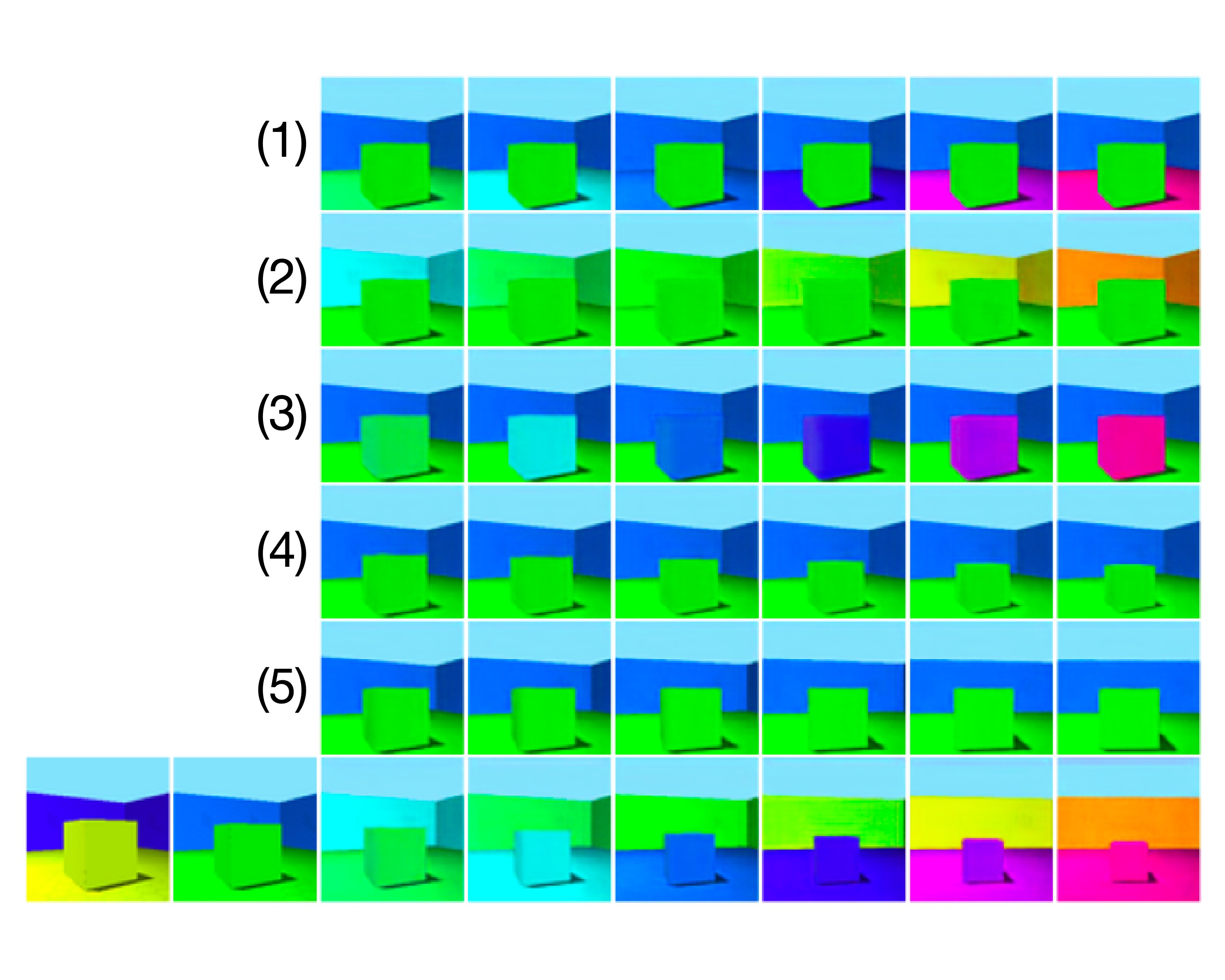}
    }
    \end{minipage}
    \caption{
    (a) Simultaneous block-diagonalization~(SBD) of $\bfM^{*}$. The top right matrix is the visualization of $abs(\irrepM^*-I)$ averaged over all of the training sequences. Each of the five matrices below is the visualization of $abs(\irrepM^*-I)$ averaged over the set of sequences on which only a single factor was varied. 
    Coordinates are permuted for better visibility.
    (b) Sequences generated by applying the transformation of just one block. 
    To produce the disentangled sequences in each row from the leftmost two images in the bottom row, we performed the internal optimization of $M^*$ while setting \textit{all but the block positions corresponding to each factor of variation} to be identity.  We elaborate this result and
    the results for Sequential MNIST, MNIST-bg and SmallNORB in Appendix~\ref{sec:supp}.
    } 
    \label{fig:steer}
\end{figure}





\subsection{Extension to the sequences with constant acceleration}\label{sec:accl}
We have seen that \textit{meta-sequential prediction} successfully learns an equivariant structure from the set of constant-velocity sequences. In this section, we show that we can extend our concept to the set of sequences sharing the stationarity of higher order (constant acceleration). 
By definition, the pair of $\Phi(s_t)$ and $\Phi(s_{t-1})$ encodes the information about the velocity at $t$. When the multiplicity $m$ is sufficiently large\footnote{If $m$ is less than $a$, we cannot obtain the pseudo inverse because of the rank deficient in $\Phi(s_{t-1}) \Phi(s_{t-1})^{\rm T}$. Thus $m$ should be at least larger than $a$.}, the velocity can be estimated by: ${}^{1}\bfM_t = \Phi(s_t)\Phi(s_{t-1})^\dagger$.
Because this would yield a sequence of velocities, we can simply apply our method again to estimate the constant acceleration by
${}^{2}\bfM^{*} = {}^{1}\bfM_{+1} {}^{1}\bfM_{+0}^{\dagger}$
 where ${}^{1}\bfM_{+0}=[{}^{1}\bfM_2;...;{}^{1}\bfM_{\Tc-1}] \in \mathbb{R}^{a \times (\Tc-2) a}$
${}^{1}\bfM_{+1}=[{}^{1}\bfM_3;...;{}^{1}\bfM_{\Tc}] \in \mathbb{R}^{a \times (\Tc-2) a}$.
We can then predict the future representation $\tilde{s}_t$ for $t=\Tc+1,...,\Tp$ by 
\begin{align}
    \tilde{s}_t = \Psi\left(\left({\textstyle\prod}_{t'=\Tc+1}^{t}{}^{1}\tilde{M}_{t'}\right)  \Phi(s_{\Tc}) \right)~{\rm where}~ {}^{1}\tilde{M}_{t'}={}^{2}\bfM^{*(t'-\Tc)}~{}^{1}\bfM_{\Tc}.
\end{align}
where $\prod$ represents multiplications from left. We train $\Phi$ and $\Psi$ by minimizing the mean squared error between $\tilde{s}_t$ and $s_t$ for $t=\Tc+1,...,T$ as in Eq.\eqref{eq:pred_loss}.
To create a sequence of constant acceleration from MNIST dataset, we only used shape and color rotations. We chose the initial velocity for these  rotations randomly on the interval $[-\pi/5, \pi/5)$ for each sequence, and chose the acceleration on the interval $[-\pi/40, \pi/40)$. 
The results are shown in~Figure~\ref{fig:mnist_accl}. The Neural transition and the constant-velocity version of our method failed to predict the accelerated sequence, while the 2nd order model succeeded in predicting the sequence even after $\Tp>5$. Also, Figure~\ref{fig:equiv_error_mnist_accl} and Table~\ref{tab:equiv_error_small_norb} in Appendix shows that the accelerated version of our proposed model again achieves learning the equivariance relation.

\section{Discussion \& Limitations}
\paragraph{How is full equivariance achieved in our method?}
The theoretical results we provided in section \ref{sec:method} only assure that $\bfM(g, x)$ and $\bfM(g, x')$ are similar when the underlying group is commutative, compact and connected.  
However, as we have shown experimentally, our method seems to be learning $\Phi$ for which the estimators of $\bfM$ satisfy $\bfM^* (\bs(g, x)|\Phi) \cong \bfM^* (\bs(g, x')|\Phi)$. 
This can be happening because our framework and the training method based on the internal optimization in the latent space is somehow encouraging $\bfM^*$ to be orthogonal (See the loss curve of orthogonality of $M^*$ in Appendix~\ref{sec:supp}).
Maybe this is forcing the change of matrix $P$ such that $P\bfM(g, x)P^{-1}= \bfM(g, x')$ to be also rotations as well, which commutes with $\bfM(g, x)$ itself. 
Also, Figure~\ref{fig:gen_comparison} and~\ref{fig:prederror} show that, as reported in ~\cite{keller2021predictive}, the models trained with reconstruction loss like~\eqref{eq:rec_loss} does not well capture the group transformation behind the sequences: the encoder representation was found to be significantly worse than that of the model trained with \eqref{eq:pred_loss}. 
We hypothesize that \eqref{eq:rec_loss} fails to remove the sequence-specific information from $\bfM^*$, while \eqref{eq:rec_loss} succeeds to do so by training the model to be able to predict the unseen images.



\paragraph{Towards learning symmetries from more realistic observations}

As we are making a connection between our prediction framework and group equivariance, we are essentially assuming that the transitions are always invertible, because group is closed under inversion.
However, this might not be always the case in real world applications; for instance, if the image sequences are the sequential renderings of a rotating 3D object, the transitions are generally not invertible because only a part of the object is visible at each time step.
We experimented Sequential ShapeNet, which is created from ShapeNet~\cite{shapenet2015} dataset. A series of rendered images is generated by sequentially applying 3D rotations of different speeds for each axis.
Generated results on Sequential ShapeNet (See ~\ref{fig:gen_shapenet} in Appendix~\ref{sec:gen} show that actually our current method was not able to generate the images on 3D rotated datasets.
If the transitions are not invertible, some measures must be taken in order to resolve the indeterminacy, such as probabilistic modeling or additional structural inductive bias. 

\begin{figure}[t]
    \centering
    \subcaptionbox{Generated images on \textit{accelerated} Sequential MNIST. }[0.64
    \textwidth]{
    \centering
    \includegraphics[scale=0.125]{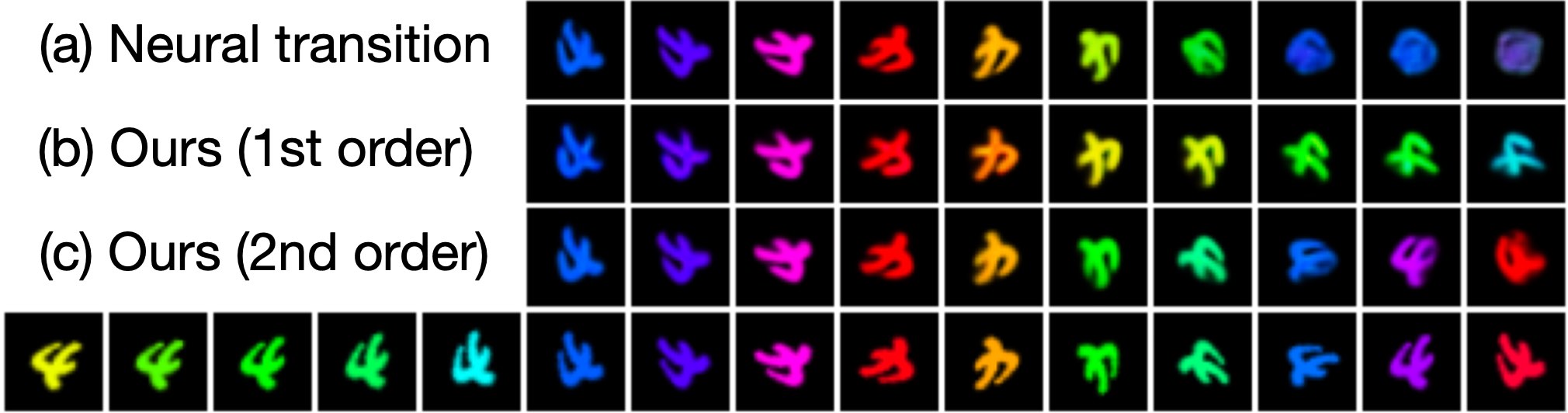}
    }
    \subcaptionbox{Prediction error.}[0.35\textwidth]{
    \centering
    \includegraphics[scale=0.28]{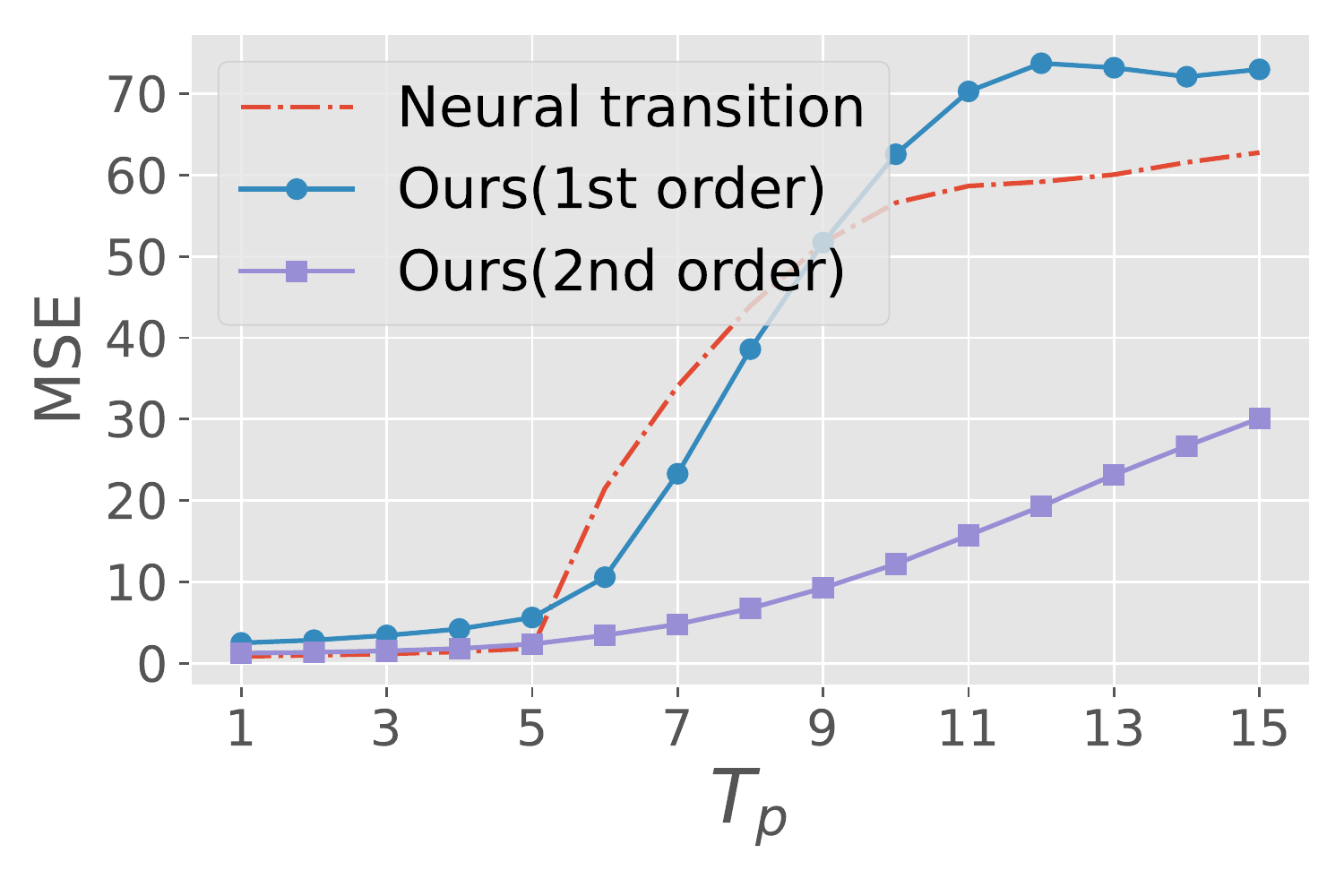}
    }
    \caption{
    Results on accelerated Sequential MNIST.
    Every model was trained with $\Tc=5$, $\Tp=5$.
    Neural transition overfitted and collapsed from $\Tp=5$~(beyond the training horizon). 
    }
    \label{fig:mnist_accl}
\end{figure}

\paragraph{Broader impact}\label{sec:broader_impact}
Because our study generally contributes to predictions and extrapolation, it has as much potential to negatively affect the society as most other prediction methods. 
In particular, applications of our method to image sequence can be potentially integrated into weapon systems, for example. 
At the same time, our unsupervised learning of the symmetrical structure from sequential datasets may also contribute to new discoveries in the systems of finance, medical science, physics and other fields of ML such as reinforcement learning.


\bibliography{main}

\section*{Checklist}

\begin{enumerate}

\item For all authors...
\begin{enumerate}
  \item Do the main claims made in the abstract and introduction accurately reflect the paper's contributions and scope?
    \answerYes{The abstract and introduction reflects the contributions of this paper.}
  \item Did you describe the limitations of your work?
    \answerYes{In the Discussion \& Limitations section, we discuss the type of dataset to which our work cannot be directly applied, as well as the theoretical problem that has not been solved.}
  \item Did you discuss any potential negative societal impacts of your work?
    \answerYes{We included the broader impact discussion in Section~\ref{sec:broader_impact}}
  \item Have you read the ethics review guidelines and ensured that your paper conforms to them?
    \answerYes{We read the guidelines and we confirmed that the contents in our paper conformed to them.}
\end{enumerate}

\item If you are including theoretical results...
\begin{enumerate}
  \item Did you state the full set of assumptions of all theoretical results?
    \answerYes{We included the full set of assumptions. We provide the details of the assumptions in the supplementary material.  }
        \item Did you include complete proofs of all theoretical results?
    \answerYes{We provide complete proofs of theoretical results in Supplementary material. }
\end{enumerate}

\item If you ran experiments...
\begin{enumerate}
  \item Did you include the code, data, and instructions needed to reproduce the main experimental results (either in the supplemental material or as a URL)?
    \answerYes{We provided code for reproducing the experimental results as a supplementary material.}
  \item Did you specify all the training details (e.g., data splits, hyperparameters, how they were chosen)?
    \answerYes{We explained how we chose the hyperparameters and how we split the dataset into training and test sets in section~\ref{sec:experiments} and appendix section \ref{sec:exp_settings}.}
        \item Did you report error bars (e.g., with respect to the random seed after running experiments multiple times)?
    \answerYes{We include the standard deviation values of equivariance performance in Appendix~\ref{sec:supp} (We omitted them in the main submission because of the space limitation) and the linear classification performance of the transition parameters in Appendix~\ref{sec:supp}.}
        \item Did you include the total amount of compute and the type of resources used (e.g., type of GPUs, internal cluster, or cloud provider)?
    \answerYes{We put the information of GPUs which we used to train the models. Also we included approximated costs for reproducing the full results of experiments in~\ref{sec:exp_settings}.}
\end{enumerate}

\item If you are using existing assets (e.g., code, data, models) or curating/releasing new assets...
\begin{enumerate}
  \item If your work uses existing assets, did you cite the creators?
    \answerYes{We cited \cite{lecun1998gradient} for MNIST, \cite{shapenet2015} for ShapeNet, \cite{kim2018disentangling} for 3Dshapes and \cite{lecun2004learning} for SmallNORB.}
  \item Did you mention the license of the assets?
    \answerYes{For each dataaset, we included the URL to the license information in the reference section.}
  \item Did you include any new assets either in the supplemental material or as a URL?
    \answerNA{}
  \item Did you discuss whether and how consent was obtained from people whose data you're using/curating?
    \answerYes{For ShapeNet dataset, we are approved to download the assets.}
  \item Did you discuss whether the data you are using/curating contains personally identifiable information or offensive content?
  \answerYes{We don't find any personally identifiable information in the images used in our experiments.}
    
\end{enumerate}

\item If you used crowdsourcing or conducted research with human subjects...
\begin{enumerate}
  \item Did you include the full text of instructions given to participants and screenshots, if applicable?
   \answerNA{}
  \item Did you describe any potential participant risks, with links to Institutional Review Board (IRB) approvals, if applicable?
   \answerNA{}
  \item Did you include the estimated hourly wage paid to participants and the total amount spent on participant compensation?
   \answerNA{}
\end{enumerate}

\end{enumerate}

\newpage


\appendix
\part{Appendix} 

\parttoc

\section{Supplemental results} \label{sec:supp}

\subsection{Qualitative and quantitative results on the prediction}
\begin{figure}[H]
    \centering
    \subcaptionbox{SmallNORB\label{fig:prederrors_smallNORB}}[.55\linewidth]{
        \includegraphics[scale=0.5]{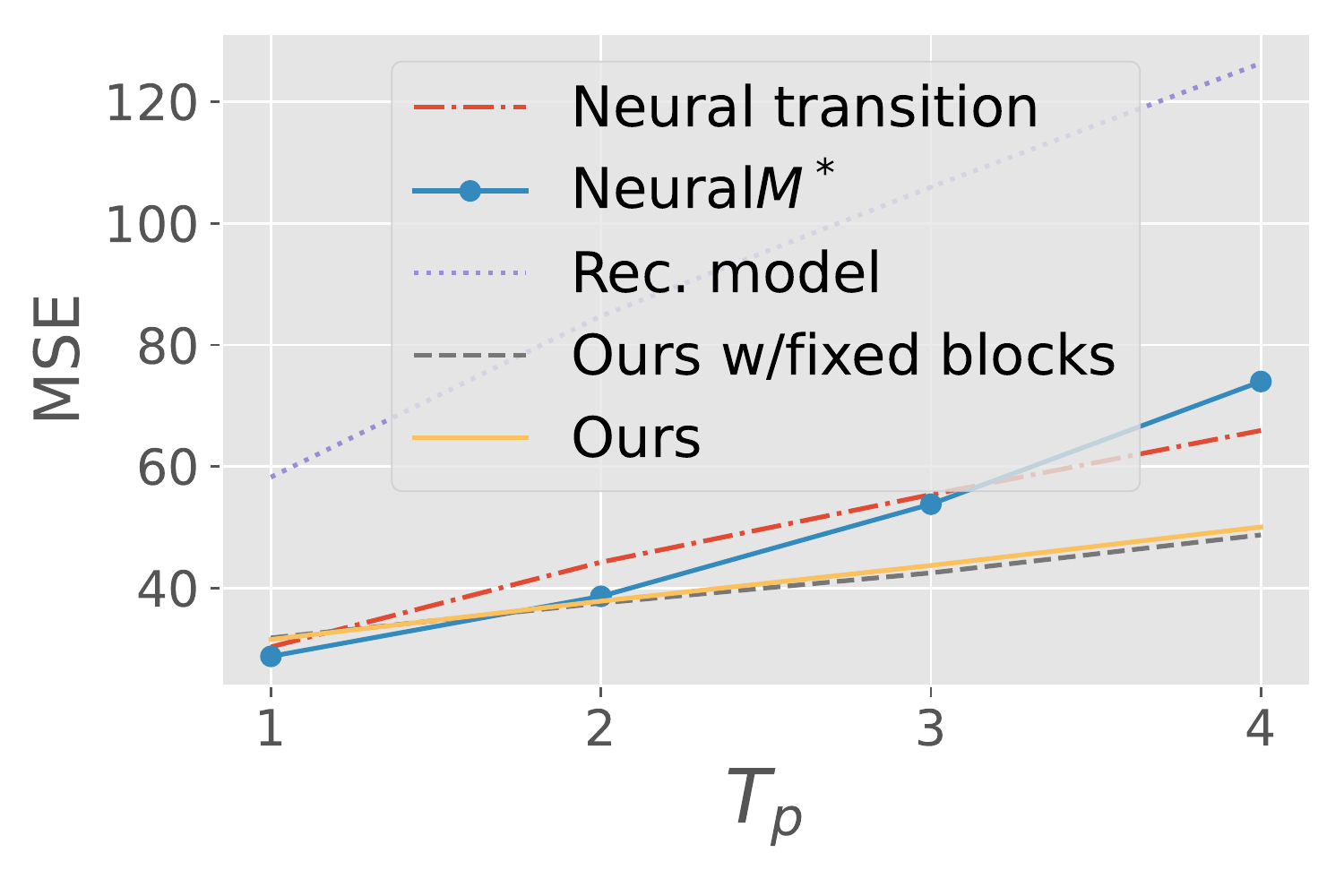}
    }
    \caption{Prediction errors on smallNORB. During the training phase, the models were trained to predict the observations only at $\tp=1$. The prediction errors at $\tp>1$ indicate the extrapolation performance.\label{fig:prederror_smallNORB}}
\end{figure}

\begin{figure}[H]
    \centering
    \subcaptionbox{Sequential MNIST\label{fig:actpred_mnist}}[.49\linewidth]{
        \includegraphics[scale=0.23]{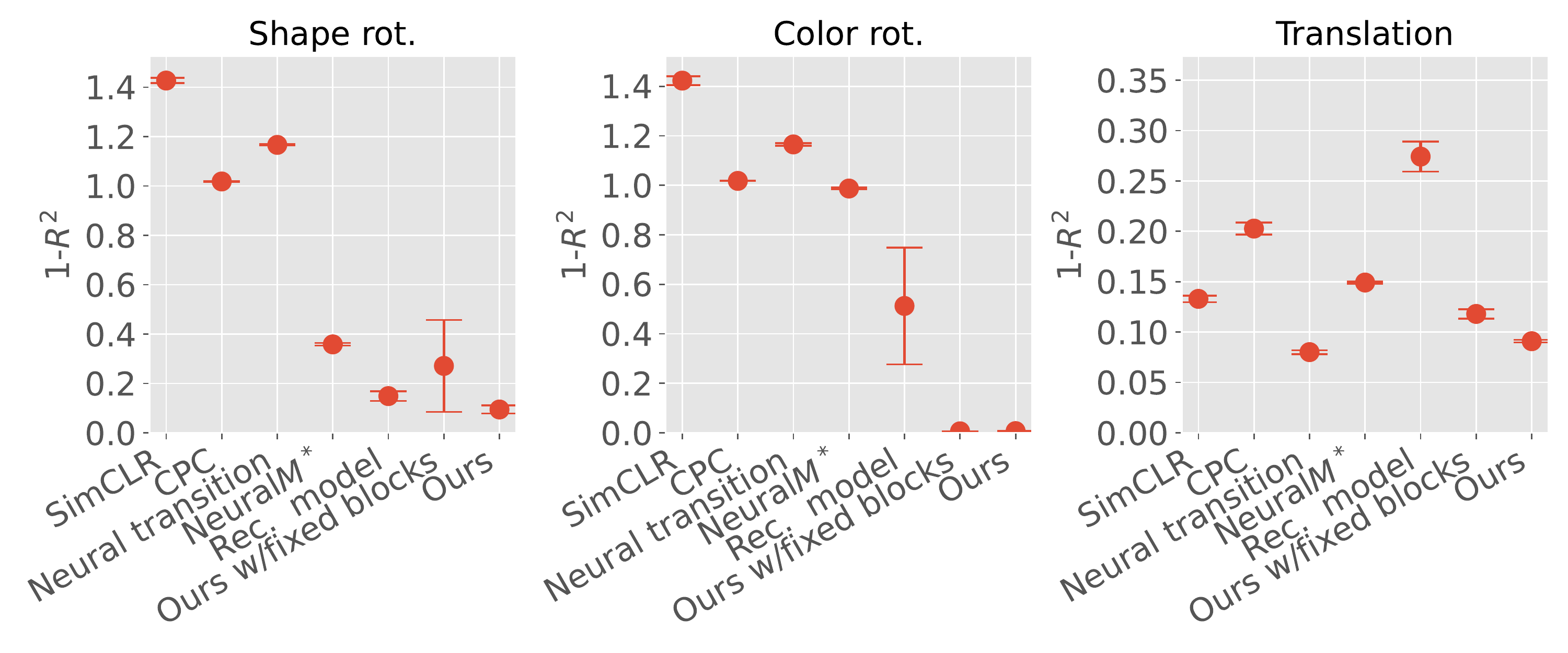}
    }
    \centering
    \subcaptionbox{Sequential MNIST-bg\label{fig:actpred_mnist-bg}}[.49\linewidth]{
        \includegraphics[scale=0.23]{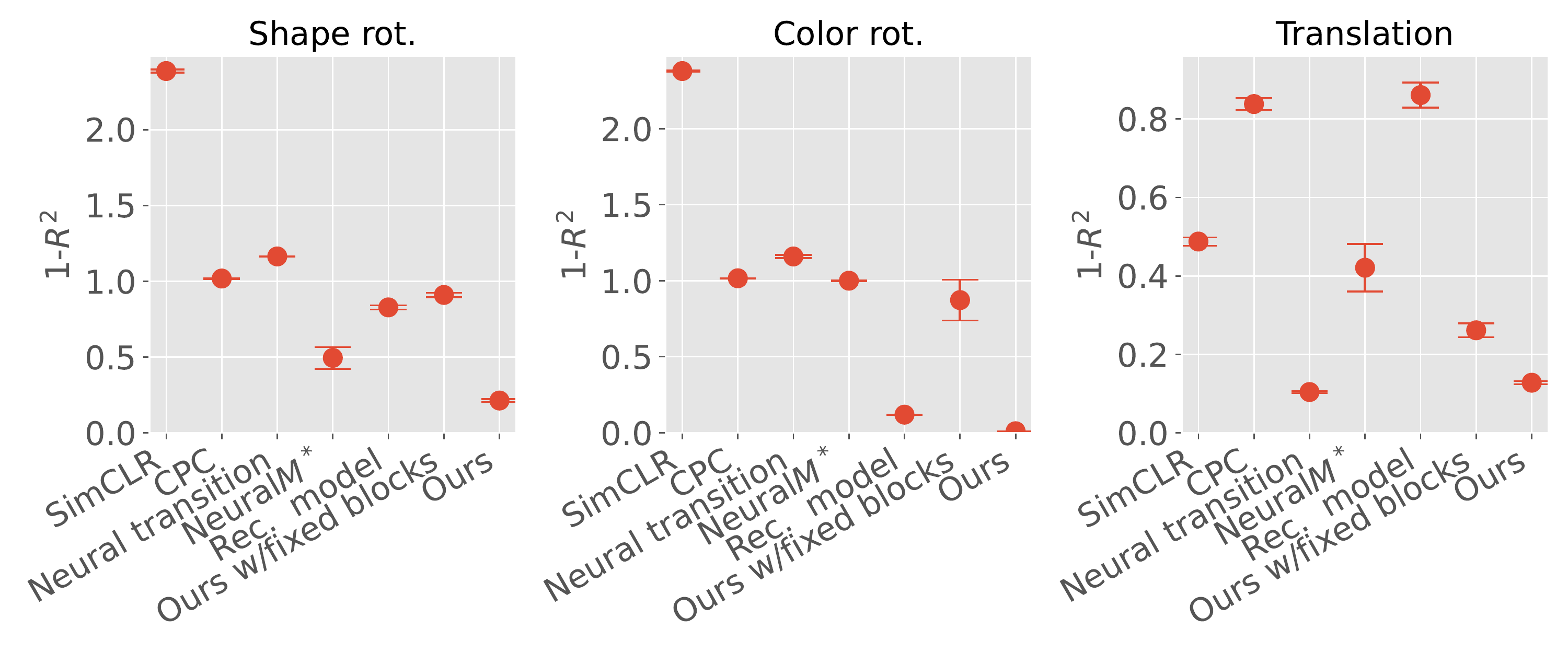}
    }
    \caption{The results of linearly regressing the true transition parameters from $\bfM^*$. For the performance evaluation, we used $1-R^2$ scores~(The value of 0 indicates the perfect prediction and 1 indicates the performance is chance level. $1-R^2 > 1$ can happen when the model significantly overfits to the training set). For the color rotation and the shape rotation, $(\cos(v), \sin(v))$ was used as the target value where $v$ is the angle velocity.
    For this experiment, we trained the models on a set of sequences generated from \textit{digit 4 class} onl, and trained/evaluated the linear regression performance on the trained models' features on a set of sequences created from all digit classes in MNIST. 
    Because SimCLR, CPC and Neural transition do not directly compute $\bfM^*$, the linear regression was computed from the concatenation of the two consecutive latent representations that were used in our method for the computation of $M^*$.  }
    \label{fig:actpred}
\end{figure}

\begin{figure}[H]
    \centering
    \includegraphics[scale=0.35]{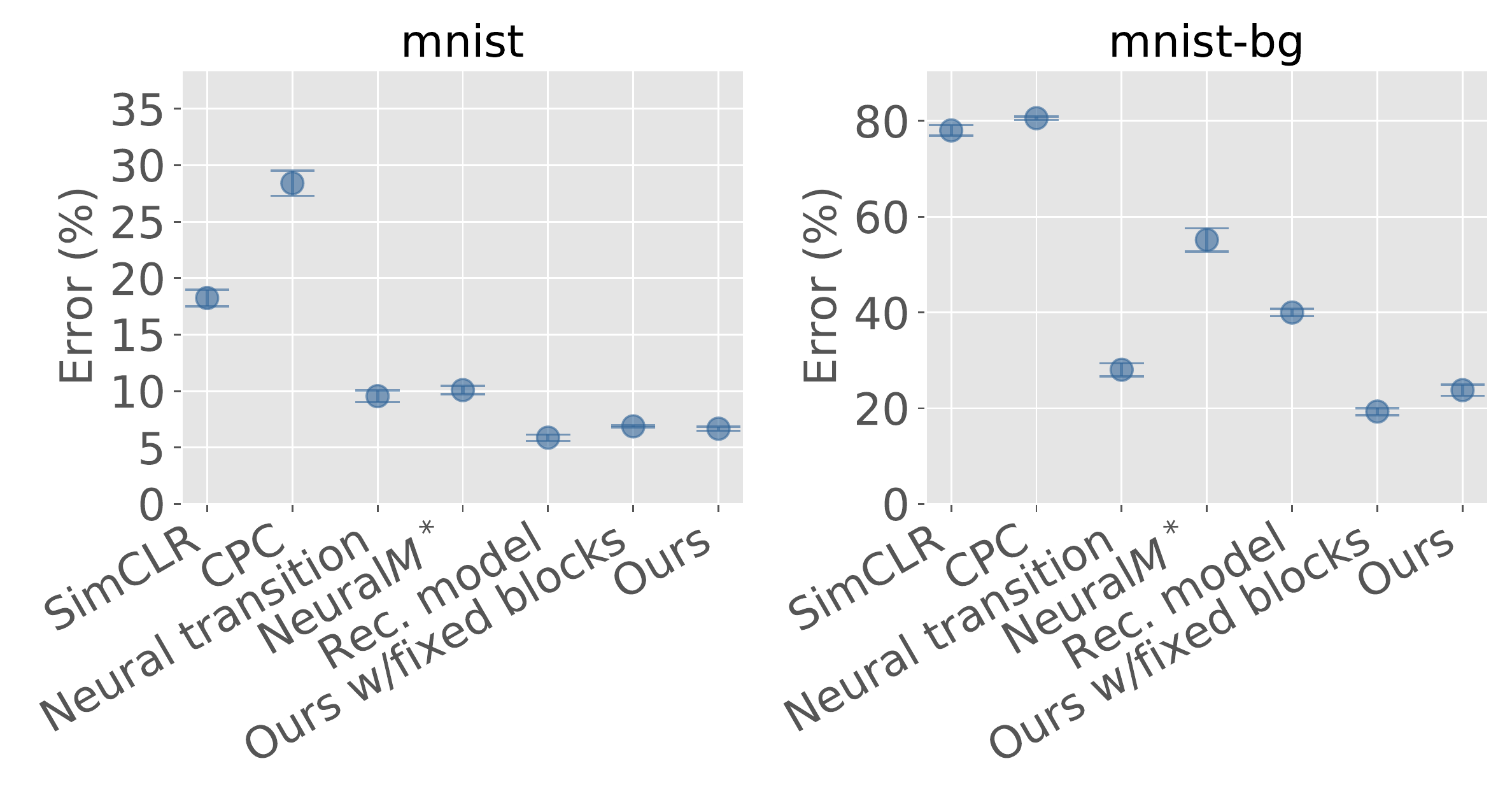}
    \caption{The results of digit classification evaluation on the sequential MNIST and MNIST-bg datasets. 
    For this experiment, we trained the models on a set of sequences generated from only \textit{digit 4 class}. We trained and evaluated the softmax classifier on the feature $\Phi(s_1)$ where $s_1$s are generated from all digit classes in MNIST.
    \label{fig:cls}}
\end{figure}

\newpage

\subsection{Equivariance performance}
Figure~\ref{fig:diffM} shows $(M^*(\bs)-M^*(\bs'))^2$ for the pairs of sequences 
that transition with same $g$ (e.g. $\bs = \bs(s_1, g),  \bs' = \bs'(s_1', g)$). 
We see that $M^*$s computed from the representation learned by our method do not differ across $\bs$ and $\bs'$.    
This can also be confirmed visually in the generated sequences as well (Figure~\ref{fig:transfer}). 
\begin{figure}[H]
    \centering
    \includegraphics[scale=0.14]{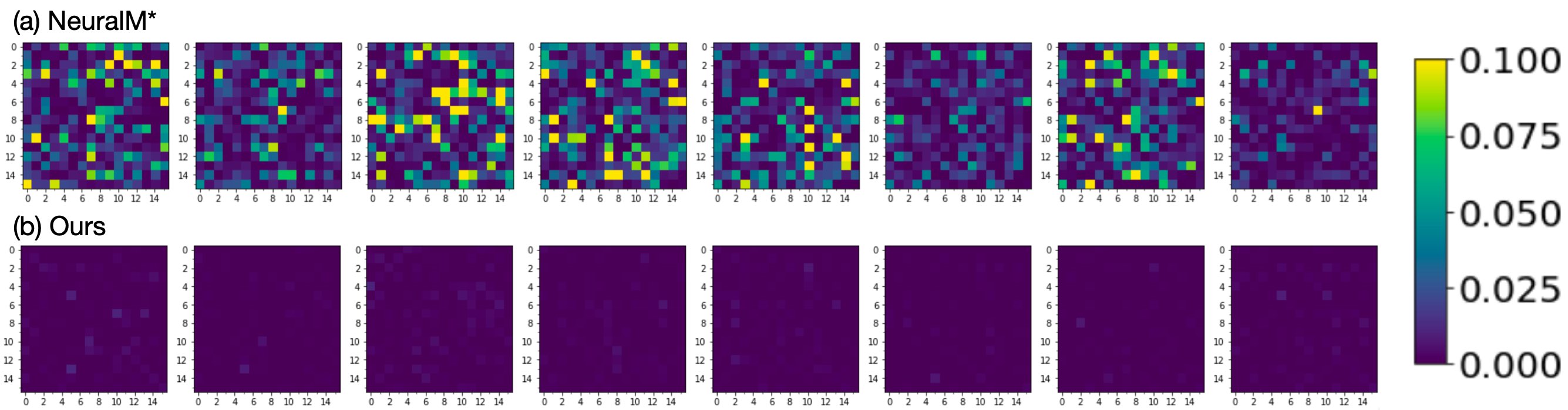}
    \caption{Visualization of $(M^*(\bs)-M^*(\bs'))^2$ where $\bs,\bs'$  that transition with the same $g$. }
    \label{fig:diffM}
\end{figure}

\begin{figure}[H]
    \centering
    \subcaptionbox{Sequential MNIST\label{fig:transfer_mnist}}[.9\linewidth]{
        \includegraphics[scale=0.22]{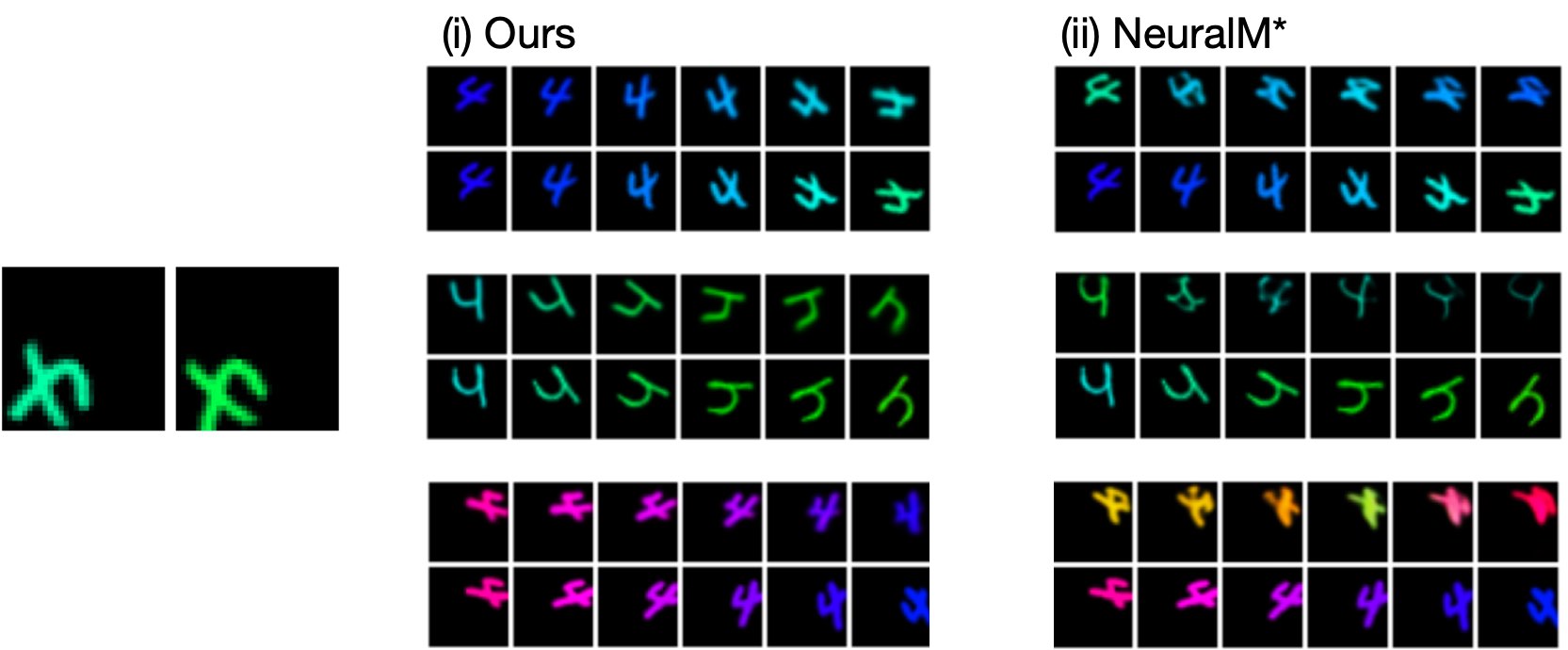}
    }
    \centering
    \subcaptionbox{Sequential MNIST-bg\label{fig:transfer_mnist-bg}}[.9\linewidth]{
        \includegraphics[scale=0.22]{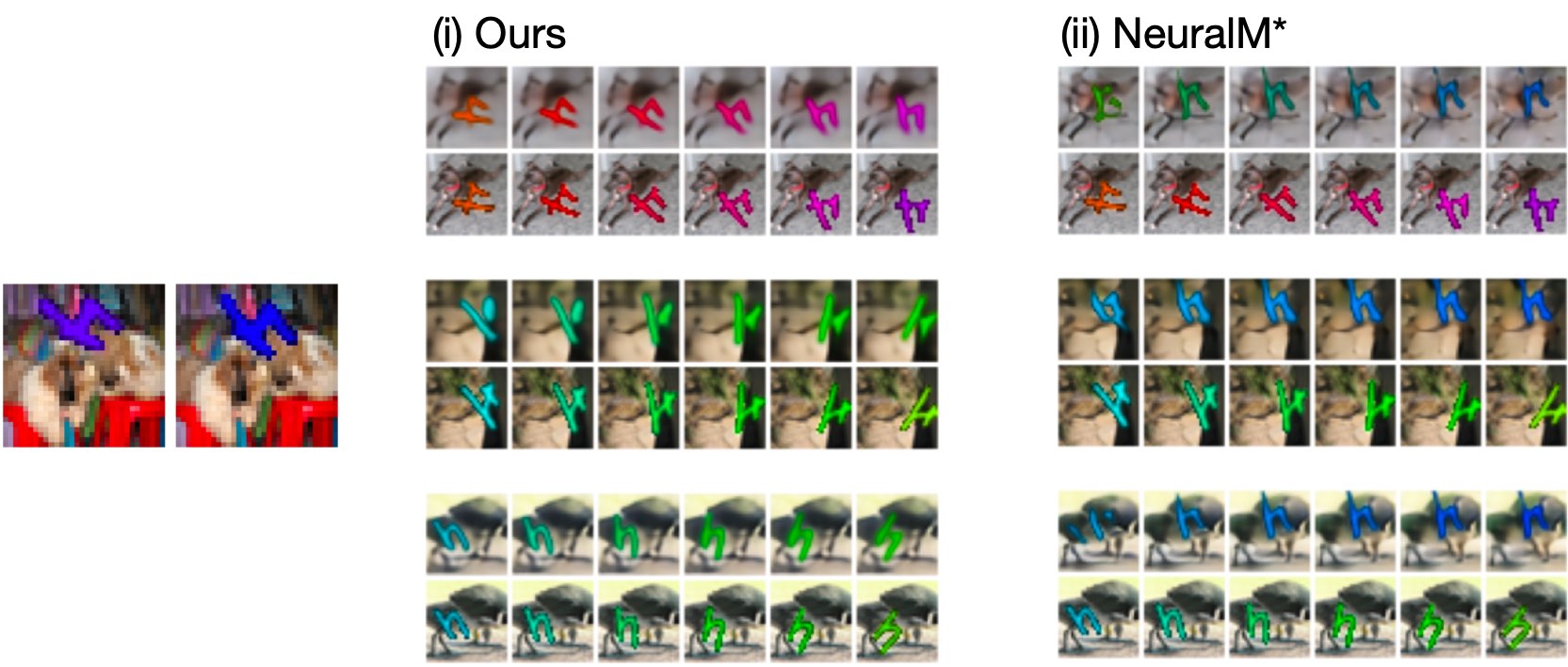}
    }
    \caption{The result of transferring $M^*$ computed from one sequence to other sequences. For both sequential MNIST and sequential MNIST-bg, $M^*$ was computed from the two consecutive images placed on the left edge of the figure. In each pair of rows shown on the right, 
    the top row corresponds to the generated sequence and the bottom row corresponds to the ground truth sequence that transitions with the same $g$ that was used to create the two consecutive images on the left.
    We see that each $M^*$ computed from our representation acts on different sequences in the same way.
    \label{fig:transfer}}
\end{figure}

\begin{figure}[H]
    \centering
    \subcaptionbox{3DShapes\label{fig:transfer_3dshapes}}[.9\linewidth]{
        \includegraphics[scale=0.22]{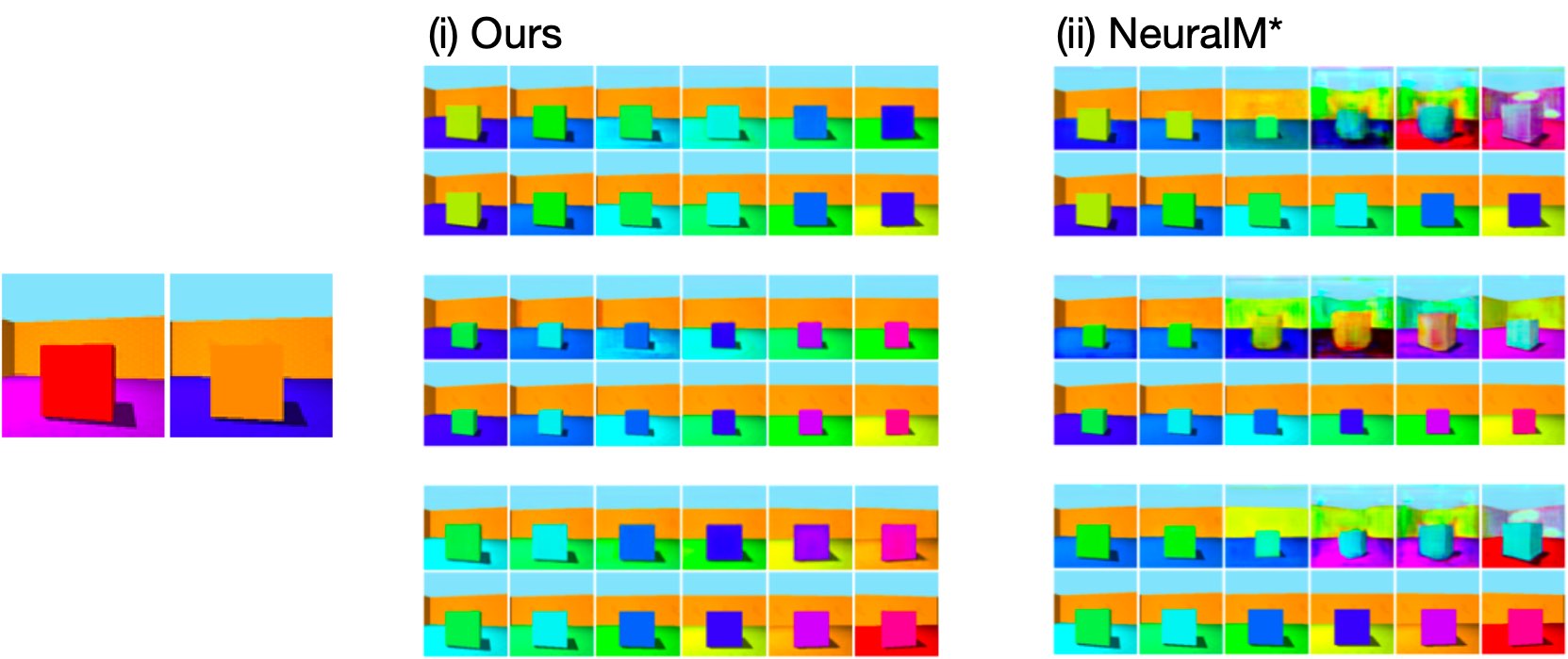}
    }
    
    \subcaptionbox{SmallNORB\label{fig:transfer_smallNORB}}[.9\linewidth]{
        \includegraphics[scale=0.18]{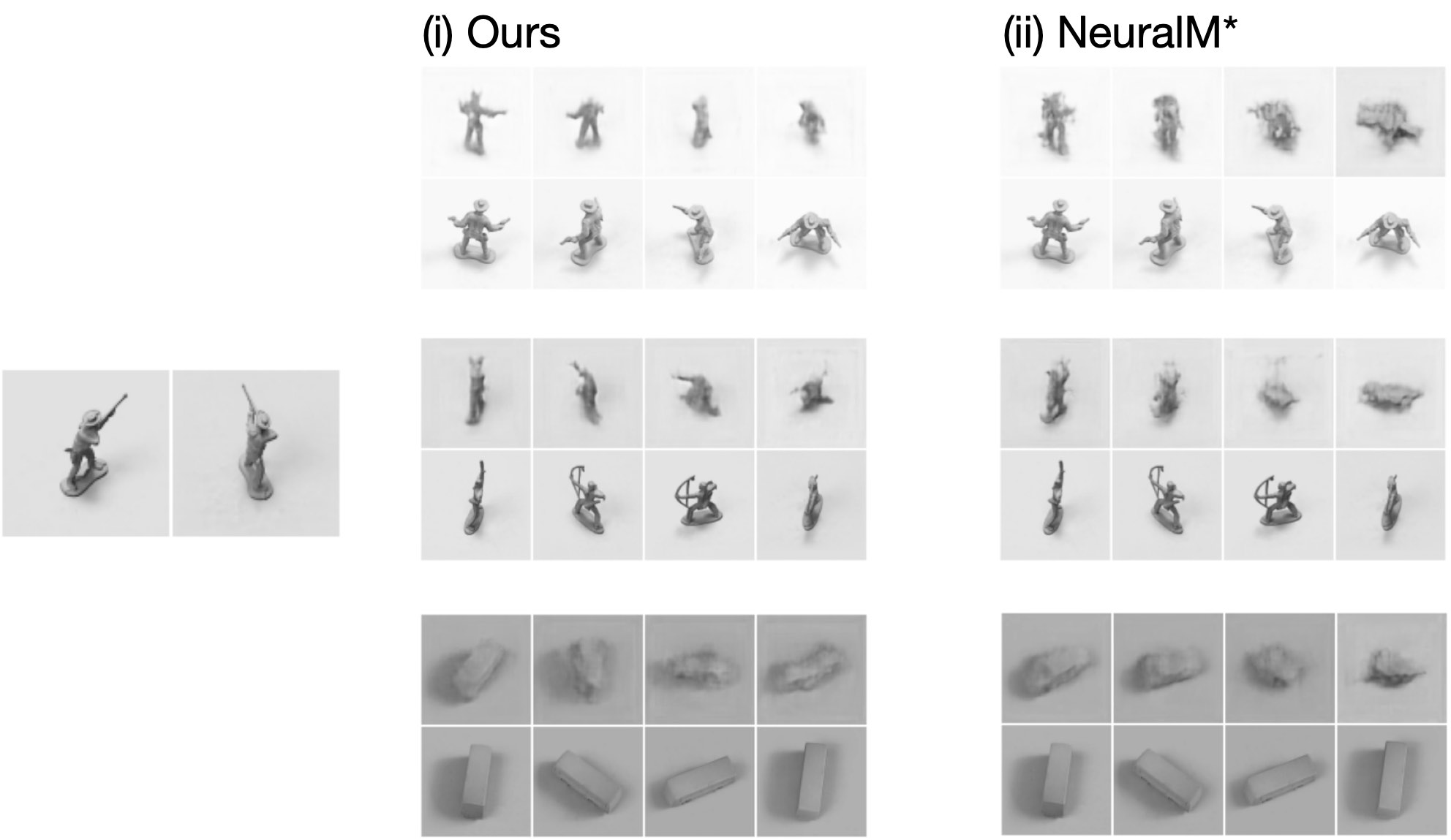}
    }
    \caption{The result of transferring $M^*$ on 3DShapes and SmallNORB. The visualization follows the same protocol as in Figure~\ref{fig:transfer}.
    \label{fig:transfer_2}}
\end{figure}

\begin{figure}[H]
    \centering
    \subcaptionbox{Sequential MNIST\label{fig:equiv_error_mnist}}[.24\linewidth]{
        \includegraphics[scale=0.22]{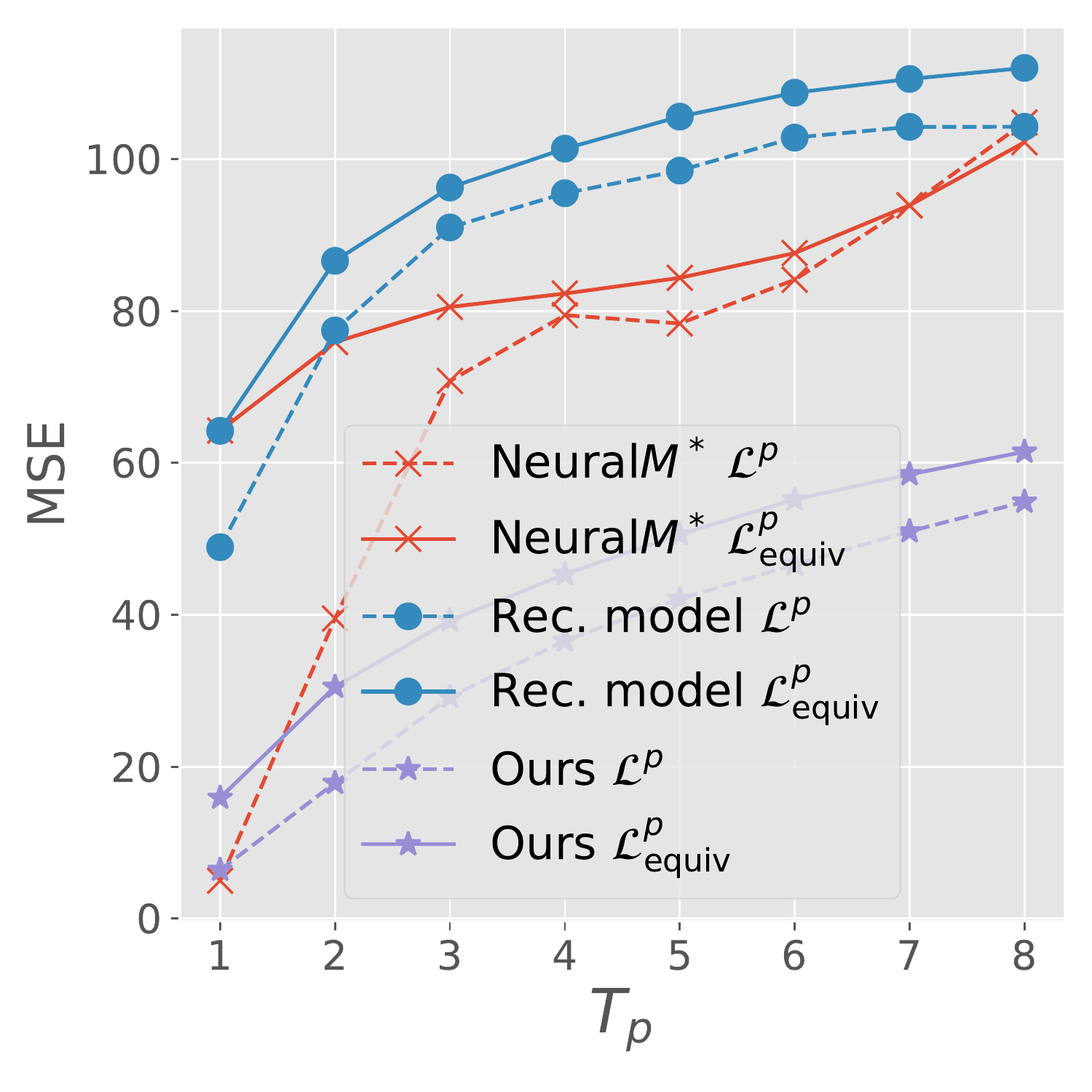}
    }
    \subcaptionbox{Sequential MNIST w/ digit 4\label{fig:equiv_error_mnist_bg}}[.24\linewidth]{
        \includegraphics[scale=0.22]{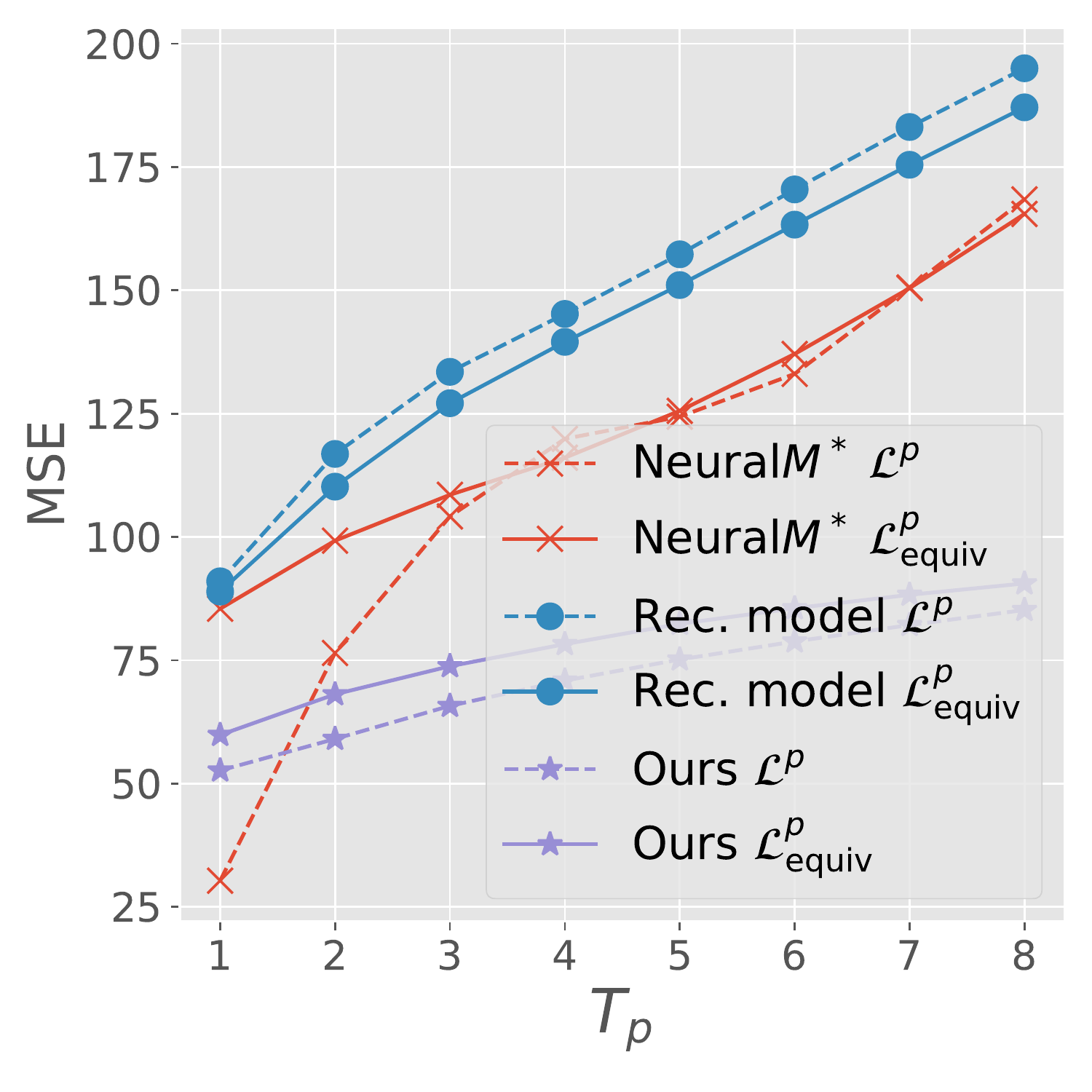}
    }
    \subcaptionbox{Sequential MNIST w/ all digits   \label{fig:equiv_error_mnist_bg_full}}[.24\linewidth]{
        \includegraphics[scale=0.22]{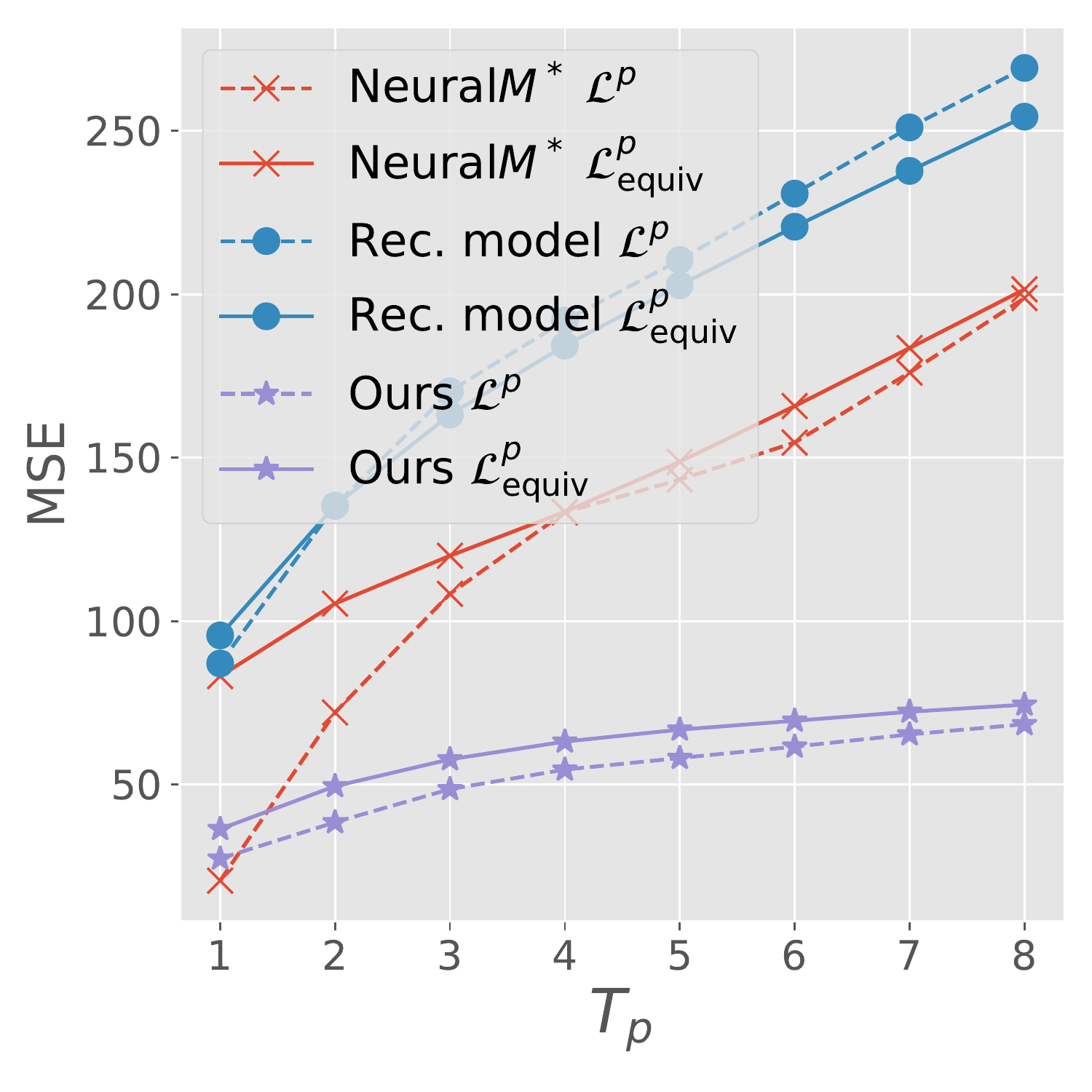}
    }
    
    \subcaptionbox{3DShapes\label{fig:equiv_error_3dshapes}}[.24\linewidth]{
        \includegraphics[scale=0.22]{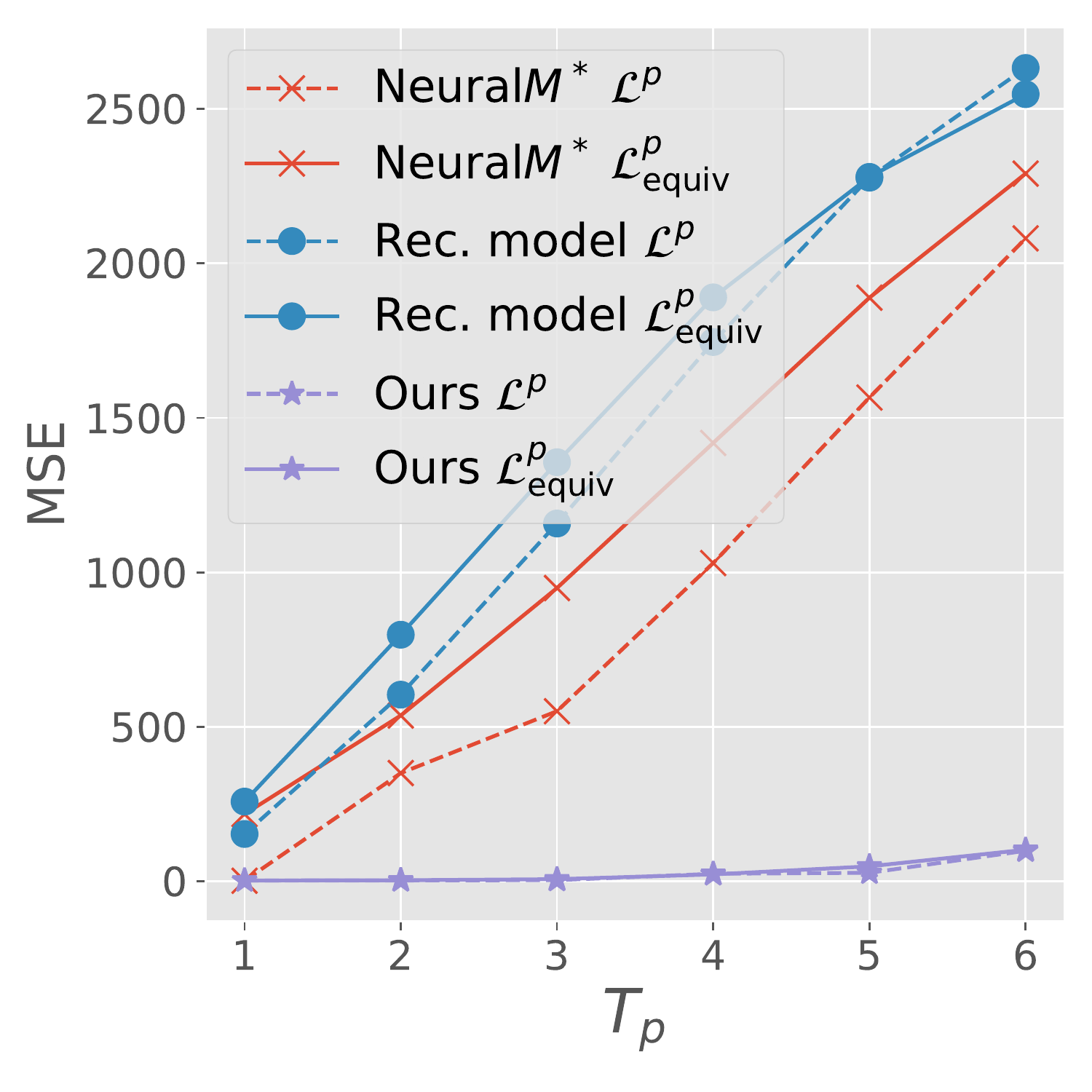}
    }
    \subcaptionbox{SmallNORB\label{fig:equiv_error_smallnorb}}[.24\linewidth]{
        \includegraphics[scale=0.22]{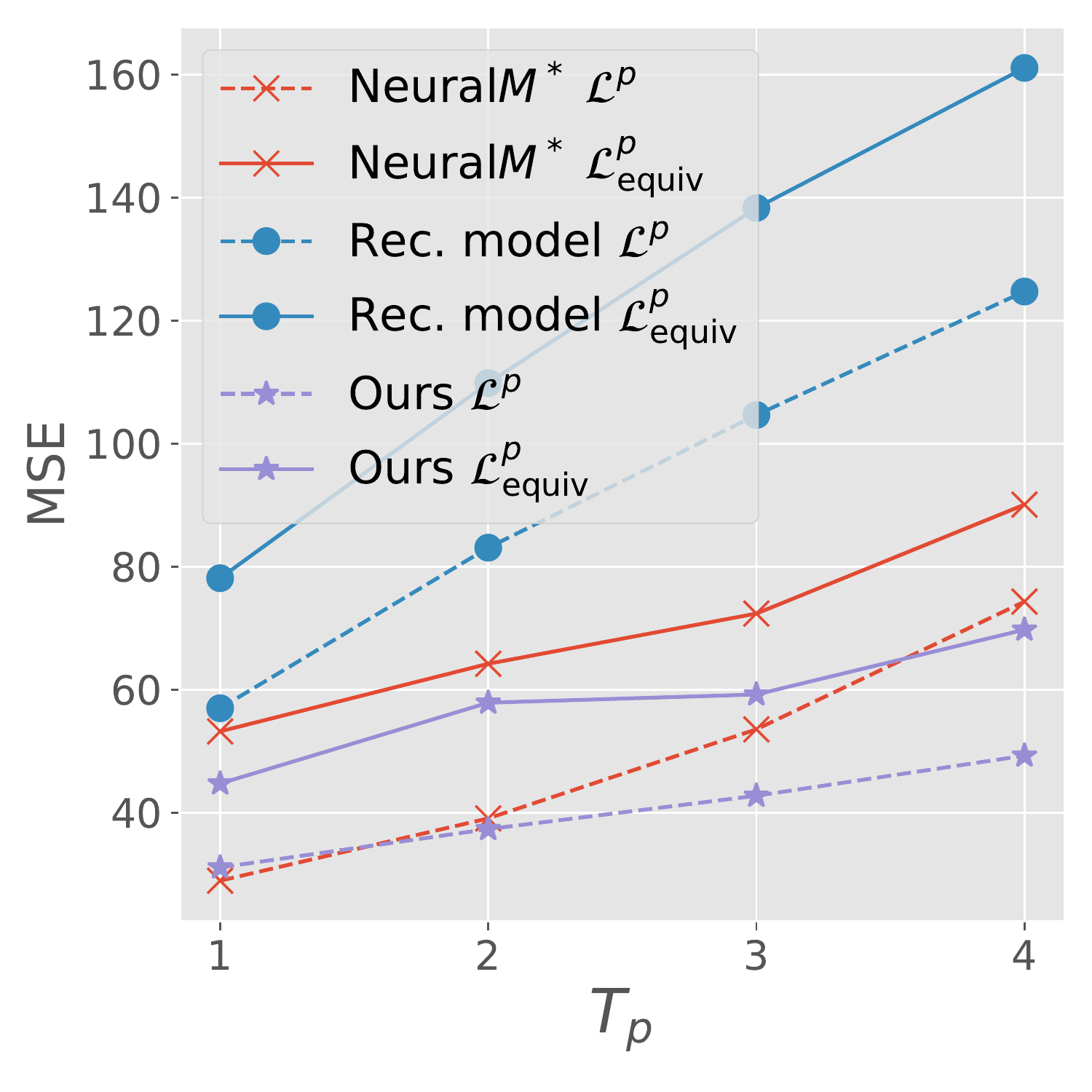}
    }
    \subcaptionbox{Accelerated Sequential MNIST   \label{fig:equiv_error_mnist_accl}}[.24\linewidth]{
        \includegraphics[scale=0.22]{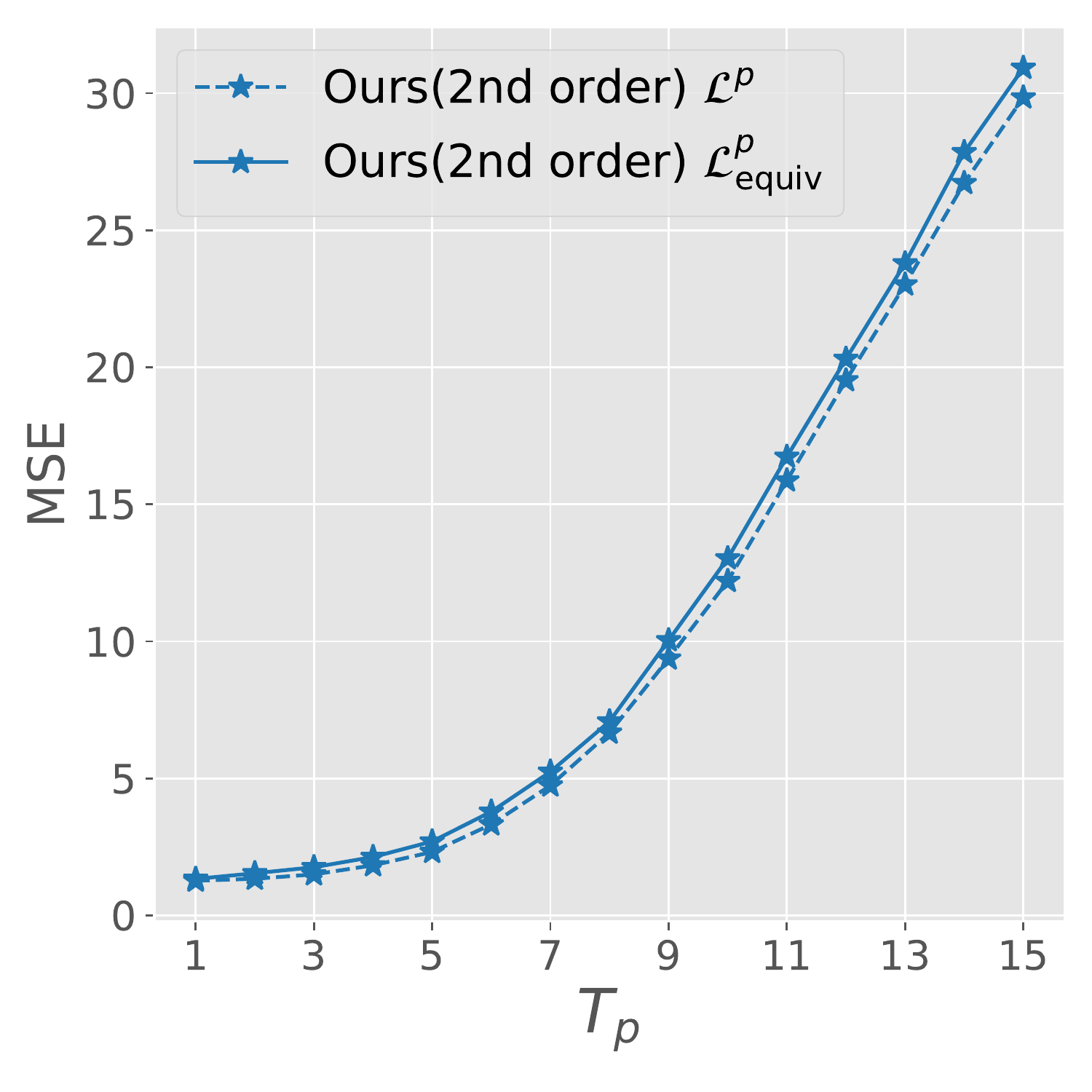}
    }
    \caption{Comparison of the prediction errrors and equivariance errors at $\Tp \geq 1$.
    \label{fig:equiv_error}}
\end{figure}

\begin{figure}[H]
    \centering
    \centering
    \subcaptionbox{Equivariance performance on sequential MNIST and MNIST-bg w/ digit 4 \label{tab:equiv_error_mnist}}[1
    \linewidth]{
    \begin{tabular}{lrrrr}
    \toprule
         &\multicolumn{2}{c}{Seq. MNIST} & \multicolumn{2}{c}{Seq. MNIST-bg (w/ digit 4)}  \\  
    \cmidrule{2-3} \cmidrule{4-5} 
    Method&  \multicolumn{1}{c}{$\mathcal{L}^p$}  & \multicolumn{1}{c}{$\mathcal{L}_{\rm equiv}^p$} &  \multicolumn{1}{c}{$\mathcal{L}^p$}  & \multicolumn{1}{c}{$\mathcal{L}_{\rm equiv}^p$}  \\
    \midrule 
    Rec. Model      &48.91$\pm$4.47 & 64.22$\pm$5.69 & 91.02$\pm$2.22 & 88.93$\pm$2.89   \\
    Neural$M^*$     &4.99$\pm$0.87& 64.25$\pm$2.59  & 30.32$\pm$0.36 & 85.46$\pm$2.66  \\
    MSP (Ours)      &6.42$\pm$0.21& \textbf{15.91}$\pm$0.49 & 52.67$\pm$0.86 & \textbf{59.87}$\pm$1.37  \\  
    \bottomrule
    \end{tabular}}%
    
    \subcaptionbox{Equivariance performance on sequential MNIST-bg  w/ all digits and 3DShapes\label{tab:equiv_error_mnist_and_3dshapes}}[1
    \linewidth]{
    \begin{tabular}{lrrrr}
    \toprule
         & \multicolumn{2}{c}{Seq. MNIST-bg (w/ all digits)} & \multicolumn{2}{c}{3DShapes}  \\  
    \cmidrule{2-3} \cmidrule{4-5}
    Method&  \multicolumn{1}{c}{$\mathcal{L}^p$}  & \multicolumn{1}{c}{$\mathcal{L}_{\rm equiv}^p$} &  \multicolumn{1}{c}{$\mathcal{L}^p$}  & \multicolumn{1}{c}{$\mathcal{L}_{\rm equiv}^p$}  \\
    \midrule 
    Rec. Model     & 87.05$\pm$3.32 & 95.66$\pm$7.71 &153.39$\pm$24.1& 258.20$\pm$25.8 \\
    Neural$M^*$   & 20.60$\pm$0.25 & 83.18$\pm$2.50  &2.09$\pm$0.12& 217.73$\pm$46.7\\
    MSP (Ours)     & 27.38$\pm$0.14& \textbf{36.42}$\pm$0.08 &2.75$\pm$0.25& \textbf{2.87}$\pm$0.30\\  
    \bottomrule
    \end{tabular}}%
    
    \centering
    \subcaptionbox{Equivariance performance on SmallNORB and accelerated sequential MNIST\label{tab:equiv_error_small_norb}}[1
    \linewidth]{
    \begin{tabular}{lrrrr}
    \toprule
         & \multicolumn{2}{c}{SmallNORB} &\multicolumn{2}{c}{Accelerated Seq. MNIST}\\  
    \cmidrule{2-5}       
    Method&  \multicolumn{1}{c}{$\mathcal{L}^p$}  & \multicolumn{1}{c}{$\mathcal{L}_{\rm equiv}^p$} &  \multicolumn{1}{c}{$\mathcal{L}^p$}  & \multicolumn{1}{c}{$\mathcal{L}_{\rm equiv}^p$}  \\
    \midrule 
    Rec. Model  &57.01$\pm$2.69 &78.14$\pm$4.42 \\
    Neural$M^*$  &28.98$\pm$1.25&53.24$\pm$0.64\\
    MSP (Ours)   & 31.14$\pm$0.52 & \textbf{44.77}$\pm$0.38 & 1.27$\pm$ 0.02 & 1.34 $\pm$ 0.03\\  
    \bottomrule
\end{tabular}}%
    \caption{More detailed version of Fig~\ref{fig:equiv_error} with standard deviation values.  
    The statistics in this figure were calculated over three models initialized with different random seeds. 
    For the definition of $\mathcal{L}^p$ and $\mathcal{L}^p_{\rm equiv}$, see \eqref{eq:pred_loss} and \eqref{eq:equiv_loss}.}
    \label{fig:equiv_m_and_std}
\end{figure}
\newpage

\subsection{More results on simultaneous block-diagonalization}

\begin{figure}[H]
    \centering
    \subcaptionbox{Sequential MNIST\label{fig:sbd_mnist}}[.49\linewidth]{
        \includegraphics[scale=0.05]{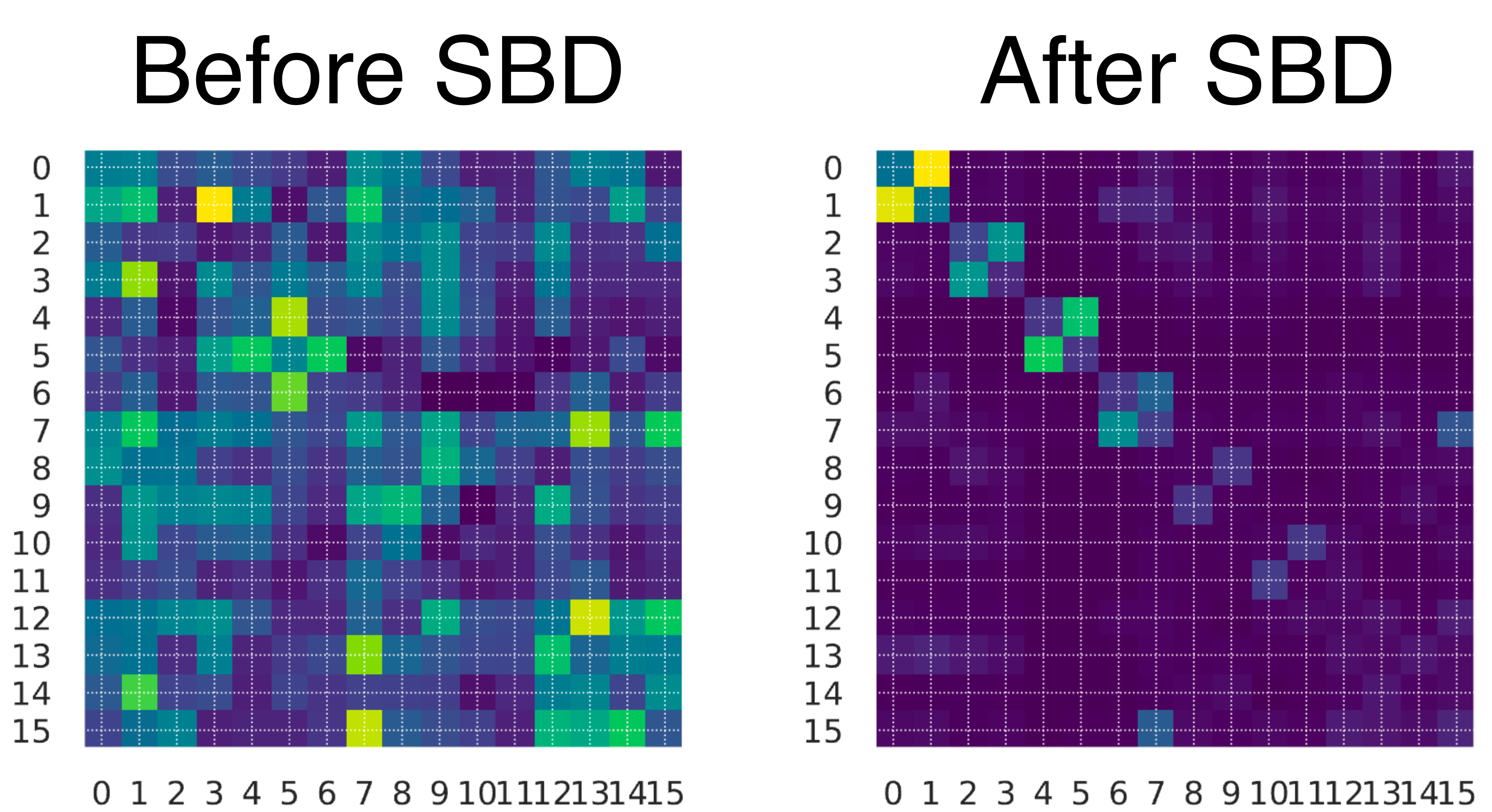}
    }
    \subcaptionbox{Sequential MNIST-bg w/ digit 4\label{fig:sbd_mnist-bg}}[.49\linewidth]{
        \includegraphics[scale=0.05]{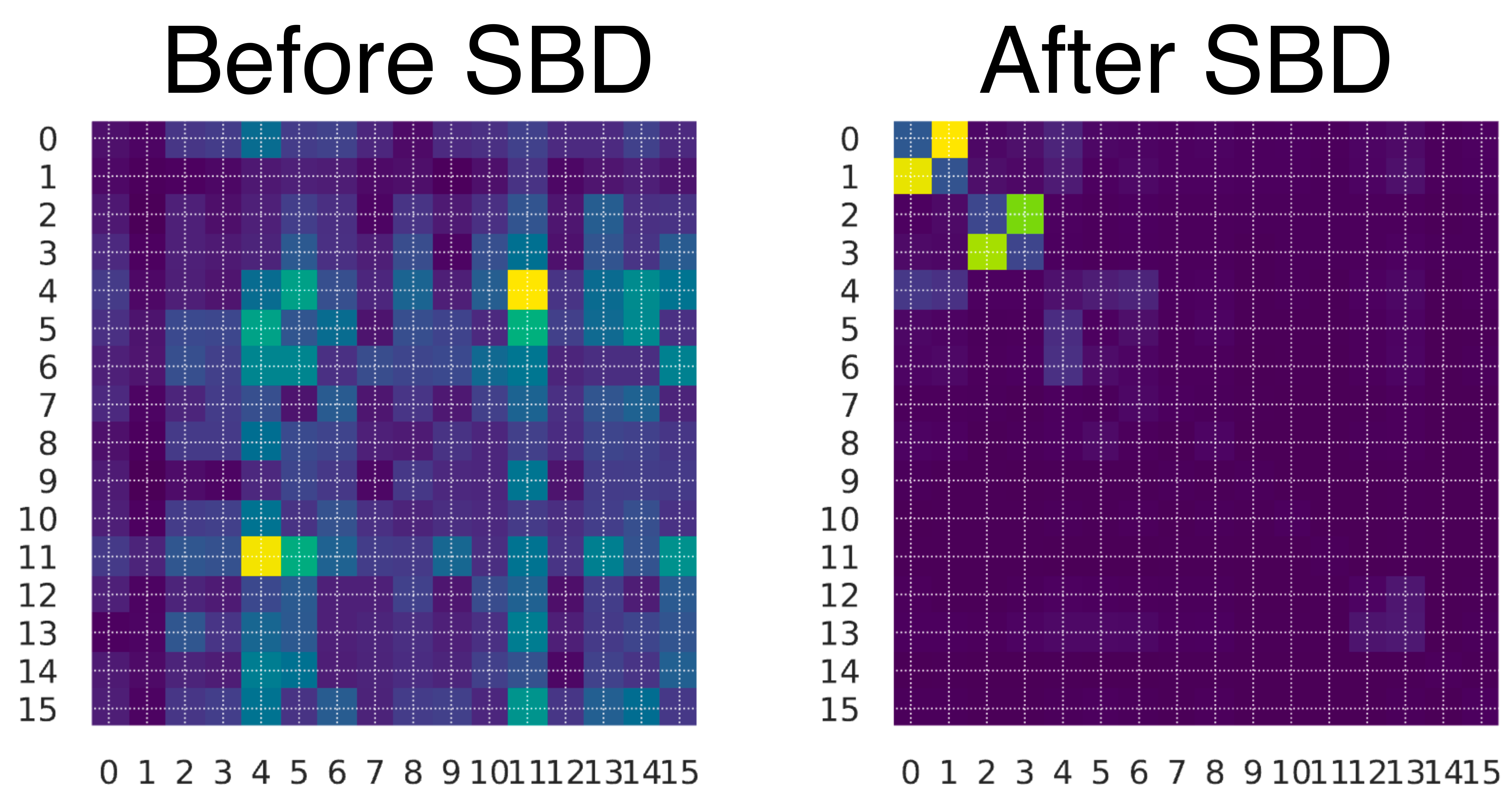}
    }
    \subcaptionbox{Sequential MNIST-bg w/ all digits \label{fig:sbd_mnist-bg-full}}[.49\linewidth]{
        \includegraphics[scale=0.05]{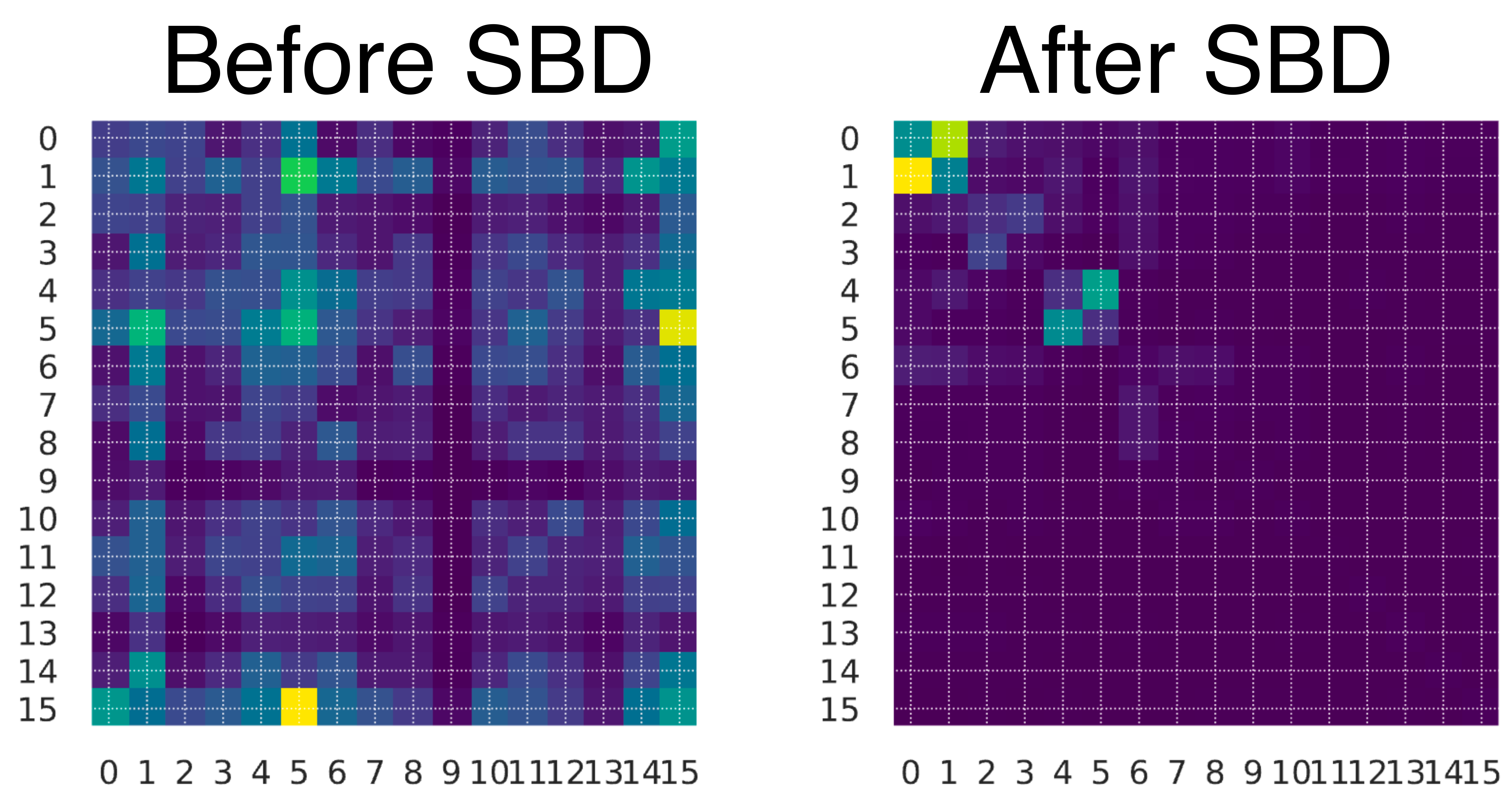}
    }
    \subcaptionbox{SmallNORB\label{fig:sbd_smallNORB}}[.49\linewidth]{
        \includegraphics[scale=0.05]{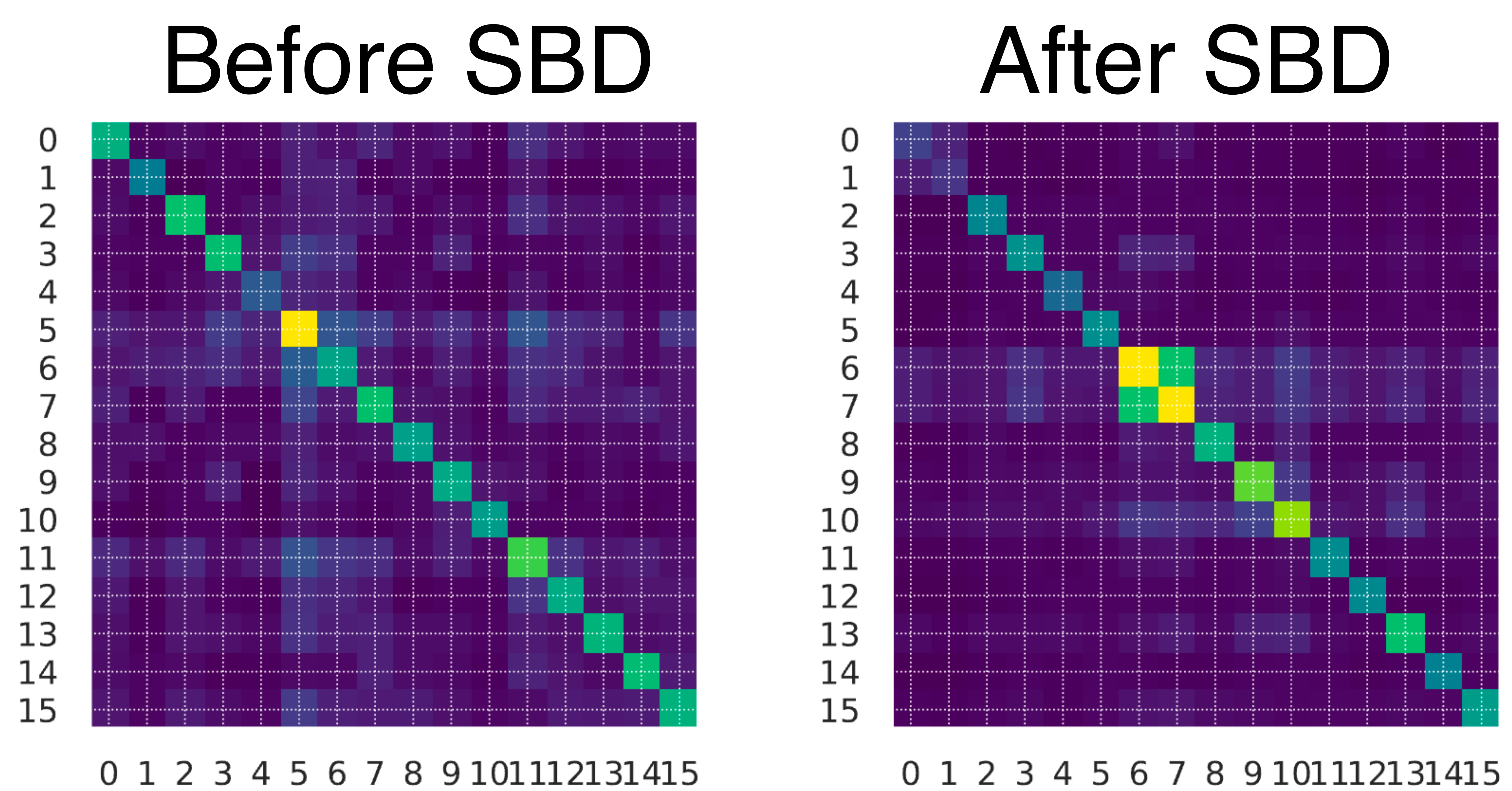}
    }
    \caption{The visualization of simultaneously block-diagonalized (SBD) matrices for Sequential MNIST/MNIST-bg and SmallNORB datasets. 
    As in Figure \ref{fig:sbd}, our visualizations correspond to  $abs(\bfM^*-I)$ and $abs(\irrepM^*-I)$ instead of the raw matrices ($\irrepM^*$ is the block-diagonalized version of  $\bfM^*$. See Section~\ref{sec:sbd} and  Section ~\ref{sec:full_blockdiag}).   
    \label{fig:sbd_full}
    }
\end{figure}

Figure~\ref{fig:sbd_full} is the visualization of the block structures revealed by the simultaneous block-diagonalization on Sequential MNIST/MNIST-bg and SmallNORB. The detail of the block-diagonalization method is provided in Section ~\ref{sec:sbd} and Section ~\ref{sec:full_blockdiag}.

To investigate what type of transformations these blocks correspond to, 
we studied the effect of using just one particular set of blocks in the block diagonalized transition matrix
(Figure~\ref{fig:steer_full}).
To create the transformation of \textit{one particular set of blocks}, we modified the block-diagonalized $M^*$ by setting all block positions other than the target blocks to identity. 
We can visually confirm that disentanglement is achieved by the partition of block positions.
See the figure captions for more details.
\newpage

\begin{figure}[H]
    \centering
    \subcaptionbox{Sequential MNIST\label{fig:steer_mnist}}[.49\linewidth]{
        \includegraphics[scale=0.045]{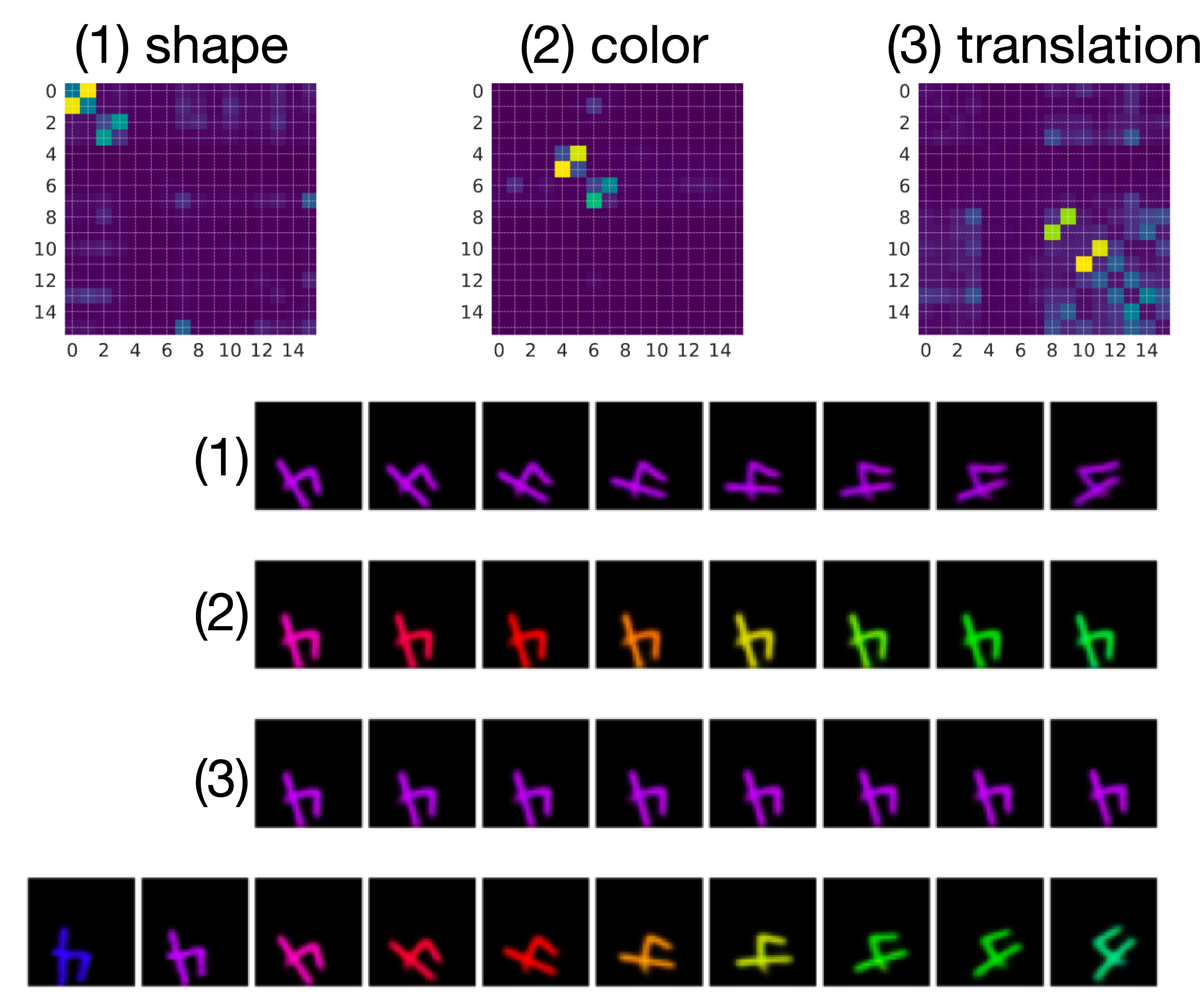}
    }
    \subcaptionbox{Sequential MNIST-bg w/ digit 4\label{fig:steer_mnist-bg}}[.49\linewidth]{
        \includegraphics[scale=0.05]{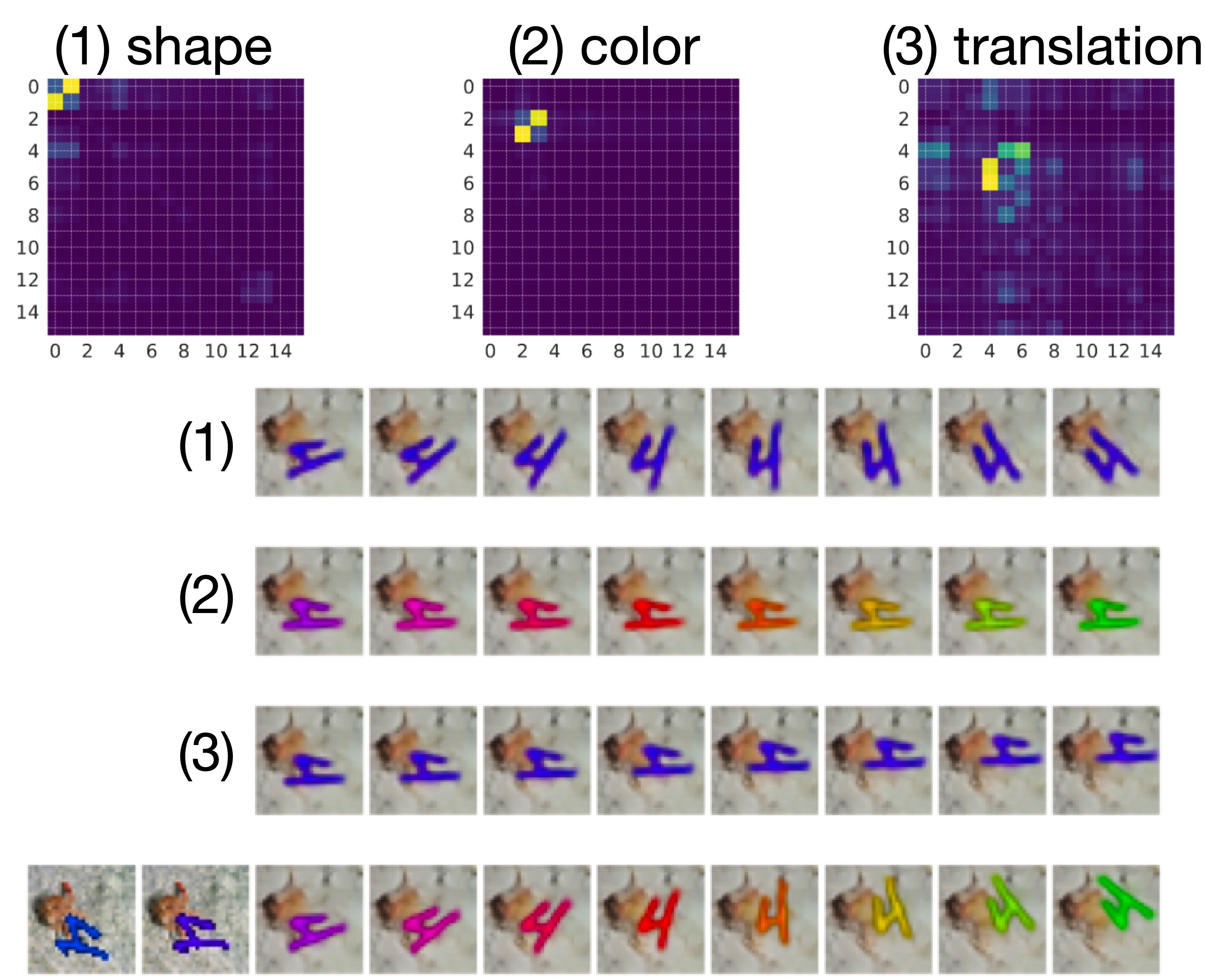}
    }
    \subcaptionbox{Sequential MNIST-bg w/ all digits \label{fig:steer_mnist-bg-full}}[.49\linewidth]{
        \includegraphics[scale=0.045]{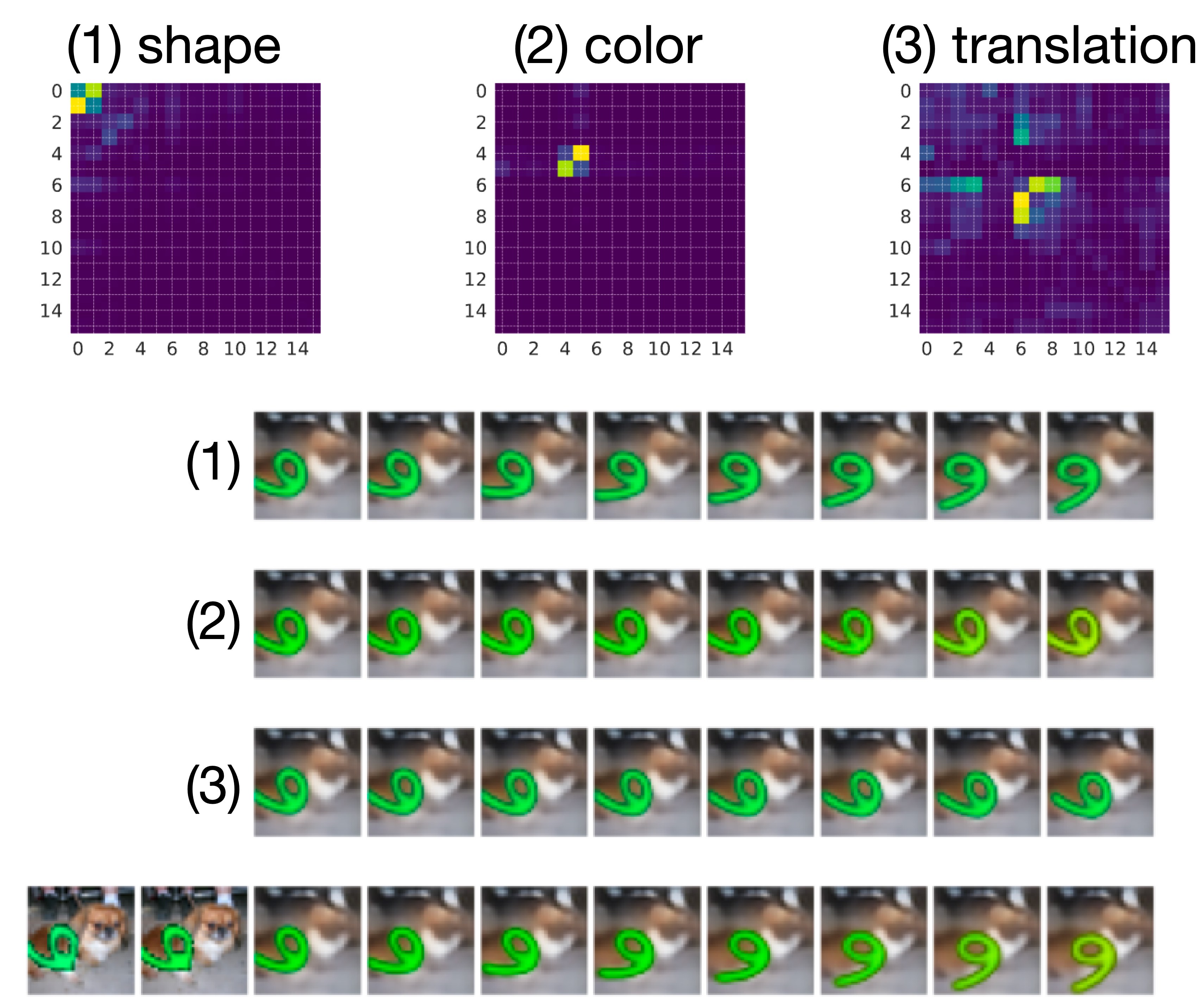}
    }
    \subcaptionbox{SmallNORB\label{fig:steer_smallNORB}}[.49\linewidth]{
        \includegraphics[scale=0.05]{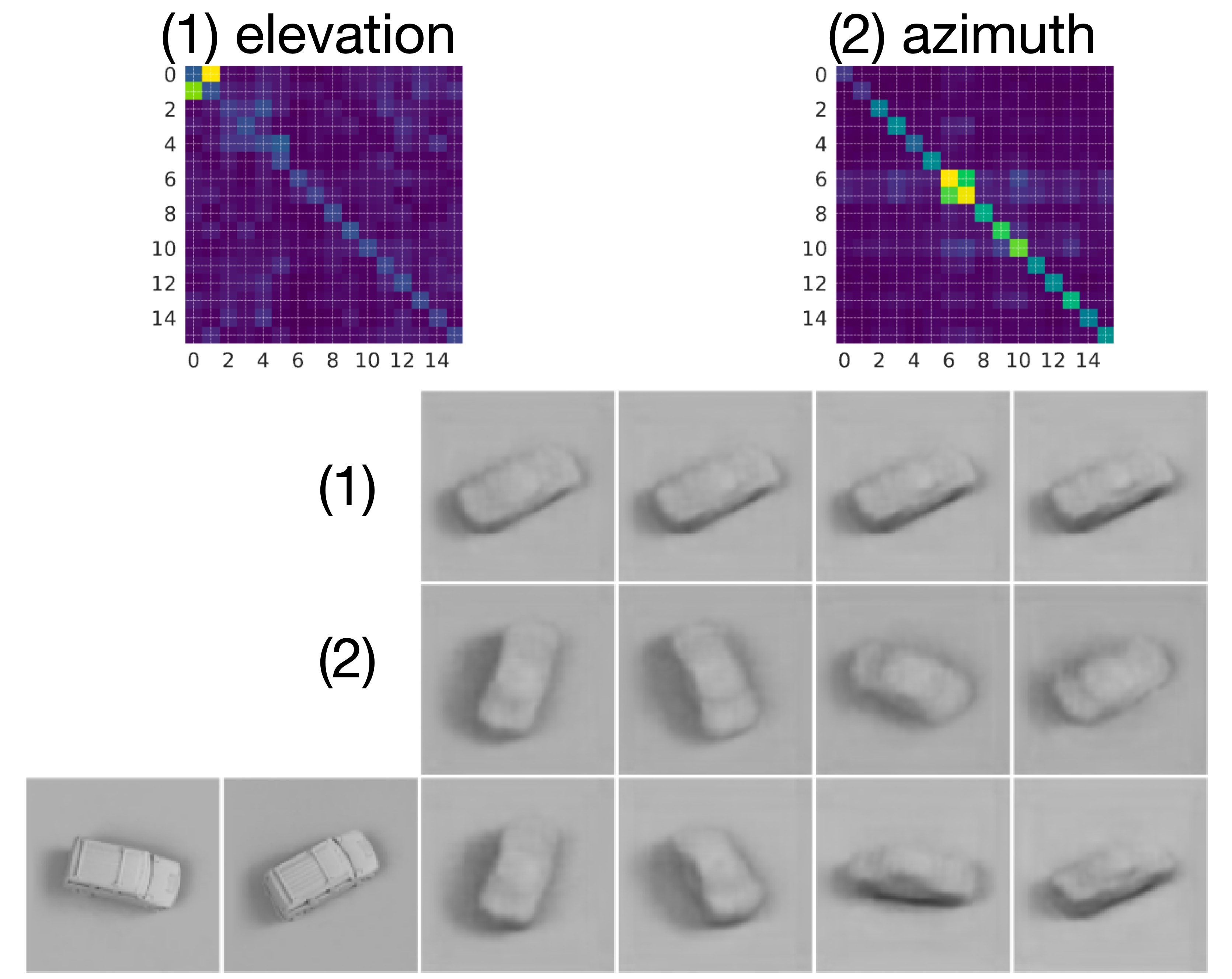}
    }
    \caption{
    Generation of disentangled  sequences.
    The bottom sequence in each frame of this figure is the ground truth. We generated each one of (1), (2) (and (3)) by applying the transformation corresponding to only one particular set of the blocks. 
To create each sequence, we first computed $M^*$ from the first two time-steps($t=1, t=2$) in the ground truth, and block-diagonalized $M^*$ to obtain $\hat M^*$. 
We then created the transformation corresponding to only \textit{one particular set of blocks} by setting all the block positions of $\hat M^*$ other than the target blocks to identity. 
We then applied the powers of the one-block-set transformation to the image at $t=2$ to generate the disentangled future sequence.
The assignment of block positions to disentangled factors was found manually by looking at the activated blocks when we altered one factor in the ground truth sequences.
See Table \ref{tab:block_pos} for the correspondence between block positions and disentangled factors.
We can visually confirm that disentanglement is achieved through block partitions.
    \label{fig:steer_full}}
\end{figure}

\begin{table}[H]
    \centering
    \begin{tabular}{cc}
    \toprule
    dataset & The \textit{factor}-\textit{block position}     correspondence \\
    \midrule

         Sequential MNIST& (1) $\{0,1,2,3\}$, (2) $\{4,5,6,7\}$, (2) $\{8,9,10,11\}$  \\
         Sequential MNIST-bg w/ digit 4& (1) $\{0,1\}$, (2) $\{2,3\}$, (3) $\{4,5,6,7,8\}$\\
        Sequential MNIST-bg w/ all digits & (1) $\{0,1,2,3\}$, (2) $\{4,5\}$, (3) $\{6,7,8\}$,\\
         \multirow{2}{*}{3DShapes} & (1) $\{0,1\}$, (2) $\{2,3,4,5\}$, (3) $\{6,7\}$, \\ 
                         & (4) $\{8,9,10,11\}$, (5) $\{12, 13,14,15\}$\\
         SmallNORB & (1) $\{0,1,2,3,4,5\}$, (2) $\{6,7,8,9,10,11,12,13\}$\\
    \bottomrule
    \end{tabular}
    \caption{
    The correspondence between block positions and disentangled factors in 
    simultaneous block-diagonalization. 
    For each $i$,  " $(i) \{a_1,  a_2, ... a_m\}$"  means that the $i$-th disentangled factor has coordinates $\{a_1,  a_2, ... a_m\}$. 
    For example, the block that is positioned at coordinates $\{4,5\}$ changes the second disentangled factor~(shape rotation) in Sequential MNIST.
    }
    \label{tab:block_pos}
\end{table}

\newpage

\subsection{Orthgonality of $M^*$ during training}
\begin{figure}[H]
    \centering
    \subcaptionbox{Sequential MNIST\label{fig:orth_mnist}}[.49\linewidth]{
        \includegraphics[scale=0.44]{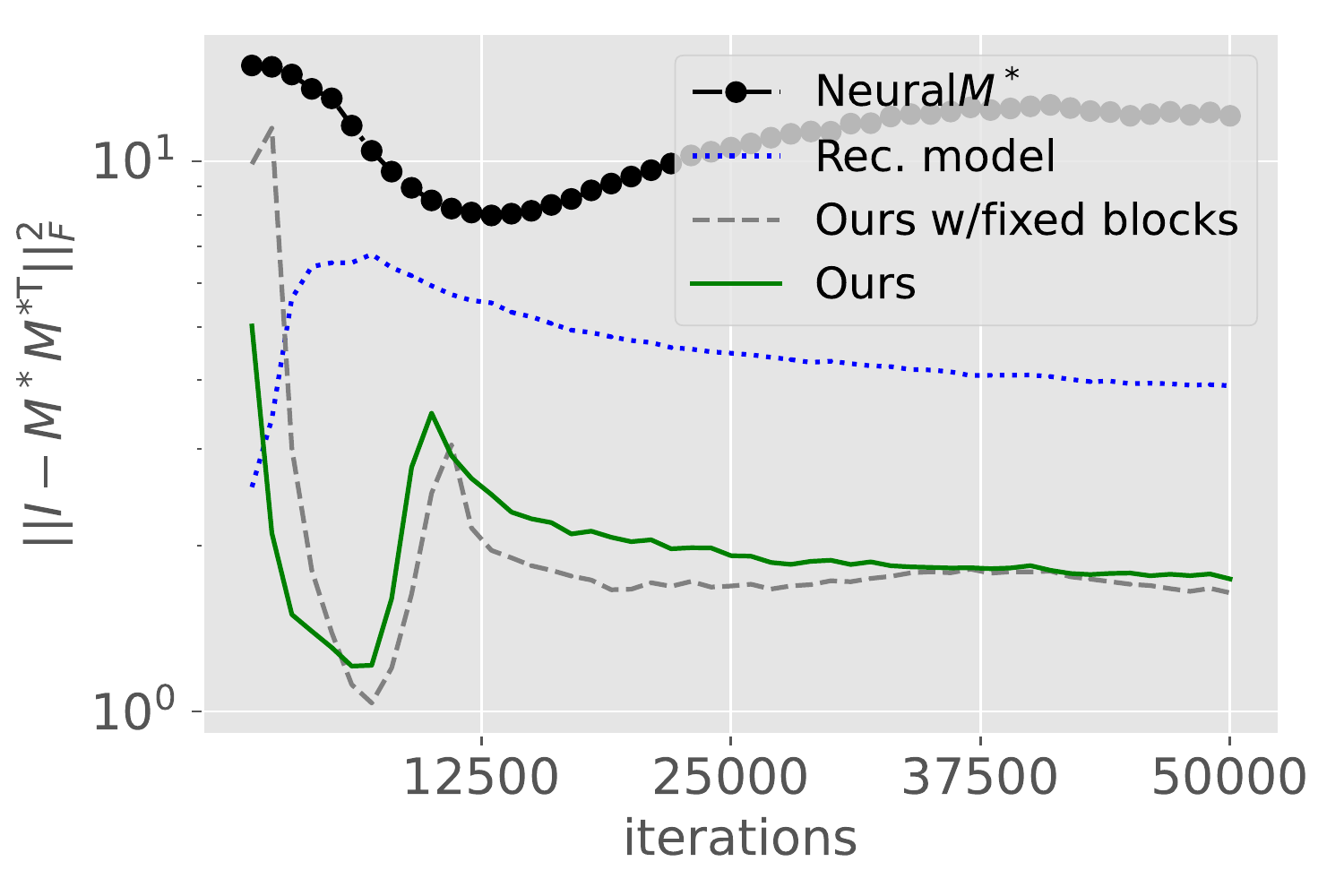}
    }
    \subcaptionbox{Sequential MNIST-bg\label{fig:orth_mnist_bg}}[.49\linewidth]{
        \includegraphics[scale=0.44]{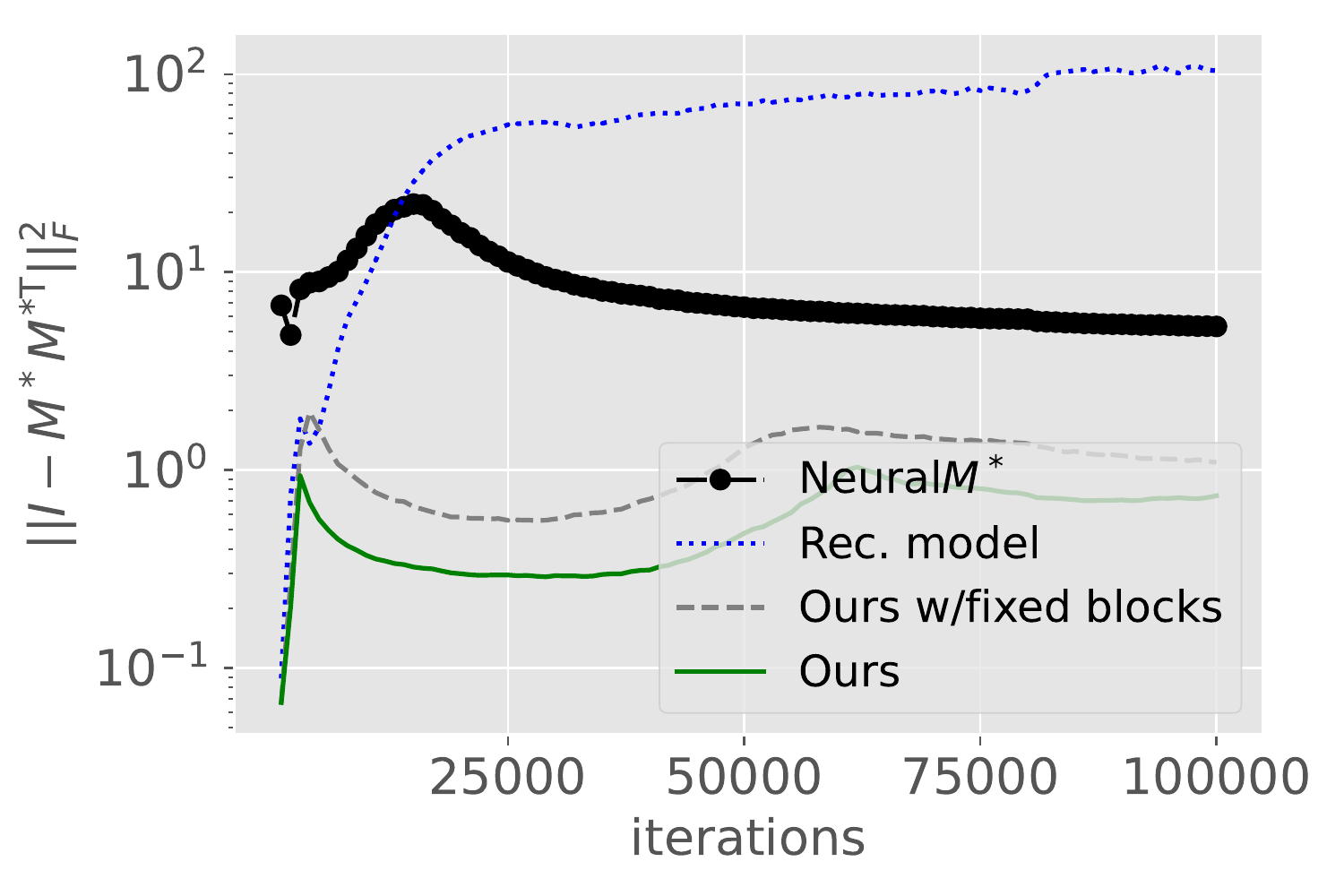}
    }
    
    \subcaptionbox{3DShapes\label{fig:orth_3dshapes}}[.49\linewidth]{
        \includegraphics[scale=0.44]{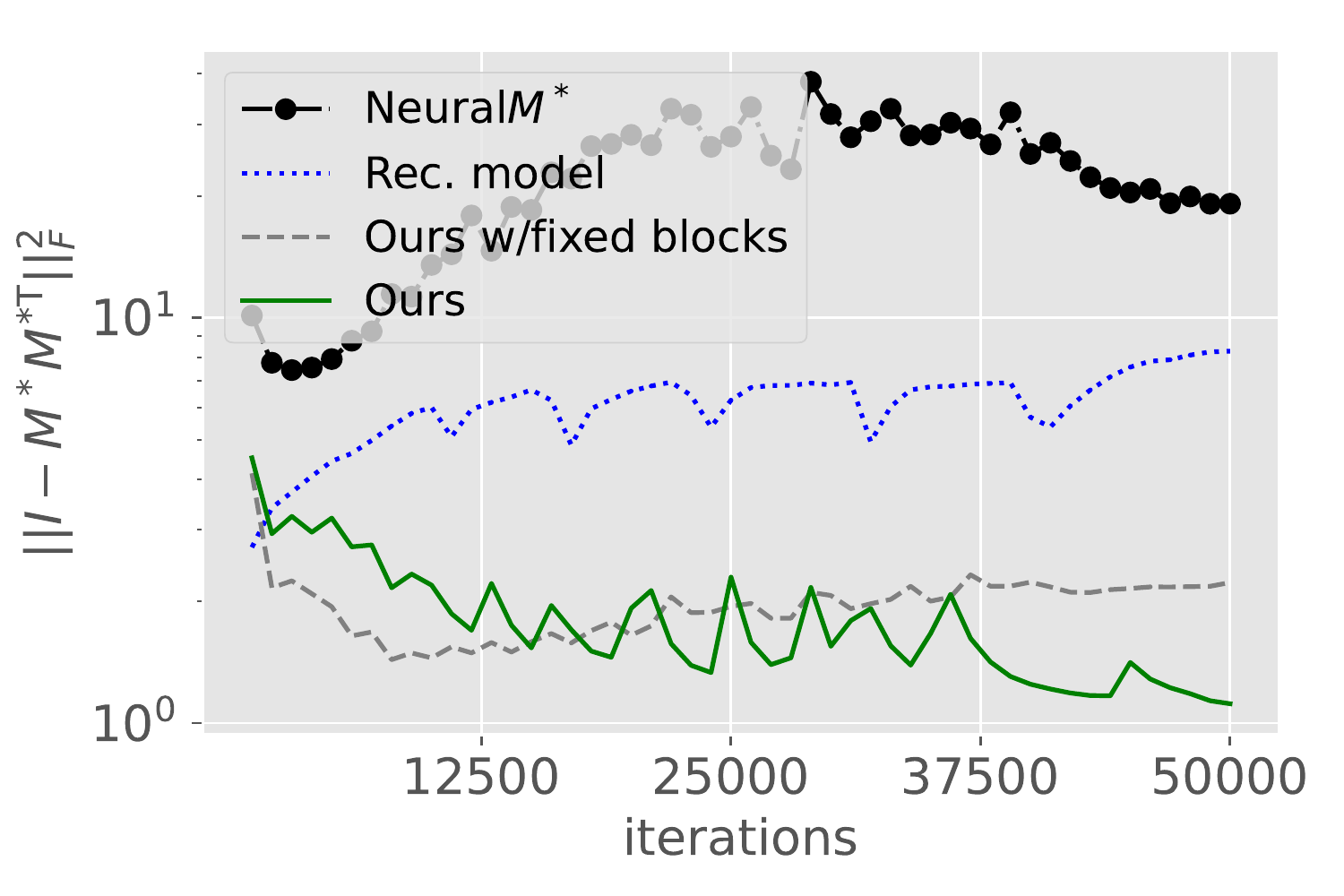}
\    }
    \subcaptionbox{SmallNORB\label{fig:orth_smallNORB}}[.49\linewidth]{
        \includegraphics[scale=0.44]{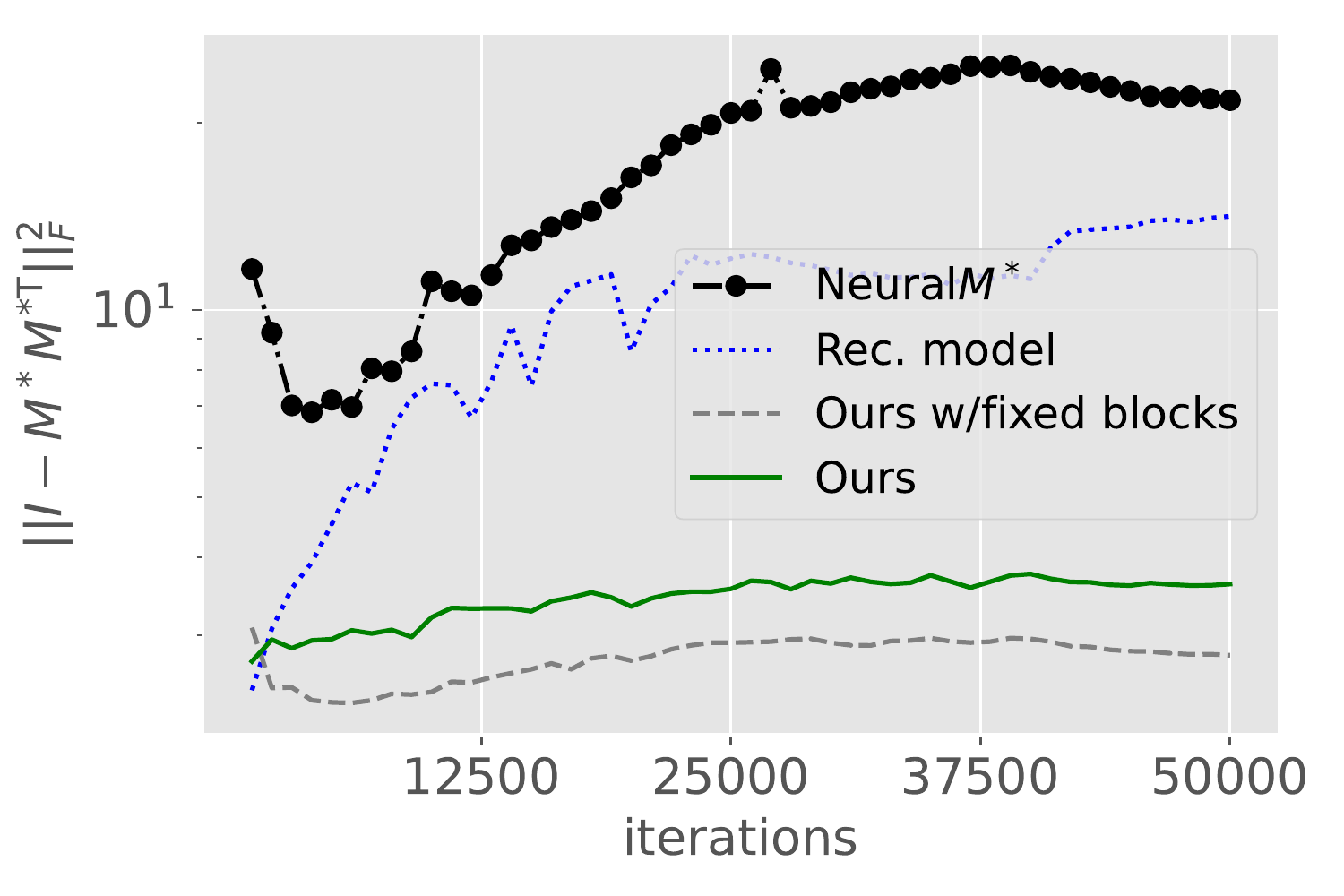}
    }
    \caption{The transition of $\|I-M^*M^{*\rm T}\|_F^2 $ during the training. We can observe that, for our method, the learned representation evolves in such a way that the estimated transition $M^*$ tends to become orthogonal.\label{fig:orgth}}
\end{figure}

\section{Generated examples}\label{sec:gen}
Figures~\ref{fig:cmpr_gen_mnist}-\ref{fig:gen_shapenet} show the seqeuences generated by our method and its variants for each dataset. The visualization follows the same protocol as in Figure~\ref{fig:gen_images_mnist}.

\newpage

\begin{figure}[H]
\begin{minipage}{0.5\textwidth}
    \centering
    \includegraphics[scale=0.175]{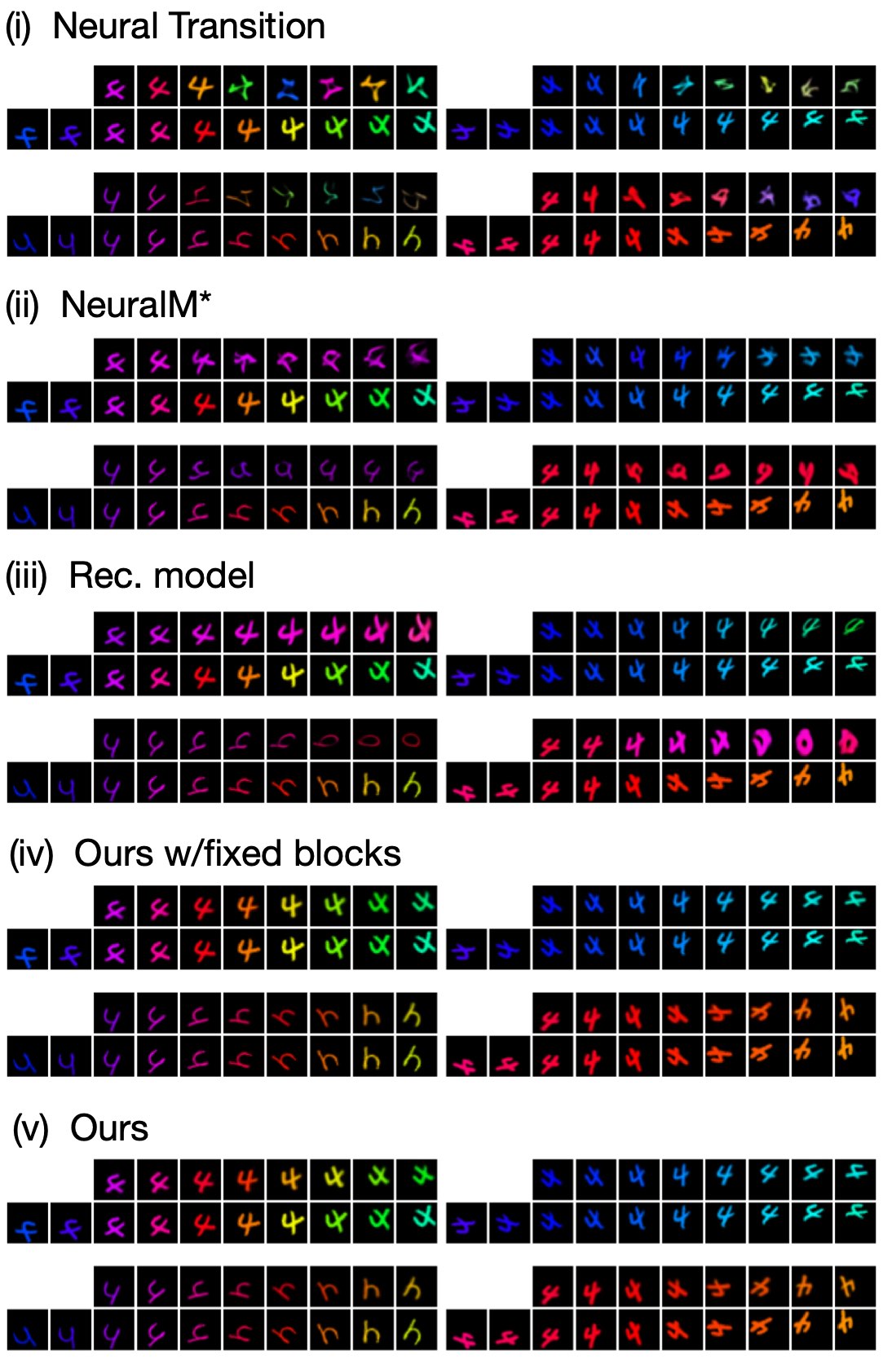}
    \caption{Sequential MNIST\label{fig:cmpr_gen_mnist}}
\end{minipage}
\begin{minipage}{0.5\textwidth}
    \centering
    \includegraphics[scale=0.175]{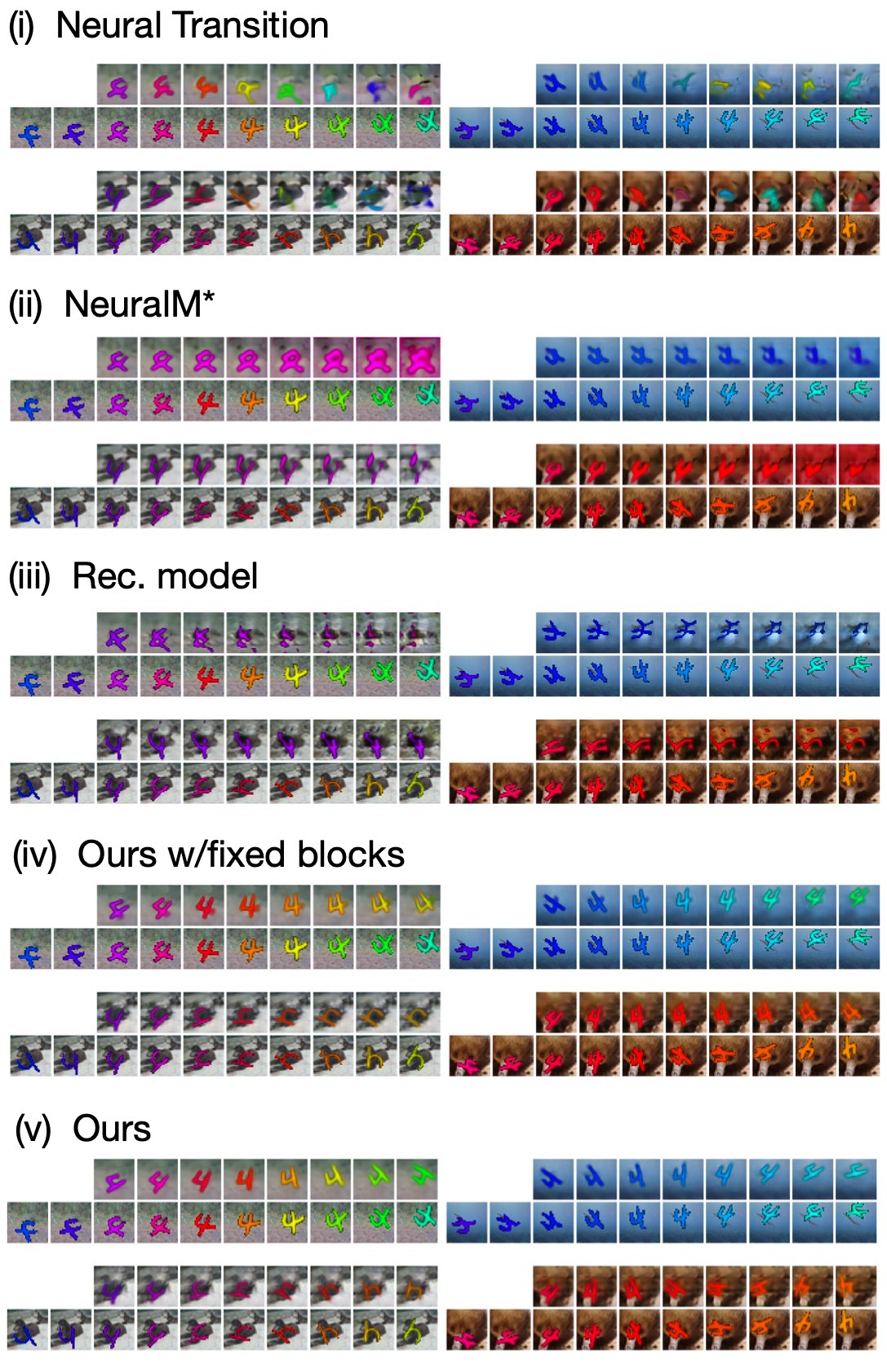}
    \caption{Seq. MNIST-bg~(w/only digit 4)\label{fig:cmpr_gen_mnist_bg_4}}
\end{minipage}  

\begin{minipage}{0.5\textwidth}
    \centering
    \includegraphics[scale=0.175]{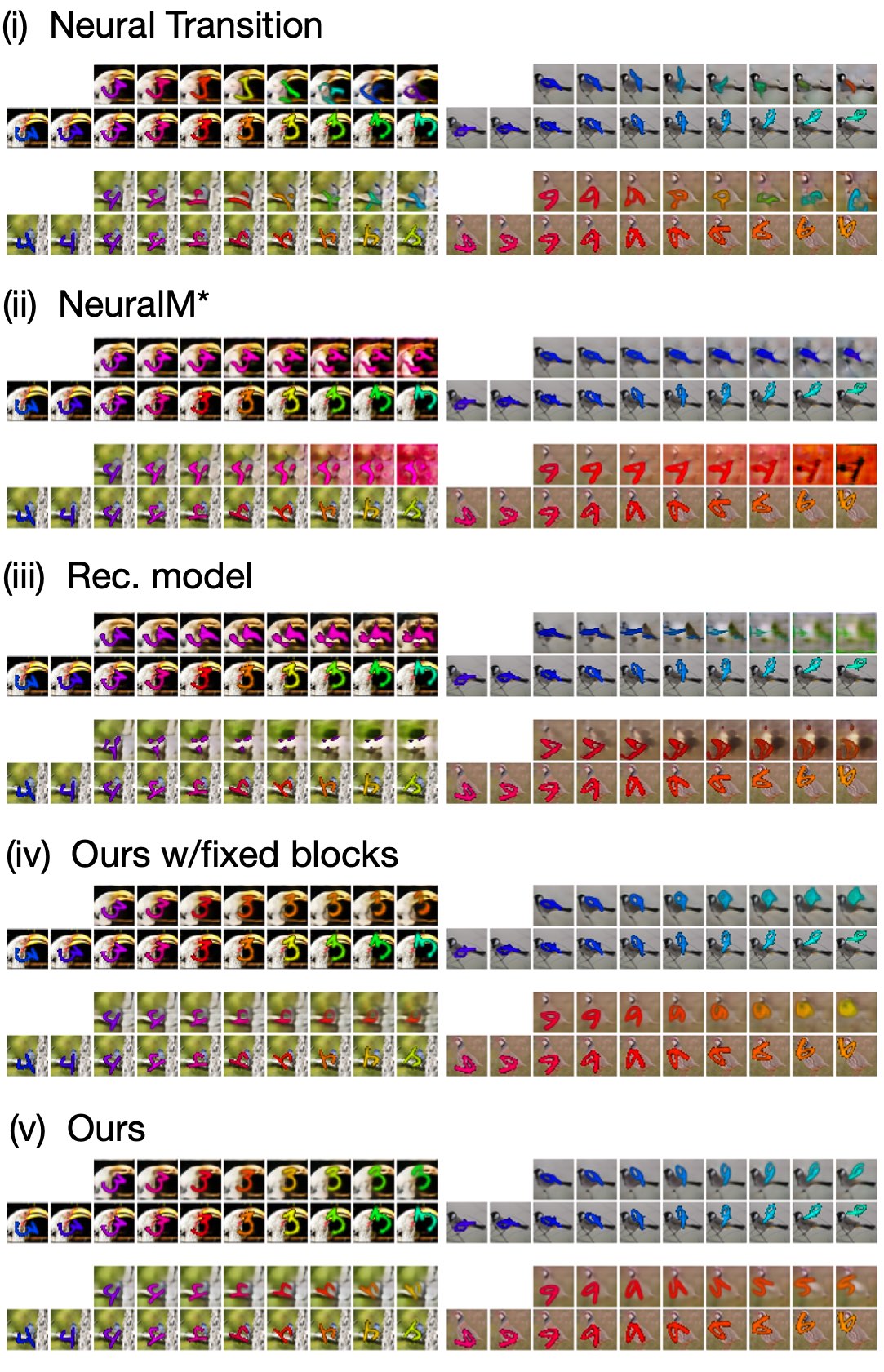}
    \caption{Seq. MNIST-bg~(w/all digits)\label{fig:cmpr_gen_mnist_bg_full}}
\end{minipage}
\begin{minipage}{0.5\textwidth}
    \centering
    \includegraphics[scale=0.175]{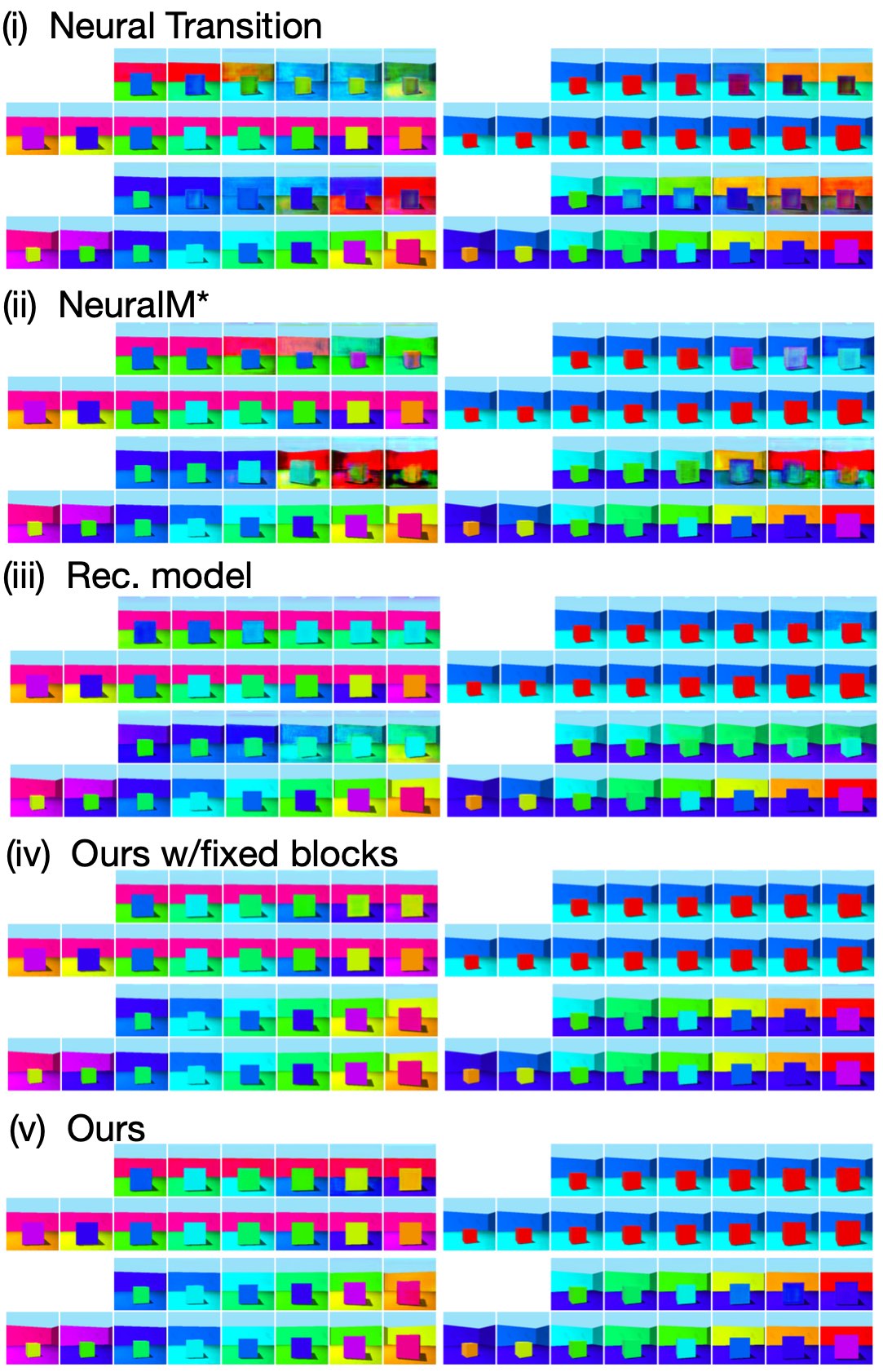}
    \caption{3DShapes\label{fig:cmpr_gen_3dshapes}}
\end{minipage}
\end{figure}

\begin{figure}[H]

\begin{minipage}{0.45\textwidth}
    \includegraphics[scale=0.175]{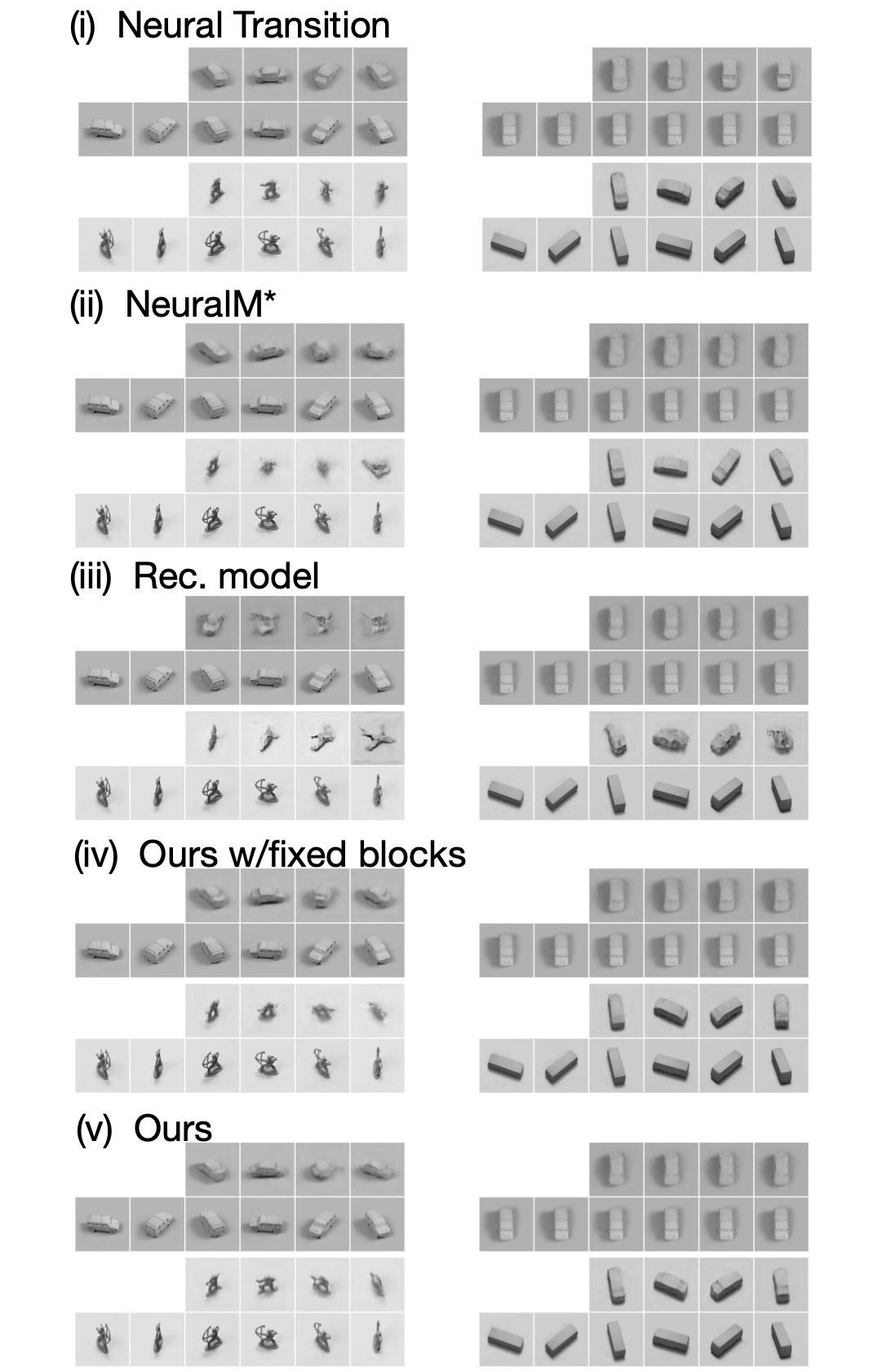}
    \caption{SmallNORB \label{fig:cmpr_gen_small_norb}}
\end{minipage}
\begin{minipage}{0.55\textwidth}
    \centering
    \includegraphics[scale=0.13]{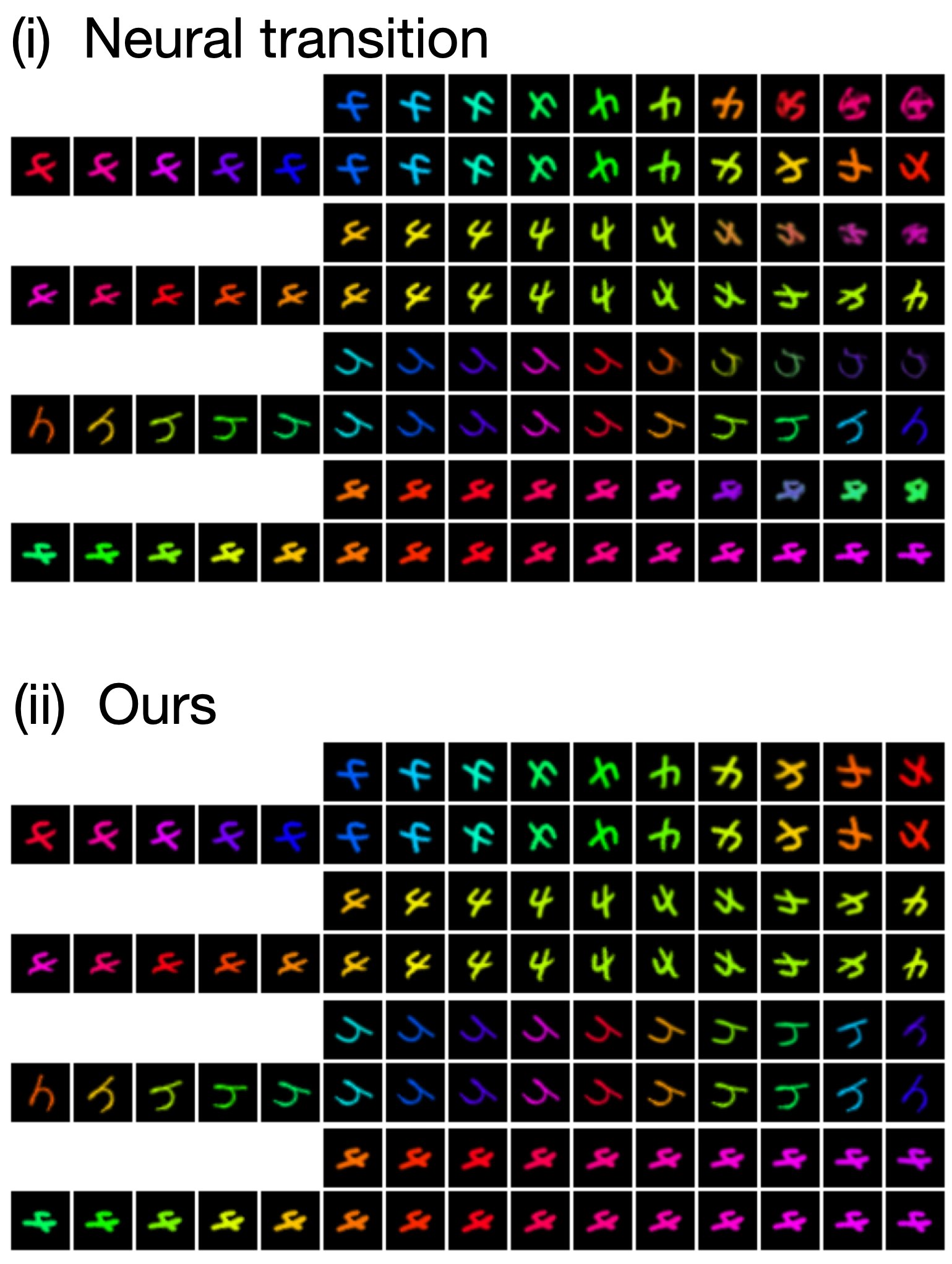}
    \caption{Accelerated Sequential MNIST. Both models were trained with $\Tc=5$ and $\Tp=5$.
    For the training procedure on this experiment, please see Section \ref{sec:accl}.
    \label{fig:cmpr_gen_mnist_accl}
    }
\end{minipage}  
\end{figure}

\begin{figure}[H]
    \centering
    \includegraphics[scale=0.15]{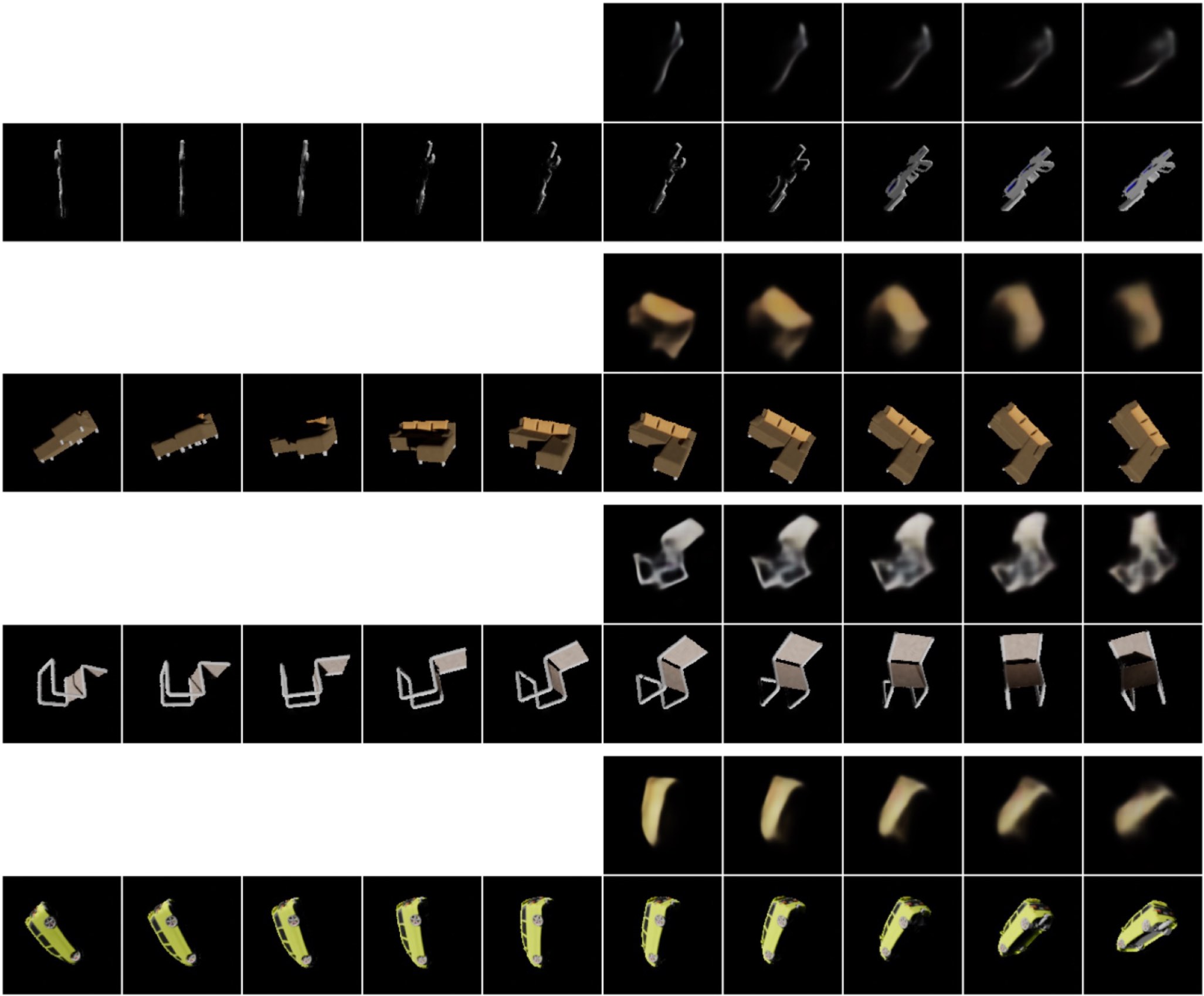}
    \caption{
    Sequential ShapeNet generated by the proposed method. The model was trained with $\Tc=5$ and $\Tp=5$. 
    Our method cannot make good predictions on this dataset.
    Note that, unlike other datasets we studied on this paper, the transition in Sequential ShapeNet is not necessarily invertible,
    because some parts of a $3D$ object are often not visible in the 2D rendering.
    }
    \label{fig:gen_shapenet}
\end{figure}

\begin{figure}[H]
    \centering
    \includegraphics[scale=0.25]{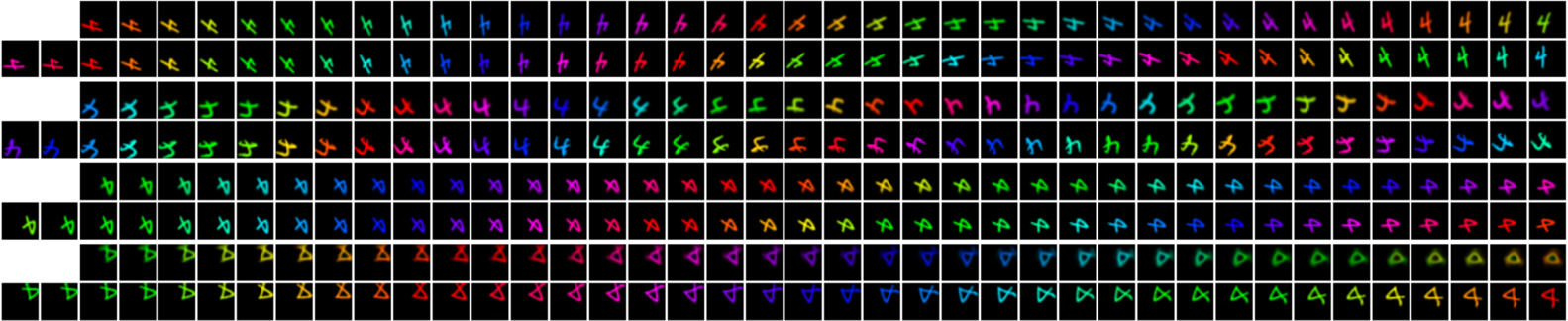}
    \caption{Extrapolation results for longer future horizon~($\tp$ = 1,...,38). The visualization follows the same protocol as in  Figure~\ref{fig:gen_mnist}. }
    \label{fig:gen_mnist_40}
\end{figure}

\section{Algorithm}\label{sec:algo}

We provide the algorithmic description of our method. 
Definitions of all symbols in the table are the same as in the main sections. 
\begin{algorithm}
\caption{Calculate the loss over $\bPhi$ and $\bPsi$. \label{alg:overall}}
Input: Given an encoder $\bPhi$, a decoder $\bPsi$ and a sequence of observations $\bs=[s_1,\dots,s_{\Tc},s_{\Tc+1},\dots, s_{T}]$.
\begin{tight_enumerate}
\item Encode the observations into latent variables $H_t=\Phi(s_t)$ for $s_1,...,s_{T_c}$. 
\item Estimate the transition matrix by solving linear problem: $M^* = \bfH_{+1}\bfH_{+0}^{\dagger}$ where $\bfH_{+0}$ and $\bfH_{+1}$ are the horizontal concatenation $[\bfH_1;\bfH_2;...,;\bfH_{\Tc-1}]$ and $[\bfH_2;\bfH_3;,...,;\bfH_{\Tc}]$, respectively. 
\item Predict the future sequence by : $\tilde{s}_{t} = \Psi((\bfM^*)^{t-\Tc} H_{\Tc})$ for $t = \Tc+1,...,T$.
\item Calculate the loss for the sequence $\bs$: $ \sum_{t=\Tc+1}^{T} \|\tilde{s}_{t} - s_t\|^2_2$
\end{tight_enumerate}
\end{algorithm}

\section{Experimental settings}\label{sec:exp_settings}

\subsection{Ablation studies}\label{sec:detail_ablation}

As ablations, we tested several variants of our method:  \textbf{fixed 2x2 blocks}, \textbf{Neural$M^*$},   \textbf{Reconstruction model} (abbreviated as Rec. Model), and \textbf{Neural transition}. 
We describe each one of them below.
\begin{itemize}
\item {\textit{\bf Fixed 2x2 blocks}}:
For this model, we separated the latent tensor $\bPhi(s)\in \mathbb{R}^{16\times 256}$ into 8 subtensors $\{\bPhi^{(k)}(s)\in \mathbb{R}^{2\times 256}\}_{k=1}^8$ and calculated the pseudo inverse for each $k$ to compute the transition in each $\mathbb{R}^{2\times m}$ dimensional space.
Essentially, this variant of our proposed method computes $M^*$ as a direct sum of eight $2 \times 2$  matrices. 
In the pioneer work of \cite{cohen2014learning} that endeavors to learn the symmetry in a linear system using the representation theory of commutative algebra, the authors hard-code the irreducible representations/block matrices in their model.  Our study is distinctive from many applications of representation theory and symmetry learning in that we uncover the symmetry underlying the dataset not by introducing any explicit structure, but by simply seeking to improve the prediction performance. We therefore wanted to experiment how the introduction of the hard-coded symmetry like the one in \cite{cohen2014learning} would affect the prediction performance.

\item {\textit{\bf Neural$M^*$}}:
Our method is ``meta'' in that we distinguish the internal training of $M^*$ for each sequence from the external training of the encoder $\Phi$. Put in another way, the internal optimization process of $M^*$ itself is the function of the encoder. To measure how important it is to train the encoder with such a “meta” approach, we evaluated the performance of Neural$M^*$ approach.  
To reiterate, Neural$M^*$ uses a neural network $M^*_\theta$ that directly outputs $M^*$ on the conditional sequence, and train the encoder and the decoder via 
$$  \sum_{t= T_c+1} ^{T_c+T_p} \|  \Psi(M^*_\theta(\bsc)^{t-\Tc} \Phi(s_{T_c})) - s_t \|_2^2, $$ thereby testing the training framework that is similar to our method ``minus'' the ``meta'' component.  

\item {\textit{\bf Reconstruction model (Rec. model)}}:
In our default algorithm, we train our encoder and decoder with the prediction loss $\mathcal{L}^p$ in eq.\eqref{eq:pred_loss} over the future horizon of length $\Tp-\Tc$. We therefore wanted to verify what would happen to the learned representation when we train the model with the reconstruction loss $\mathcal{L}^r$ in~eq.\eqref{eq:rec_loss} in which the model predicts the observations contained in the conditional sequence. 
Specifically, we trained $\Phi$ and $\Psi$ based on $\mathcal{L}^r$ in~\eqref{eq:rec_loss} with $T=\Tc=3$. 

\item {\textit{\bf Neural Transition}}:
One important inductive bias that we introduce in our model is that we assume the latent transition to be linear. We therefore wanted to test what happens to the results of our experiment if we drop this inductive bias. 
For Neural transition, we trained a network with 1x1 1D-convolutions that inputs $\Phi(s)$ in the past to produce the latent tensor in the next time step; for instance, $\tilde{H}_{t+1} = {\rm 1DCNN}(\Phi(s_t), \Phi(s_{t-1}))$ when $\Tc=2$. This model can be seen as a simplified version of \cite{oord2016wavenet}. The 1DCNN was applied along the multiplicity dimension ($m$).
\end{itemize}

In testing all of these variants, we used the same pair of encoder and decoder architecture as the proposed method.

\subsection{Training details}
\begin{figure}[H]
    \subcaptionbox{Transformations \label{fig:transformations}}[.3\linewidth]{\includegraphics[scale=0.4]{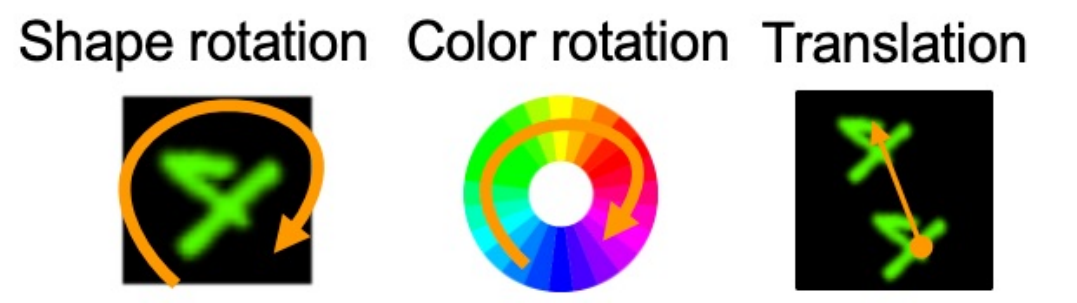}}%
    \subcaptionbox{An example sequence\label{fig:example_seq}}[.7
    \linewidth]{\includegraphics[scale=0.3]{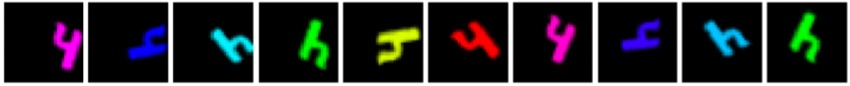}}%
    \caption{\textit{Sequential MNIST} dataset. 
    The transition in each sequence was produced by combining 
    three families of actions:  shape rotation, color rotation and translation.
    }
    \label{fig:seq_mnist}
\end{figure}

For the model optimization in every experiment, we used ADAM~\cite{kingma2014adam}.
The number of iterations for the parameter updates were $50,000$ on Sequential MNIST, 3Dshapes, and SmallNORB.
The number of iterations was $100,000$ on Sequential MNIST-bg (with only digit 4 class) and accelerated Sequential MNIST.
For MNIST-bg (with all digits) and Sequential ShapeNet, the number of iterations was $200,000$.
 We set the initial learning rate of ADAM optimizer to $0.0003$ and decayed it to $0.0001$ after a certain number of iterations.
 For Sequential MNIST, 3Dshapes, and SmallNORB, we began the decay at $40,000$-th iteration.
 For Sequential MNIST-bg (with digit 4 only) and accelerated Sequential MNIST, we began the decay at $80,000$-th iteration.
 For Sequential MNIST-bg (with all digits) and Sequential ShapeNet, we began the decay at $160,000$-th iteration.
 

The batchsize was set to 32 for all experiments.
We conducted all experiments on NVIDIA A100 GPUs. Training our proposed model takes approximately one hour per $50,000$ iterations on a single A100 GPU. Total amount of time to reproduce the full results in our experiments is approximately 12 days on a single A100 GPU.

We found that the choice of the latent dimension ($a \times m$)  does not make significant difference on the results as long as they are not too small (For example, if $G$ is a torus group consisting of $n$ commuting axis,  $a$ must be no less than $2n$ when all the observations are real-valued; otherwise the model will underfit the datasets. 
Also, we chose $m$ to be larger than $a$ so that $\Phi(x)$ becomes full rank almost surely.
This allows us to solve $M^*$ from $M^* \Phi(s_1) = \Phi(s_2)$ (We can compute  $M^*$  with $T_c = 2$.)  Choosing $m>a$ also plays a role in the theory (Section \ref{sec:proof}).

As for the Neural$M^*$ and Neural transition models, we also optimized the invertibility loss: $\sum_{t=1}^{\Tc} \|\Psi(\Phi(s_t)) - s_t\|_2^2$ in addition to $\mathcal{L}^p$. Adding this loss to the original objective yielded better results for these models in terms of prediction error and equivariance error for all experiments. 




\paragraph{SimCLR and CPC settings in the downstream task experiments in Figure~\ref{fig:actpred} and~\ref{fig:cls}}
To evaluate our method as a representation learning method, we compared our method against SimCLR~\cite{chen2020simple} and CPC~\cite{van2018representation, henaff2020data}.
For SimCLR, we treated any pair of observations in the same sequence as a positive pair, and any pair of observations in different sequences as a negative pair. 
We used the same encoder architecture as in our baseline experiment for both SimCLR and CPC.
For the projection head of SimCLR, however, we used the same architecture as in the original paper. 
For the auto regressive network of latent representation in CPC, we used the same architecture as in Neural transition (see Section \ref{sec:experiments}). 
The latent dimension was set to 512 for both models. 
We experimented with larger and smaller dimensions as well, but however large the difference, altering the dimensions did not result in significant improvements in terms of the representation quality evaluated in the experimental sections. 
The temperature parameters for the logit output were searched in the range of [1e-3, 1e-2, 1e-1, 1.0, 10.0]. 
Because SimCLR is not built for the sequential dataset, it is not expected to perform too well in terms of regression performance. 
We however evaluated these models as minimum performance baselines.

\subsection{Additional details of datasets}

Our training-test split was the same as the split in the original dataset. 
Therefore the train-test split of Sequential MNIST/MNIST-bg was the same as that of MNIST, and the split of the SmallNORB dataset we used was the same as that of the original SmallNORB.
Meanwhile, the 3DShapes dataset does not have train-test split, so we conducted the training and the test evaluation on the same dataset for the sequential 3DShapes experiment. 
We also used only cubic shape examples on the 3DShapes experiments. 
For Sequential ShapeNet,  90\% of objects in the original ShapeNetCore assets were used for the training and the rest were used for the evaluation.
The input size of each example in a given sequence was $3\times32\times32$ for Sequential MNIST/MNIST-bg, $3\times64\times 64$ for 3DShapes, $1\times 96\times 96$ for SmallNORB, and $3\times 128\times 128$ for Sequential ShapeNet.

To generate Sequential ShapeNet, we used Kubric~\cite{greff2021kubric} to render the objects in ShapeNet~\cite{shapenet2015} datasets. 
For each sequence, we sampled one object from ShapeNetCore assets, and used 3D rotation to define the transition. 
The angle of 3D rotation in each axis(xyz) was sampled from the uniform distribution over the interval $[0, \pi/4)$.

\subsection{Network architecture}

We used ResNet-based encoder and decoder\cite{he2016deep}. We used ReLU function~\cite{nair2010rectified, glorot2011deep, maas2013rectifier} for each activation function and group normalization~\cite{wu2018group} for the normalization layer. We used weight standarization~\cite{weightstandardization} for all of filters in each convolutional network. 
Also, we used trainable positional embedding in each block of the decoder,  which was initialized to the 2D version of sinusoidal positional embeddings~\cite{wang2021translating}.
We provide the details of the architecture in Table~\ref{tab:arch} and Figure~\ref{fig:resblock}.
 
For the Neural$M^*$ method, we used the same model in the table~\ref{tab:enc_arch} except the input channel of the network was set to $6$ (and $2$ for SmallNORB) because this method uses a pair of images ($s_1$, $s_2$) as an input.

For the Neural transition model, we used a network with 1x1 1D convolutions to map $[H_t, \dots, H_{t+t'}]$ to $H_{t+t'+1}$. The network architecture is the 1x1 1D convolutional version of the table~\ref{tab:enc_arch} without downsampling. The number of ResBlocks was set to two. 
We also replaced all of the group normalization layers with layer normalization~\cite{ba2016layer}.
\newpage

\begin{table}[t]
    \centering
    \subcaptionbox{Encoder architecture \label{tab:enc_arch}}[.70\linewidth]{
    \centering
    \begin{tabular}{cccc}
    \toprule
    & \#channels& \multirow{2}{*}{ Resampling} & Spatial \\
    &or \#dims&  & Resolution \\
    \midrule
         3x3 2DConv & 32*k & - & H$\times$W\\
         ResBlock& 64*k & Down & (H/2)$\times$(W/2) \\
         ResBlock& 128*k & Down &(H/4)$\times$(W/4)\\
         ResBlock& 256*k & Down &(H/8)$\times$(W/8)\\
         GroupNorm &256*k & -&(H/8)$\times$(W/8)\\
         ReLU &256*k & -&(H/8)$\times$(W/8)\\
         Flatten &256*k*(H/8)*(W/8) & -& -\\
         Linear & 16*256 &- &-\\
    \bottomrule
    \end{tabular}
    }
    \subcaptionbox{Decoder architecture \label{tab:dec_arch}}[.70\linewidth]{
    \centering
    \begin{tabular}{cccc}
    \toprule
    & \#channels& \multirow{2}{*}{ Resampling} & Spatial \\
    &or \#dims&  & Resolution \\
    \midrule
         Linear & 256*k*(H/8)*(W/8) & - & -\\
         Reshape & 256*k & - & (H/8)$\times$(W/8) \\
         ResBlock& 128*k & Up & (H/4)$\times$(W/2) \\
         ResBlock& 64*k & Up &(H/2)$\times$(W/4)\\
         ResBlock& 32*k & Up &H$\times$W\\
         GroupNorm &32*k & -&H$\times$W\\
         ReLU &32*k & -&H$\times$W\\
         3x3 2DConv & 3 (1 for SmallNORB)&- &H$\times$W\\
    \bottomrule
    \end{tabular}
    }
 \caption{
  The detail of the encoder and decoder architecture used in our experiments. The columns of `\#channels or \#dims' and `Spatial resolution' respectively represent the channels/dimensions and the spatial resolution at the end of each corresponding module.
  `Resampling' column represents whether the corresponding layer performs upsampling~(Up), downsampling~(Down) or none of them~(-). 
  Please see Figure~\ref{fig:resblock} for the detail of the ResBlock architecture. 
  The value $k$ in the table was set to 1 for 3DShapes, SmallNORB and Sequential ShapeNet. 
  The value $k$ was set to 2 for Sequential MNIST and accelerated Sequential MNIST, and 4 for Sequential MNIST-bg. 
  For SmallNORB, we added one more downsampling ResBlock after the third ResBlock in the encoder and one more upsampling ResBlock before the first ResBlock in the decoder. 
  For Sequential ShapeNet, we added two more downsampling ResBlock in the encoder and two more upsampling Resblock in the decoder. 
 \label{tab:arch}}
\end{table}

\begin{figure}[H]
    \centering
    \includegraphics[scale=0.4]{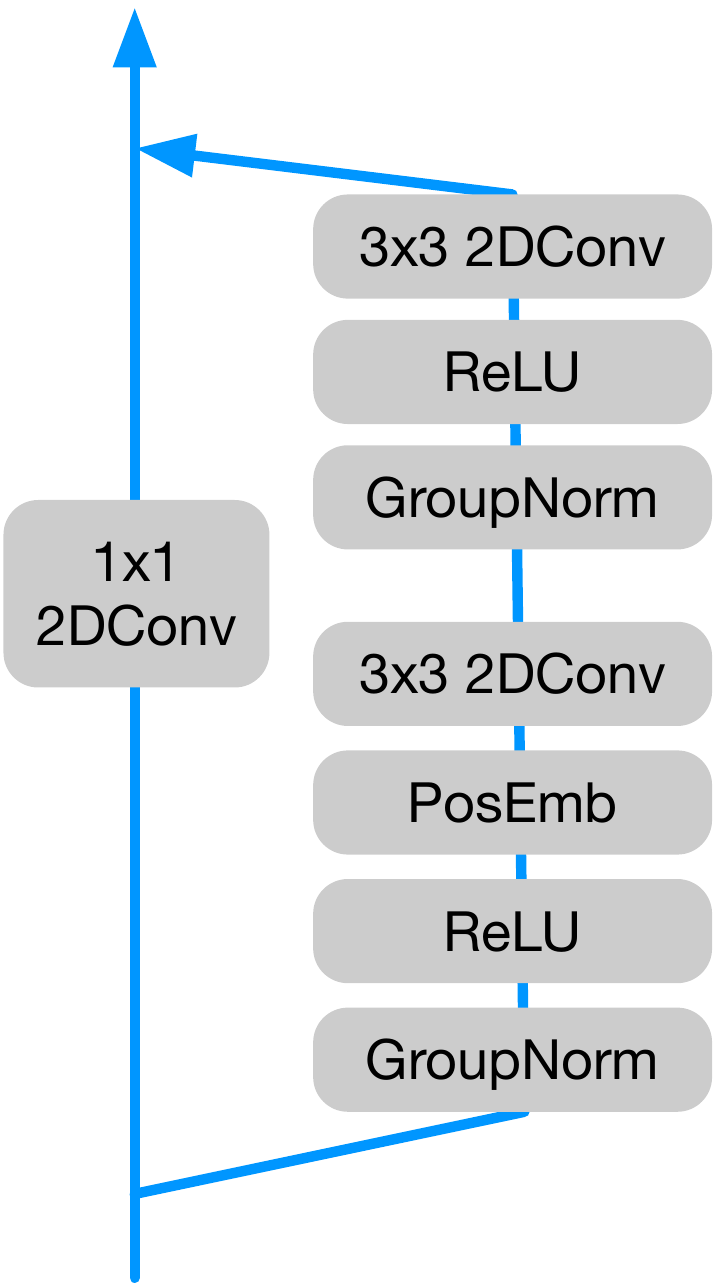}
    \caption{
        ResBlock architecture in the encoder and decoder. `PosEmb' stands for the positional embedding layer which concatenates the learned positional embedding to its input. 
    The embedding dimension was set to $32\times H\times W$. 
    The PosEmb layer was used only in decoder's resblock. 
    For the encoder, we performed downsampling (mean average
pooling) after the second convolution layer. 
For the decoder, we performed upsampling before the first convolution.
Also, we added a downsampling layer (mean avegrage pooling) after the 1x1 convolution.
For upsampling, we added a layer of nearest-neighbor upsampling before the 1x1 convolution.
    The number of groups for the group normalization layer was set to 32.
    \label{fig:resblock}}

\end{figure}

\section{Simultaneous Block-diagonalization}\label{sec:full_blockdiag}
 To find $U$ that simultaneously block-diagonalizes all $\bfM^*(\bs|\Phi)$, we optimized $U$ based on the objective function that measures the block-ness of $\irrepM^{*}(\bs):= \bfU \bfM^{*}(\bs | \Phi) \bfU^{-1}$.   
 Our objective function is based on the fact that, if we are given an adjacency matrix $A$ of a graph, then the number of connected components in the graph can be identified by looking at the rank of graph Laplacian:
 \begin{align}
       {\rm dim}({\rm Ker}(\Delta A)) = \#{\rm of~blocks~in}~A \label{eq:blockness}
 \end{align}
 where $\Delta$ is the graph Laplacian operator on $A$. 
 To relate our $\irrepM^{*}:= \bfU \bfM^{*} \bfU^{-1}$ to a graph, we see it as a bipartite graph and calculate the adjacency matrix by:
 \begin{align}
     A(\irrepM^*(\bs)) := abs(\irrepM^*(\bs)) abs(\irrepM^*(\bs) )^{\rm T} 
 \end{align}
 where $abs(\irrepM^*(\bs))$ represents the element-wise absolute value of $\irrepM^*$:~$abs(\bfM^*(\bs))_{ij} = 
 |\bfM^*_{ij}|$.
 To optimize Eq.\eqref{eq:blockness} with respect to the change of basis $U$ by continuous optimization, we used the lasso version of Eq.\eqref{eq:blockness}:
 \begin{align}
     \mathcal{L}_{\rm bd}\left(\irrepM^*(\bs)\right) := \left\|\Delta \left( A\left(\irrepM^*(\bs)\right)\right)\right\|_{\rm trace} = \sum_{d=1}^{a} \sigma_d \left( A\left(\irrepM^*(\bs)\right)\right) 
 \end{align}
 where $\sigma_i(\Delta A)$ is $i$-th singular value of $\Delta A$.  We used the symmetrically normalized version of the graph Laplacian: $\Delta A = I - D^{-1/2}AD^{-1/2}$ where $D$ is the degree matrix of $A$.
 Summing this over all $\bs^{(i)}$ in the dataset, we obtain:
 $\bar{\mathcal{L}}_{\rm bd}:=\frac{1}{N}\sum_{i=1}^{N} \mathcal{L}_{\rm bd}\left(\irrepM^{*}(\bs^{(i)})\right)
   \label{eq:bd_loss_all}$.
 We search for $U$ that simultaneously block-diagonalizes all 
 $\irrepM^{*}(\bs^{(i)})$ by minimizing $\bar{\mathcal{L}}_{\rm bd}$ w.r.t. $U$.

\section{Formal statements and the proofs of the theory section}\label{sec:proof}

We begin by summarizing the notations to be used in our formal statements. 
We use $\mathcal{X}$ to denote the space of all observations at a single time step, and $\Phi : \mathcal{X} \to \mathcal{H}$ to denote the encoder from $\mathcal{X}$ to the latent space $\mathcal{H}$.
If $\bs = [s_t \in  \mathcal{X};  t =1,...,T ]$ is one instance of video-sequence to be used in our training, we assume that, for each $\bs$, there is an operator $g : \mathcal{X} \to \mathcal{X}$ such that $g s_{t} = s_{t+1}$ for each $t$. As such,  each $\bs$ is characterized by a pair of initial state $s_1 \in \mathcal{X}$ and $g \in \mathcal{G}$, where $\mathcal{G}$ is the set of all operators considered. 
Thus, we would use $\bs(s_1, g)$ to denote a specific sequence.

Now, given a fixed encoder $\Phi : \mathcal{X}  \to \mathbb{R}^{a \times m}$, our training process computes the transition matrix $M$ independently for each instance of $\bs(s_1,g)=[s_t]_{t=1}^T=[g^{t-1} s_1]_{t=1}^T$. 
In particular, we compute 
\begin{align}
     M^* (g,s_1 | \Phi) = \argmin_M \frac{1}{T} \sum_{t=1}^{T-1} \| \Phi(s_{t+1}) - M\Phi(s_{t}) \|_F^2. 
\end{align}
In the theory developed here, we investigate the property of the optimal $\Phi$  when  $T=\infty$ so that $M^* (g, s_1 | \Phi)$ achieves 
\[
 \|\Phi(s_{t+1}) - M^*(g, s_1 | \Phi)\Phi(s_{t})  \|^2 = 0
\]
 or equivalently, 
$$
M^* (g, s_1| \Phi ) \Phi (g^{t-1}\circ s_1) = \Phi(g^{t} \circ s_1)
$$
for all $t\in \mathbb{N}$, $g\in\mathcal{G}$, and $s_1\in\mathcal{X}$. 
We would like to know whether $M^*(g, x| \Phi)$ has no dependency on $x$  so that $M_g = M^*(g|\Phi)$ defines an equivariance relation and hence a group representation .

We begin tackling this problem by first investigating $M^*(g, x| \Phi)$ within an orbit $\mathcal{G}\circ x:=\{h\circ x\in \mathcal{X}\mid h\in \mathcal{G}\}$.
That is, we check if we can say  $M^*(g,x|\Phi) = M^*(g,h\circ x|\Phi)$ for any $g, h\in\mathcal{G}$ and $x\in\mathcal{X}$.   We call this property {\em intra-orbital homogeneity}. 

We assume that $\mathcal{G}$ is a compact commutative Lie group in the following result.
\begin{proposition}[Intra-orbital homogeneity]\label{prop:intra}
Suppose that $\mathcal{G}$ is a compact commutative Lie group, $\Phi(x) \in \mathbb{R}^{a \times m}$ has rank $a$, and $M(g,x) \in \mathbb{R}^{a \times a}$ satisfies
\begin{equation}\label{eq:matrix_equiv}
    M(g, x)\Phi(g^k \circ  x) = \Phi(g^{k+1} \circ x )
\end{equation}
for all $k\in\mathbb{N}\cup\{0\}, x\in \mathcal{X}$ and $g \in \mathcal{G}$.   
If $M(g, x)$ is continuous with respect to $g$ and is uniformly continuous with respect to  $x$, then 
$$M(g, x ) = M(g, h \circ x )$$ 
for all $h \in \mathcal{G}$. \label{thm:intra_appendix}
\end{proposition}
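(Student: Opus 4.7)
The plan is to first pin down the intra-orbital value of $M(g,\cdot)$ along the discrete sub-orbit $\{g^k \circ x : k \geq 0\}$, then enlarge this agreement to the closure of $\langle g \rangle$ via a closed-subgroup argument, and finally use density of topological generators together with continuity in $g$ to cover the full orbit.

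First I would exploit the rank hypothesis directly: since $\Phi(y)\in\mathbb{R}^{a\times m}$ has rank $a$, it admits a right inverse $\Phi(y)^{+}$ with $\Phi(y)\Phi(y)^{+}=I_a$. Applying the defining relation at iteration $k$ gives $M(g,x)\Phi(g^k\circ x)=\Phi(g^{k+1}\circ x)$, while applying it at iteration $0$ with base point $g^k\circ x$ yields $M(g,g^k\circ x)\Phi(g^k\circ x)=\Phi(g^{k+1}\circ x)$. Right-multiplying both by $\Phi(g^k\circ x)^{+}$ gives $M(g,x)=\Phi(g^{k+1}\circ x)\Phi(g^k\circ x)^{+}=M(g,g^k\circ x)$ for every $k\geq 0$.

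Next, define the stabilizer $H_g := \{h\in\mathcal{G} : M(g,h\circ x)=M(g,x)\text{ for all }x\in\mathcal{X}\}$. Using commutativity of $\mathcal{G}$ to rewrite orbit points $h_1 h_2 \circ x = h_1 \circ (h_2\circ x)$, one checks that $H_g$ is a subgroup. It is closed in $\mathcal{G}$ by the uniform continuity of $M(g,\cdot)$: if $h_n\to h$, then $h_n\circ x\to h\circ x$ by continuity of the $\mathcal{G}$-action, hence $M(g,h_n\circ x)\to M(g,h\circ x)$. Step one shows $g\in H_g$, so $\overline{\langle g\rangle}\subseteq H_g$. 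For a torus $\mathcal{G}\cong T^n$, Kronecker's theorem says the set $D$ of topological generators (those $g$ with $\overline{\langle g\rangle}=\mathcal{G}$) is dense; for any $g\in D$ we thus obtain $H_g=\mathcal{G}$, which is the desired conclusion for such $g$.

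Finally, for arbitrary $g\in\mathcal{G}$, pick $g_n\in D$ with $g_n\to g$. By the assumed continuity of $M$ in its first argument, $M(g_n,x)\to M(g,x)$ and $M(g_n,h\circ x)\to M(g,h\circ x)$; since $M(g_n,h\circ x)=M(g_n,x)$ for each $n$, the limit yields $M(g,h\circ x)=M(g,x)$, as claimed. The main obstacle is step three–four: one must know that $\mathcal{G}$ actually contains a dense set of topological generators, which is automatic in the connected case (torus) but fails for compact commutative Lie groups whose component group is non-cyclic, e.g.\ $T^n\times(\mathbb{Z}/2)^2$. In such cases an additional argument decomposing $\mathcal{G}$ into its torus part and a finite part, and invoking the defining relation separately on each finite generator, would be needed; this is where the uniform continuity in $x$ also becomes essential, since it ensures that the convergence $M(g_n,\cdot)\to M(g,\cdot)$ behaves well uniformly across the orbit.
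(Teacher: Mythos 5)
Your proof is essentially correct (modulo the same connectedness caveat that affects the paper's own argument), but it takes a genuinely different route. The paper first establishes a cocycle lemma --- $M(gh,x)=M(g,h\circ x)M(h,x)$, $M(g^{\ell},x)=M(g,x)^{\ell}$ and $M(g,g^{\ell}\circ x)=M(g,x)$ for all $\ell\in\mathbb{Z}$ (your step one is the $\ell\ge 0$ part of the last identity, proved the same way via a right inverse of $\Phi$) --- and then, for an \emph{arbitrary} target $h$, writes $h=\exp(\eta)$ using surjectivity of $\exp$ on a connected commutative group, forms the roots $h^{1/n}=\exp(\eta/n)\to e$, and exploits the commutativity identity $(gh^{1/n})^{n}g^{-n}=h$ so that intra-orbit constancy of $M(gh^{1/n},\cdot)$ links the base points $g^{-n}\circ x$ and $h\circ x$; continuity in the group variable, uniform over the base point, then closes the $2\epsilon$ estimate. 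You instead package intra-orbit constancy into the stabilizer subgroup $H_g$, observe $\overline{\langle g\rangle}\subseteq H_g$, invoke Kronecker density of topological generators of the torus to get $H_g=\mathcal{G}$ for a dense set of $g$, and finish by a limit in $g$. The trade-offs: your argument needs only pointwise continuity in $g$ (rather than continuity in $g$ uniform over $x$, which is how the paper actually deploys its ``uniform continuity'' hypothesis), but it additionally requires continuity of the $\mathcal{G}$-action on $\mathcal{X}$ to make $H_g$ closed, an assumption the paper's proof never touches; it also relies on monotheticity, which is special to abelian (indeed connected abelian) groups, whereas the paper's $h^{1/n}$ trick would survive in some non-monothetic settings. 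Your observation that density of topological generators fails for disconnected $\mathcal{G}$ is accurate, and it is worth noting that the paper's proof has exactly the same hidden hypothesis: surjectivity of $\exp$ also requires $\mathcal{G}$ to be connected, even though the stated proposition only says ``compact commutative Lie group.''
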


Before going into the proof of this proposition, we show the following lemma about the basic properties of  $M(g,x)$ that satisfies \eqref{eq:matrix_equiv}.
\begin{lemma}\label{lma:M}
Assume that $\Phi(x)\in\mathbb{R}^{a\times m}$ has rank $a$, and that $a\times a$-matrix $M(g,x)$ satisfies \eqref{eq:matrix_equiv}
for all $k\in\mathbb{N}\cup\{0\}, x\in \mathcal{X}$ and $g \in \mathcal{G}$.  Then, 
\begin{itemize}[topsep=0pt, partopsep=0pt, itemsep=0pt, parsep=0pt, leftmargin=20pt]
\item[{\rm (i)}] 
$\;M(gh,x) = M(g,h\circ x)M(h,x)\;$ for any $g,h\in\mathcal{G}$ and $x\in\mathcal{X}$. 
\item[{\rm (ii)}] 
$
\; M(g^\ell, x) = M(g,x)^\ell \;
$ for any $\ell \in \mathbb{Z}$, $g\in \mathcal{G}$, and $x\in\mathcal{X}$.
\item[{\rm (iii)}] $\; M(g, g^\ell\circ x) = M(g,x)\;$ for any $\ell \in \mathbb{Z}$, $g\in \mathcal{G}$, and $x\in\mathcal{X}$. 
\end{itemize}
\end{lemma}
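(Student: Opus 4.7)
The entire lemma rests on one observation: since $\Phi(x)\in\mathbb{R}^{a\times m}$ has full row rank $a$, it admits a right inverse $\Phi(x)^{\dagger}$ with $\Phi(x)\Phi(x)^{\dagger} = I_a$. Combining this with the $k=0$ case of \eqref{eq:matrix_equiv} yields the closed form $M(g,x) = \Phi(g\circ x)\Phi(x)^{\dagger}$, and since $\Phi(g\circ x)$ also has rank $a$, each $M(g,x)$ is automatically an invertible $a\times a$ matrix. This right-cancellation trick, applied whenever one may post-multiply by some $\Phi(\cdot)^{\dagger}$, will drive every step.

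My plan is to prove the three properties in the order (iii), (i), (ii), rather than as listed. For (iii) in the case $\ell\geq 0$: relation \eqref{eq:matrix_equiv} at $k=\ell$ reads $M(g,x)\Phi(g^\ell\circ x) = \Phi(g^{\ell+1}\circ x)$, while the same relation applied at the point $g^\ell\circ x$ with $k=0$ reads $M(g,g^\ell\circ x)\Phi(g^\ell\circ x) = \Phi(g^{\ell+1}\circ x)$. Post-multiplying both by $\Phi(g^\ell\circ x)^{\dagger}$ gives $M(g,x) = M(g,g^\ell\circ x)$. For (i), the $k=0$ instance written twice yields $M(gh,x)\Phi(x) = \Phi(gh\circ x) = M(g,h\circ x)\Phi(h\circ x) = M(g,h\circ x)M(h,x)\Phi(x)$, after which $\Phi(x)^{\dagger}$ removes the common factor.

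Once (i) and the nonnegative case of (iii) are in hand, I extend (iii) to $\ell<0$ by the substitution $y = g^\ell\circ x$: the nonnegative case applied at $y$ with exponent $-\ell$ reads $M(g, g^{-\ell}\circ y) = M(g,y)$, which is exactly $M(g,x) = M(g,g^\ell\circ x)$. Next I observe $M(e,x)=I$ (immediate from $M(e,x)\Phi(x)=\Phi(x)$) and apply (i) with $h=g^{-1}$ to obtain $I = M(g,g^{-1}\circ x)M(g^{-1},x) = M(g,x)M(g^{-1},x)$, where the last equality uses (iii); combined with the a priori invertibility of $M(g,x)$, this gives $M(g^{-1},x) = M(g,x)^{-1}$.

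Finally, (ii) for $\ell\geq 0$ falls to a clean induction: $M(g^{\ell+1},x) = M(g,g^\ell\circ x)M(g^\ell,x) = M(g,x)\cdot M(g,x)^\ell$ by (i), (iii), and the inductive hypothesis. The case $\ell<0$ follows by writing $g^\ell = (g^{-1})^{-\ell}$, applying the nonnegative case with $g$ replaced by $g^{-1}$, and then substituting $M(g^{-1},x) = M(g,x)^{-1}$. I do not anticipate a real obstacle; the only subtlety worth flagging is the dependency order, since the clean identity $M(g^{-1},x)=M(g,x)^{-1}$ becomes available only after (iii) has been used to replace $M(g,g^{-1}\circ x)$ by $M(g,x)$ in the instance of (i) with $h=g^{-1}$.
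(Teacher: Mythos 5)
Your proof is correct and follows essentially the same route as the paper's: right-cancellation of the full-row-rank $\Phi$, the cocycle identity (i), orbit-invariance (iii) obtained by comparing two expressions for $\Phi(g^{\ell+1}\circ x)$, and the inverse relation $M(g^{-1},x)=M(g,x)^{-1}$ derived from (i) with $h=g^{-1}$ together with $M(e,x)=I$. The only cosmetic differences are the ordering of the three parts, your induction for (ii) in place of the paper's direct iteration of \eqref{eq:matrix_equiv}, and your direct substitution $y=g^{\ell}\circ x$ to extend (iii) to negative $\ell$ where the paper instead takes inverses of both sides.
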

\begin{proof}
First note that, from \eqref{eq:matrix_equiv} with $k=0$, we have 
\begin{equation}\label{eq:matrix_def}
    M(g,x)\Phi(x) = \Phi(g\circ x)
\end{equation}
for any $g\in\mathcal{G}$ and $x\in\mathcal{X}$.

Using \eqref{eq:matrix_def} repeatedly, we have 
$$M(gh,x)\Phi(x) = \Phi(gh\circ x)=\Phi(g\circ(h\circ x)) = M(g,h\circ x)\Phi(h\circ x) = M(g,h\circ x)M(h,x)\Phi(x).$$
The rank assumption of $\Phi$ proves (i). 

Also, \eqref{eq:matrix_def} implies $M(e,x)=id$ for the unit $e\in\mathcal{G}$. We will first prove (ii) and (iii) with $\ell > 0$.  

For (ii), note that the repeated use of \eqref{eq:matrix_equiv} necessiates $$\Phi(g^\ell\circ x) = M(g,x)\Phi(g^{\ell-1}\circ x)=\cdots=M(g,x)^\ell \Phi(x).$$   On the other hand, 
$\Phi(g^\ell\circ x)=M(g^\ell,x)\Phi(x)$.  Equating these two expression of 
$\Phi(g^\ell\circ x)$ proves (ii) with $\ell > 0$.  

Meanwhile,  from \eqref{eq:matrix_equiv} we have $\Phi(g^{\ell+1}\circ x)=M(g,x)\Phi(g^\ell \circ x)$, while 
$$\Phi(g^{\ell+1}\circ x)=\Phi(g\circ (g^\ell\circ x))=M(g,g^\ell\circ x)\Phi(g^\ell\circ x).$$
This proves the assertion (iii) for $\ell > 0$.

Now, substituting  $x\leftarrow g^{-1}\circ x$  for (iii) with $\ell =1$, 
we obtain $M(g, x)=M(g,g^{-1}\circ x)$. 
On the other hand, substituting $h\leftarrow g^{-1}$ for (i), we get $M(g,g^{-1}\circ x)M(g^{-1},x)=M(e,x)=id$. 
Thus, 
\[
M(g^{-1},x) = M(g, x)^{-1}.
\]
Replacing $g$ with $g^{-1}$ in (ii) thus leads to $M(g^{-\ell},x)=M(g^{-1},x)^\ell = M(g,x)^{-\ell}$ for any $\ell\in\mathbb{N}$. 
This shows that (ii) holds for the negative integers as well. 
Also, substituting $g\leftarrow g^{-1}$ in (iii) yields  $M(g^{-1},g^{-\ell}\circ x)=M(g^{-1},x)$.  Taking the inverse of the both sides proves the assertion (iii) for the negative integers. 
\end{proof}

\begin{proof}[Proof of Proposition \ref{prop:intra}] 
Let $h, g \in \mathcal{G}$ be given. 
Since $\mathcal{G}$ is a connected commutative Lie group, the exponential map $\exp:\mathfrak{g}\to \mathcal{G}$ is surjective, where $\mathfrak{g}$ is the Lie algebra of $\mathcal{G}$ \cite{fulton_harris}. 
Therefore, there exists some $\eta\in\mathfrak{g}$ such that $\exp(\eta) = h$. 
%
%
Then, for any $n\in\mathbb{N}$, we can define $h^\frac{1}{n} := \exp(\eta /n )$ and $h^\frac{1}{n} \to e$ as $n\to\infty$.

By the uniform continuity assumption on $M(\cdot, x)$, for any $\epsilon> 0$, 
we can choose $n$ large enough so that 
\begin{equation}
    \label{eq:ineq_M_1}
\| M(g h^\frac{1}{n} ,  g^{-n} \circ x) -  M(g, g^{-n} \circ x) \|_F < \epsilon,
\end{equation}
and 
\begin{equation}    \label{eq:ineq_M_2}
 \| M(g h^\frac{1}{n} ,  h \circ x) -  M(g, h \circ x) \|_F < \epsilon.\end{equation} 

From Lemma \ref{lma:M} (iii), we have 
 \begin{align*}
     M(g h^\frac{1}{n} ,  g^{-n} \circ x) = M(g h^\frac{1}{n} ,  (g h^\frac{1}{n})^n g^{-n} \circ x),
 \end{align*}
and thus it follows from the commutativity assumption that 
\begin{equation}
    \label{eq:eq_M_1}
M(g h^\frac{1}{n} ,  g^{-n} \circ x) =  M(g h^{\frac{1}{n}},  h \circ x).
\end{equation}
At the same time, Lemma \ref{lma:M} (iii) implies  $ M(g, g^{-n}\circ x) = M(g, x)$ so that \eqref{eq:ineq_M_1} and \eqref{eq:eq_M_1} necessiates
\begin{align}\label{eq:ineq_M_3}
    \| M(g h^\frac{1}{n},  h \circ x) -  M(g, x)\|_F < \epsilon.
\end{align}
Finally, the combination of \eqref{eq:ineq_M_2} and \eqref{eq:ineq_M_3} guarantees
\begin{multline*}
    \| M(g,  h \circ x) -  M(g, x)\|_F \\
    \leq  \| M(g, h\circ x) -  M(g h^{\frac{1}{n}},  h \circ x) \|_F + \|M(g h^{\frac{1}{n}},  h \circ x) - M(g, x)\|_F  < 2 \epsilon.
\end{multline*}
Because $\epsilon>0$ is arbitrarily small, $\|M(g,  h \circ x) -  M(g, x)\|_F = 0$ necessarily holds, and the claim follows.
\end{proof}

\begin{proposition}
Suppose that, for a compact connected Lie group $\mathcal{G}$ and connected $\mathcal{X}$,  
 $M: \mathcal{G} \times \mathcal{X} \to \mathbb{R}^{a \times a}$ in \eqref{eq:matrix_equiv} satisfies the intra-orbital homogeneity, 
and that $\Phi(x) \in \mathbb{R}^{a \times m}$ has rank $a$ for all  $x$.   
If $M(g, x)$ is continuous with respect to $x$,  
then $M(g, x)$ is similar to $M(g, x')$ for all $x, x'$; that is, there is some $P\in GL(a,\mathbb{R})$ such that $P M(g,x)P^{-1} = M(g,x')$ for all $g\in\mathcal{G}$. 
\end{proposition}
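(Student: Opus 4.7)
The proof reduces to showing that for every $x \in \mathcal{X}$, the map $\rho_x : g \mapsto M(g,x)$ is a genuine representation of $\mathcal{G}$, and that all such $\rho_x$ lie in a single equivalence class; any nonzero intertwiner then furnishes the desired $P$. The engine of the argument is character theory for compact groups combined with connectedness of $\mathcal{X}$.

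\emph{Step 1 (each $\rho_x$ is a continuous representation).} Combining Lemma~\ref{lma:M}(i) with intra-orbital homogeneity yields
\[
 M(gh,x) \;=\; M(g,h\circ x)\,M(h,x) \;=\; M(g,x)\,M(h,x),
\]
so $\rho_x$ is a group homomorphism. Continuity in $g$ follows from the explicit formula $M(g,x) = \Phi(g\circ x)\,\Phi(x)^{\dagger}$, which is available because $\Phi(x)$ has rank $a$; jointly, $M$ is then continuous on $\mathcal{G}\times\mathcal{X}$. Since $\mathcal{G}$ is compact, each $\rho_x$ is a completely reducible finite-dimensional representation.

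\emph{Step 2 (multiplicities are constant in $x$).} Let $\chi_x(g) := \operatorname{tr} M(g,x)$. For any irreducible representation $V$ of $\mathcal{G}$ with character $\chi_V$, Schur orthogonality gives the multiplicity of $V$ in the complexification $(\rho_x)_{\mathbb{C}}$ as the non-negative integer
\[
 m_V(x) \;=\; \int_{\mathcal{G}} \overline{\chi_V(g)}\,\chi_x(g)\,dg.
\]
Joint continuity of $M$ together with the compactness of $\mathcal{G}$ (and hence a uniform bound on the integrand) yields, via dominated convergence, that $m_V$ is continuous in $x$. Being integer-valued and continuous on the connected space $\mathcal{X}$, $m_V$ must be constant on $\mathcal{X}$ for every irreducible $V$. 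Therefore $(\rho_x)_{\mathbb{C}}$ and $(\rho_{x'})_{\mathbb{C}}$ have identical isotypic decompositions and are isomorphic as complex representations. For compact groups, this descends to isomorphism of $\rho_x$ and $\rho_{x'}$ as \emph{real} representations, giving the required $P\in GL(a,\mathbb{R})$ with $PM(g,x)P^{-1}=M(g,x')$ for all $g$.

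\emph{Main obstacle.} The most delicate point is the passage from equal characters to \emph{real} similarity, since character theory is cleanest over $\mathbb{C}$. One must invoke the standard fact that two continuous real representations of a compact group whose complexifications are $\mathbb{C}$-isomorphic are already $\mathbb{R}$-isomorphic; this follows from the Frobenius-Schur trichotomy (real/complex/quaternionic irreducibles), since both the irreducible type and its multiplicity can be read off from the complexification. The only other subtlety is establishing joint continuity in $(g,x)$ needed for dominated convergence in Step~2, which is handled by the pseudoinverse formula $M(g,x)=\Phi(g\circ x)\,\Phi(x)^{\dagger}$ together with continuity of $\Phi$ and of the action.
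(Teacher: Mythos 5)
Your proof is correct and follows essentially the same route as the paper's: derive the homomorphism property $M(gh,x)=M(g,x)M(h,x)$ from Lemma~\ref{lma:M}(i) combined with intra-orbital homogeneity, then note that the character-theoretic multiplicities are integer-valued continuous functions of $x$ on the connected space $\mathcal{X}$ and hence constant, so all $M_x$ decompose into the same irreducibles. The only difference is that you explicitly justify the descent from $\mathbb{C}$-isomorphism of the complexifications to real similarity (via Frobenius--Schur) and the joint continuity needed for the integral, points the paper's proof passes over silently; this is a tightening of the same argument rather than a different one.
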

\begin{proof}
From Lemma \ref{lma:M}, we have 
\[
M(gh,x) = M(g,h\circ x)M(h,x). 
\]
Combining this with intra-homogeneity 
$M(g, h\circ x) = M(g, x)$ provides
$$  M(gh, x) =M(g, x )M(h, x),$$ 
which means that, for each fixed $x$, 
$$ M_x: \mathcal{G} \to GL(a;\mathbb{R})
$$
defined by $M(g, x)= M_x (g)$ is a representation of the Lie group $\mathcal{G}$ \cite{fulton_harris}.
Now, if $\mathcal{G}$ is compact and connected as assumed in the statement, $M_x (g)$ is completely reducible, and $M_x$ is similar to a direct sum of irreducible representations. 
We then use the fact from character theory \cite{fulton_harris} that the multiplicity of any
irreducible representation $D$ in $M_x$ can be computed by 
\begin{align}
    \langle M_x | D \rangle = \int_\mathcal{G} tr(M(g, x))  \overline{tr(D(g))} \mu(dg), 
\end{align} 
where $\mu$ is a Haar measure of $\mathcal{G}$ with volume $1$, and $\overline{tr(D(g))}$ is the complex conjugate of $tr(D(g))$.
Because $\langle M_x | D \rangle$ is a multiplicity, it takes an integer value. At the same time, by its definition and the continuity of $M(\cdot,x)$, this value is continuous with respect to $x$. 
Thus, 
$\langle M_x | D \rangle$ must be constant on $\mathcal{X}$ by the connectedness of  $\mathcal{X}$.
That is, 
$$\langle M_x | D \rangle = \langle M_{x'} | D \rangle$$
for all $x, x' \in \mathcal{X}$. 
This means that, irrespective of $x$, $M(g, x)$ is similar to the direct sum of the same set of irreducible representations, and the claim follows. 
\end{proof}


\end{document}